\renewcommand*{\backrefalt}[4]{%
    \ifcase #1 \footnotesize{(Not cited.)}%
    \or        \footnotesize{(Cited on page~#2.)}%
    \else      \footnotesize{(Cited on pages~#2.)}%
    \fi}
\newtheorem{assumption}{Assumption}
\newtheorem{lemma}{Lemma}
\newtheorem{example}{Example}
\newtheorem{theorem}{Theorem}
\newtheorem{proposition}{Proposition}
\newtheorem{definition}{Definition}
\newcommand{\widgraph}[2]{\includegraphics[keepaspectratio,width=#1]{#2}}
\newcommand{\lambdan}{\lambda_n}
\newcommand{\Ocal}{\ensuremath{\mathcal{O}}}
\newcommand{\Otilde}{\widetilde{\mathcal{O}}}
\newcommand{\thetastar}{\theta^*}
\newcommand{\thetan}{\theta^n}
\newcommand{\daijn}{\Delta a^n_{ij}}
\newcommand{\dbijn}{\Delta b^n_{ij}}
\newcommand{\dsijn}{\Delta \sigma^n_{ij}}
\newcommand{\dbione}{\Delta b^n_{i1}}
\newcommand{\dsione}{\Delta \sigma^n_{i1}}
\newcommand{\ain}{a^n_i}
\newcommand{\bin}{b^n_i}
\newcommand{\sigmain}{\sigma^n_i}
\newcommand{\aj}{a^*_j}
\newcommand{\bj}{b^*_j}
\newcommand{\sj}{\sigma^*_j}
\newcommand{\apj}{a'_j}
\newcommand{\bpj}{b'_j}
\newcommand{\spj}{\sigma'_j}
\newcommand{\bGstar}{\overline{G}_*(\lambda)}
\newcommand{\Gstar}{G_*}
\newcommand{\kstar}{k_*}
\newcommand{\kzero}{k_0}
\newcommand{\kbar}{\bar{k}}
\newcommand{\lambdastar}{\lambda^*}
\newcommand{\lambdaGstar}{\lambda^*G_*}
\newcommand{\lambdaGn}{\lambda_nG_n}
\newcommand{\ones}{\mathbf{1}}
\newcommand{\parenth}[1]{\left( #1 \right)}
\newcommand{\abss}[1]{\left| #1 \right |}
\newcommand{\brj}{\Bar{r}(|\mathcal{A}_j|)}
\newcommand{\brone}{\Bar{r}(|\mathcal{A}_{1}|)}
\newcommand{\brbj}{\Bar{r}(|\mathcal{B}_j|)}
\newcommand{\brcj}{\Bar{r}(|\mathcal{C}_j|)}
\newcommand{\brcone}{\Bar{r}(|\mathcal{C}_1|)}
\newcommand{\dint}{\mathrm{d}}
\newcommand{\Prob}{\ensuremath{{\mathbb{P}}}}
\newcommand{\zerod}{\mathbf{0}_d}
\DeclareMathOperator*{\argmax}{arg\,max}
\begin{document}

%
\runningtitle{On Parameter Estimation in Deviated Gaussian Mixture of Experts}

%

\twocolumn[

\aistatstitle{On Parameter Estimation in \\ Deviated Gaussian Mixture of Experts}

\aistatsauthor{ Huy Nguyen$^\dagger$ \And Khai Nguyen$^\dagger$ \And  Nhat Ho$^\dagger$ }

\aistatsaddress{Department of Statistics and Data Sciences, The University of Texas at Austin$^\dagger$} ]

\begin{abstract}
  We consider the parameter estimation problem in the \emph{deviated Gaussian mixture of experts} in which the data are generated from $(1 - \lambda^{\ast}) g_0(Y| X)+ \lambda^{\ast} \sum_{i = 1}^{k_{\ast}} p_{i}^{\ast} f(Y|(a_{i}^{\ast})^{\top}X+b_i^{\ast},\sigma_{i}^{\ast})$, where $X, Y$ are respectively a covariate vector and a response variable, $g_{0}(Y|X)$ is a known function, $\lambda^{\ast} \in [0, 1]$ is true but unknown mixing proportion, and $(p_{i}^{\ast}, a_{i}^{\ast}, b_{i}^{\ast}, \sigma_{i}^{\ast})$ for $1 \leq i \leq k^{\ast}$ are unknown parameters of the Gaussian mixture of experts. This problem arises from the goodness-of-fit test when we would like to test whether the data are generated from $g_{0}(Y|X)$ (null hypothesis) or they are generated from the whole mixture (alternative hypothesis). Based on the algebraic structure of the expert functions and the distinguishability between $g_0$ and the mixture part, we construct novel Voronoi-based loss functions to capture the convergence rates of maximum likelihood estimation (MLE) for our models. We further demonstrate that our proposed loss functions characterize the local convergence rates of parameter estimation more accurately than the generalized Wasserstein, a loss function being commonly used for estimating parameters in the Gaussian mixture of experts.
\end{abstract}

\section{INTRODUCTION}
Assume that $(X_1,Y_1),\ldots,(X_n,Y_n) \in \mathcal{X} \times \mathcal{Y} \subset \mathbb{R}^{d} \times \mathbb{R}$ are i.i.d samples from a joint distribution with density function $p_{\lambda^{*}, G_{*}}(X,Y):=p_{\lambda^{*}, G_{*}}(Y|X)\Bar{f}(X)$, where $\Bar{f}(X)$ is a prior density function of the explanatory variable $X$ and the conditional density function $p_{\lambda^{*}, G_{*}}(Y|X)$ is a \emph{deviated Gaussian mixture of experts} of order $k_*$, which takes the following form:
\begin{align}
    p_{\lambda^*,G_*}(Y|X)&:=(1 - \lambda^{*}) g_0(Y| X)  \nonumber\\& + \lambda^{*} \sum_{i = 1}^{k_{*}} p_{i}^{*} f(Y|(a_{i}^{*})^{\top}X+b_i, \sigma_{i}^{*}). \label{def:deviated_mixture}
\end{align}
where $f(\cdot|\mu, \sigma)$ denotes the univariate Gaussian density function with mean $\mu$ and variance $\sigma$. Here, $g_{0}(Y|X)$ is a known function and $p_{G_*}(Y|X):=\sum_{i=1}^{k_*}f(Y|(a_{i}^{*})^{\top}X+b_i, \sigma_{i}^{*})$ denotes the mixture of experts part with respect to $G_*$. Next, $\lambda^{*}\in[0,1]$ represents a true mixing proportion, whereas $G_{*}: = \sum_{i = 1}^{k_{*}} p_{i}^{*} \delta_{(a_{i}^{*}, b_{i}^{*},\sigma_{i}^{*})}$ is a true but unknown \emph{mixing measure}, that is, a linear combination of Dirac measures $\delta$ associated with positive weights $(p^*_i)_{i=1}^{k_*}$ which sum up to one, i.e., $\sum_{i=1}^{k_*}p^*_i=1$. Additionally, $(a^*_i,b^*_i,\sigma^*_i)\in\Theta\subset\mathbb{R}^d\times\mathbb{R}\times\mathbb{R}_+$, for all $1\leq i\leq k_*$, are called atoms or components of the true mixing measure $G_*$. Meanwhile, $h_1(X,a,b):=a^{\top}X+b$ and $h_2(X,\sigma):=\sigma$ are referred to as mean and variance expert functions. 

\textbf{Universal assumptions.} For the sake of theory, we assume the distribution of $X$ to be continuous so that the deviated Gaussian mixture of experts is identifiable (see Proposition~\ref{prop:identifiable}). Moreover, we also assume that the parameter space $\Theta$ is compact and the covariate space $\mathcal{X}$ is bounded in order to guarantee the convergence of parameter estimation. Finally, we let $(a^*_1,b^*_1,\sigma^*_1),\ldots,(a^*_{k_*},b^*_{k_*},\sigma^*_{k_*})$ be pairwise distinct to ensure the difference of Gaussian experts.


The deviated Gaussian mixture of experts~\eqref{def:deviated_mixture} arises from the goodness-of-fit test \cite{jitkrittum2020kernel,delBarrio-etal-99,hunter2008social} when the null hypothesis says that the data are generated from the known joint distribution $g_{0}(Y|X) \bar{f}(X)$ while the alternative hypothesis corresponds to the assumption that the data indeed follow the joint distribution $p_{\lambda^{*} ,G_{*}}(X,Y)$. 
Several settings of this testing problem had been considered in the literature; namely the problem of detection of sparse homogeneous mixtures~\cite{Jin_homogeneous, Jin_confidence, Wu_detection, Verzelen_feature}, the problem of testing the number of components~\cite{Chen_modified, Kasahara_nonparametric, Kasahara_number_components}, and multiple testing problems~\cite{Patra_estimation, Deb_two_component}. Moreover, the deviated Gaussian mixture of experts is also a generalization of the Gaussian mixture of experts~\cite{Jacob_Jordan-1991, Jordan-1994, Xu_Jordan-1995}, which have been used in various fields, namely speech recognition~\cite{Jacobs-1996,50136,You_Speech_MoE_2}, multi-task learning~\cite{liang_m3vit_2022,ma_modeling_2018,hazimeh2021dselect}, computer vision~\cite{puigcerver_scalable_2021,Lathuiliere2017,xia2022image}, and natural language processing~\cite{Eigen_learning_2014, Quoc-conf-2017, Ashok_breaking_icml, Ashok_learning_nips,pmlr-v162-du22c,zuo2023moebert}.

\textbf{Maximum likelihood estimation.} An important application of the deviated Gaussian mixture of experts to the hypothesis testing problem is parameter estimation, namely, the problem of estimating unknown mixing proportion $\lambda^{*}$ and mixing measure $G_{*}$. It is worth noting that the number of experts $k_*$ is also unknown in practice. Therefore, we fit the true model~\eqref{def:deviated_mixture} with a deviated Gaussian mixture of $k$ experts, where $k>k_*$, and then use the maximum likelihood estimation (MLE) method to find the estimates of $\lambdastar$ and $G_*$ as follows:
\begin{align}
\label{eq:MLE_estimation}
(\hat{\lambda}_n,\widehat{G}_n)\in\argmax_{(\lambda,G)\in[0,1]\times\Ocal_{k}(\Theta)}\sum_{i=1}^{n}\log(p_{\lambda, G}(Y_i|X_i)).
\end{align}
Here, we denote $\Ocal_{k}(\Theta):=\{G = \sum_{i=1}^{k'}p_i\delta_{(a_{i},b_{i},\sigma_{i})}:1\leq k'\leq k,\  (a_{i},b_{i},\sigma_{i})\in\Theta\}$ as the set of all discrete probability measures with at most $k$ components.

\textbf{Challenge discussion.} When $\lambda^{*} = 1$ is known, the conditional density $p_{\lambdastar,G_*}(Y|X)$ reduces to the mixture part $p_{G_*}(Y|X)$. Thus, the problem of estimating $\widehat{G}_{n}$ becomes a parameter estimation problem in Gaussian mixture of experts, which had been studied in Theorem 2 of \cite{ho2022gaussian}. Ho et al. \cite{ho2022gaussian} demonstrated that the convergence rates of parameter estimation in the Gaussian mixture of experts were determined by the solvability of a system of polynomial equations induced the algebraic structures between the expert functions. These convergence rates ranged from order $\Otilde(n^{-1/4})$ to order $\Otilde(n^{-1/2r})$ for some $r\geq 4$.

However, when $\lambda^{*} \in [0,1]$ is unknown, the theoretical understanding of the MLE $(\hat{\lambda}_n,\widehat{G}_n)$ in the deviated Gaussian mixture of experts becomes more challenging than those in the standard Gaussian mixture of experts. The main challenge comes from the interaction between the known function $g_{0}(Y|X)$ and the mixture part $p_{G_*}(Y|X)$ with respect to the mixing measure $G_*$ via some partial differential equations (PDEs). This interaction influences not only the identifiability of the model but also the convergence rate of the MLE.

Another issue comes from the suboptimality of the generalized Wasserstein loss function \cite{Villani-03,Villani-09} used in learning parameters. The idea of leveraging that loss function in analyzing the convergence behavior of MLE in mixture models was initialized by \cite{nguyen2016latentmixing}, and then extended to mixture of experts by \cite{ho2022gaussian}. An important property of this divergence is that the convergence of the MLE $\widehat{G}_n$ is able to imply those of its atoms. For example, it can be seen from Theorem 1 in \cite{ho2022gaussian} that the convergence rate $\Otilde(n^{-1/4})$ of $\widehat{G}_n$ to $G_*$ under the generalized Wasserstein indicates that the rates of estimating individual components are also $\Otilde(n^{-1/4})$. On the other hand, the generalized Wasserstein are unable to capture those rates accurately. In particular, while the estimation rates for those components should vary with the number of fitted components approximating them, that loss function always leads to the same rates. 

\begin{table*}[!ht]
\centering
\begin{tabular}{ | c | c| m{3em} | m{4em} |c|} 
\hline
\multirow{2}{4em}{\textbf{Setting}} & \textbf{Bound of $k$} & \hspace{.2em} \textbf{Loss} &\textbf{Exact-fitted $a^*_j,b^*_j,\sigma^*_j$} & \textbf{Over-fitted $a^*_j,b^*_j,\sigma^*_j$ }\\
\hline 
\scriptsize Distinguishable & $k\geq k_*$ &\hspace{0.5em} $D_1$ &\hspace{.5em} $n^{-1/2}$ & $n^{-1/4},n^{-1/2\bar{r}({|\mathcal{A}_j|)}},n^{-1/\bar{r}({|\mathcal{A}_j|)}}$ \\
\hline
\multirow{2}{4em}{\hspace{-1em} \scriptsize Non-distinguishable} & $k\geq k_*+k_0-\kbar$, $\hat{\lambda}_n>\lambdastar$ & \multirow{2}{4em}{\hspace{.5em} $D_2$} & \multirow{2}{4em}{\hspace{.5em} $n^{-1/2}$} & $n^{-1/4},n^{-1/2\bar{r}({|\mathcal{B}_j|}}),n^{-1/\bar{r}({|\mathcal{B}_j|})}$\\\cline{2-2}\cline{5-5}
& Otherwise & &  & $n^{-1/4},n^{-1/2\bar{r}({|\mathcal{A}_j|})},n^{-1/\bar{r}({|\mathcal{A}_j|})}$\\
\hline
\scriptsize Theorem 2 \cite{ho2022gaussian} & $k\geq k_*$ & \hspace{0.9em}$\widetilde{W}$ & \hspace{.5em} $n^{-1/4}$ &$n^{-1/4},n^{-1/2\bar{r}(k-k_*+1)},n^{-1/\bar{r}(k-k_*+1)}$\\
\hline
\end{tabular}
\caption{Summary of parameter estimation rates in the (deviated) Gaussian mixture of experts. Here, exact-fitted parameters are those approximated by one fitted component, while their over-fitted counterparts are approached by at least two fitted components. Additionally, the value of function $\bar{r}(\cdot)\geq 4$ is determined by the solvability of the system of polynomial equations \eqref{definition:polynomial_equation}. Meanwhile, the cardinalities of Voronoi cells $\mathcal{A}_j$ and $\mathcal{B}_j$, which are respectively defined in equations~\eqref{eq:Voronoi_cells_A} and \eqref{eq:Voronoi_cells_B}, indicate the number of components fitting parameters $a^*_j,b^*_j,\sigma^*_j$. Lastly, the notation $\widetilde{W}$ stands for the generalized Wasserstein loss function used in \cite{ho2022gaussian}.}
\label{table:parameter_rates}
\end{table*}

\textbf{Contribution.} In the paper, we first establish the parametric convergence rate of density estimation $p_{\hat{\lambda}_n,\widehat{G}_n}$ to the true density $p_{\lambdastar,G_*}$ of order $\Otilde(n^{-1/2})$ under the Total Variation distance $V$. Next, to address the above challenges of the parameter estimation problem in the deviated Gaussian mixture of experts, we first develop a distinguishability condition between the function $g_{0}(Y|X)$ and the mixture part $p_{G_*}(Y|X)$ in the deviated Gaussian mixture of experts in order to isolate the effect of function $g_0$ on the convergence behaviors of parameter estimation of $p_{G_*}$. Then, we conduct the convergence analysis of parameter estimation under \emph{distinguishable settings}, namely when the distinguishability condition holds true, and \emph{non-distinguishable settings}, i.e. when that condition does not hold. In each scenario, we construct a novel Voronoi loss function to precisely capture distinct convergence rates of parameter estimation in the deviated Gaussian mixture of experts (see also Table~\ref{table:parameter_rates}). 
Our theory can be summarized as follows: 

\textbf{1. Distinguishable settings:} When the distinguishability condition holds, there is no impact of the function $g_0$ on the mixture of experts part $p_{G_*}$. Therefore, we design a novel Voronoi loss function $D_1((\hat{\lambda}_n,\widehat{G}_n),(\lambdastar,G_*))$ in equation~\eqref{eq:D_distinguishable_dependent}, and then demonstrate that it is lower bounded by the Total Variation distance $V(p_{\hat{\lambda}_n,\widehat{G}_n},p_{\lambdastar,G_*})$, which vanishes at the rate of order $\Otilde(n^{-1/2})$. It follows from this bound that the
estimation rate for $(\lambdastar,G_*)$ is of order $\Otilde(n^{-1/2})$. Moreover, parameters $(a^*_i,b^*_i,\sigma^*_i)$ which are fitted by exactly one component enjoy the same estimation rate of order $\Otilde(n^{-1/2})$. By contrast, if $(a^*_i,b^*_i,\sigma^*_i)$ are approached by more than one component, then the rates for estimating $a^*_i$ become slower at $\Otilde(n^{-1/4})$, while those for $b^*_i$ and $\sigma^*_i$ are of orders $\Otilde(n^{-1/2\bar{r}_i})$ and $\Otilde(n^{-1/\bar{r}_i})$, respectively, where $\bar{r}_i\geq 4$ is determined by the solvability of the system of polynomial equations defined in equation~\eqref{definition:polynomial_equation}.

\textbf{2. Non-distinguishable settings:} When the distinguishability condition fails, we consider the function $g_0$ as a Gaussian mixture of $k_0$ experts, where $1\leq k_0\leq k_*$, whose parameters interact with those of the mixture part $p_{G_*}$. Notably, the convergence behaviors of parameter estimation in the deviated Gaussian mixture of experts strictly depend on the interaction level determined by the number of overlapped components $\kbar$ that $g_0$ and $p_{G_*}$ share. Therefore, we propose a novel Voronoi loss function $D_2((\hat{\lambda}_n,\widehat{G}_n),(\lambdastar,G_*))$ in equation~\eqref{eq:D_partial_dependent} to capture this property, and then derive the Total Variation lower bound $D_2((\hat{\lambda}_n,\widehat{G}_n),(\lambdastar,G_*))\lesssim V(p_{\hat{\lambda}_n,\widehat{G}_n},p_{\lambdastar,G_*})=\Otilde(n^{-1/2})$. Consequently, the rates for estimating $(\lambdastar,G_*)$ and exact-fitted parameters $a^*_j,b^*_j,\sigma^*_j$ are of order $\Otilde(n^{-1/2})$. On the other hand, for over-fitted parameters $a^*_j,b^*_j,\sigma^*_j$, while the estimation rate for $a^*_j$ remains unchanged of order $\Otilde(n^{-1/4})$, those for $b^*_j,\sigma^*_j$ not only depend on the solvability of the system~\eqref{definition:polynomial_equation} but also vary with the relation between $k$ and $k_*+k_0-\kbar$.


\textbf{Organization.} 
The paper is organized as follows. Firstly, we introduce a novel distinguishability condition and a notion of Voronoi cells as well as establish the density estimation rate in Section~\ref{sec:problem_setup}. In Section~\ref{sec:distinguishable}, we analyze the convergence behavior of parameter estimation under both the distinguishable and non-distinguishable settings and provide a proof sketch for main results. 
Then, we conduct a simulation study in Section~\ref{sec:experiments} to empirically verify our theoretical results before concluding the paper in Section~\ref{sec:conclusion}. Additional results and detailed proofs are all deferred to the supplementary material.

\textbf{Notation.} Let $[n]$ stand for the set $\{1,2,\ldots,n\}$ for any $n\in\mathbb{N}$. Given two sequences $\{s_n\}$ and $\{t_n\}$, we write $s_n \lesssim t_n$ or $s_n=\Ocal(t_n)$ if there exists a constant $C>0$ independent of $n$ such that $s_n \leq C t_n$ for all $n\in\mathbb{N}$ (similar for $s_n \gtrsim t_n$). Next, the notation $s_n=\Otilde(t_n)$ indicates that the previous bound occurs up to some logarithmic factor. Let $\|{\cdot}\|_p$ represents for the usual $p$-norm in $\mathbb{R}^d$ with a convention that $\|\cdot\|$ being the $2$-norm. 
Finally, for any two probability density functions $p$ and $q$ (with respect to the Lebesgue measure $\mu$), we define the Total Variation distance between them as $V(p,q):=\frac{1}{2}\int|p-q|d\mu$, while the Hellinger distance is given by $h(p,q):=\Big(\frac{1}{2}\int(\sqrt{p}-\sqrt{q})^2d\mu\Big)^{1/2}$. 
\vspace{-0.3 em}
\section{BACKGROUND}
\label{sec:problem_setup}
\vspace{-0.3 em}
In this section, we first introduce a distinguishability condition, and then validate the identifiability of the deviated Gaussian mixture of experts as well as characterize the density estimation rate. 


\vspace{-0.3 em}
Recall that $h_1$ and $h_2$ are mean and variance expert functions in the true model~\eqref{def:deviated_mixture}. Then, we begin this section with the following distinguishability condition between the function $g_0$ and the mixture part $p_{G_*}$:
\begin{definition}[Distinguishability Condition]
\label{definition:distinguishability}
We say that $p_{G_*}$ is distinguishable from $g_0$ with respect to vector $r=(r_1,\ldots,r_{k_*})\in\mathbb{N}^{k_*}$ if the following holds: assume that there exist real coefficients $\alpha^{(0)}$ and $\alpha^{(i)}_{\ell}$, for $i\in[k_*]$ and $0\leq \ell\leq r_i$ that satisfy
\begin{align*}
\sum_{i=1}^{k_*}\sum_{\ell=0}^{r_i} \alpha^{(i)}_{\ell}\cdot\dfrac{\partial^{\ell}f}{\partial h_1^{\ell}}(Y|(a^*_i)^{\top}X+b^*_i,&~\sigma^*_i)\\
&+\alpha^{(0)}g_0(Y|X)=0,
\end{align*}
for almost surely $(X,Y)$, then $\alpha^{(0)}=\alpha^{(i)}_{\ell}=0$ for any $i\in[k_*]$ and $0\leq \ell\leq r_i$.
\end{definition}
\vspace{-.6em}
For better understanding, we provide below a scenario when $p_{G_*}$ is distinguishable from $g_0$.
\begin{example}
\label{eq:example}
Let $G_0=\sum_{i=1}^{k_0}p^0_i\delta_{(\theta^0_{1i},\theta^0_{2i})}\in\mathcal{E}_{k_0}(\Theta):=\Ocal_{k_0}(\Theta)\setminus\Ocal_{k_0-1}(\Theta)$, where $k_0\in\mathbb{N}$. If we set
\begin{align}
    \label{eq:g0_form}
    g_0(Y|X)&=p_{G_0}(Y|X) \nonumber\\:&=\sum_{i=1}^{k_0}p^0_if(Y|(a^0_i)^{\top}X+b^0_i,\sigma^0_i),
\end{align}
then $p_{G_*}$ is distinguishable from $g_0$ whenever $k_0>\kstar$.
\end{example}
\vspace{-.5em}
In high level, the purpose of the distinguishability condition is to control the interaction level between the function $g_0$ and the mixture part $p_{G_*}$. From the perspective of the parameter estimation problem, if there is no effect of the function $g_{0}$ on the mixture $p_{G_{\ast}}$, then the convergence behaviors of parameter estimation in the deviated Gaussian mixture of experts will be similar to those in the standard Gaussian mixture of experts previously studied in \cite{ho2022gaussian}. On the other hand, when the distinguishability condition fails, i.e., there are interactions between $g_{0}$ and $p_{G_{\ast}}$, the parameter estimation rates will strictly depend on the interaction level among these two functions. In our paper, we illustrate that point by considering $g_{0}$ as a Gaussian mixture of $k_0$ expert given in equation~\eqref{eq:g0_form}, where $1\leq k_{0}\leq \kstar$. This choice of function $g_0$ allows us to determine the level of the interaction between $g_{0}$ and $p_{G_{\ast}}$ explicitly via the number of overlapped components that these functions share, which will be discussed further in Section~\ref{sec:non-distinguishable}.

Subsequently, we figure out in the following proposition that if the distinguishability condition is satisfied, then the deviated Gaussian mixture of experts in equation~\eqref{def:deviated_mixture} is identifiable.
\begin{proposition}[Identifiability]
    \label{prop:identifiable}
    Let $G,G'$ be two mixing measures in $\Ocal_{k}(\Theta)$ and $\lambda,\lambda'$ be two mixing proportions in $[0,1]$. Assume that $p_{G_*}$ is distinguishable from $g_0$, then if the identifiability equation $p_{\lambda, G}(X,Y)=p_{\lambda', G'}(X,Y)$ holds for almost surely $(X,Y)\in\mathcal{X}\times\mathcal{Y}$, then we achieve that $(\lambda, G)\equiv(\lambda', G')$.
\end{proposition}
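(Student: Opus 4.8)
The plan is to turn the asserted identity of joint densities into a single linear relation between $g_0$ and finitely many Gaussian expert functions, and then to dismantle that relation using the distinguishability condition together with the classical linear independence of Gaussian location--scale densities. First I would note that, since $\bar f$ is a density on $\mathcal{X}$, the identity $p_{\lambda,G}(X,Y)=p_{\lambda',G'}(X,Y)$ for almost every $(X,Y)$ is equivalent to $p_{\lambda,G}(Y|X)=p_{\lambda',G'}(Y|X)$ for almost every $X$ in the support of $\bar f$ and almost every $Y$. Then, writing $G=\sum_i p_i\delta_{(a_i,b_i,\sigma_i)}$ and $G'=\sum_j p'_j\delta_{(a'_j,b'_j,\sigma'_j)}$ and moving everything to one side, I would obtain
\begin{align*}
(\lambda'-\lambda)\,g_0(Y|X) &+ \lambda\sum_i p_i\, f\big(Y\mid a_i^\top X+b_i,\sigma_i\big)\\
&- \lambda'\sum_j p'_j\, f\big(Y\mid (a'_j)^\top X+b'_j,\sigma'_j\big)=0
\end{align*}
for almost every $(X,Y)$. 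Combining the Gaussian terms that share a common triple $(a,b,\sigma)$, this sum can be rewritten as $\sum_{m=1}^{M} c_m\, f(Y\mid \alpha_m^\top X+\beta_m,\tau_m)$ over pairwise distinct triples $(\alpha_m,\beta_m,\tau_m)$, where each $c_m$ is an explicit signed combination of the weights $\lambda p_i$ and $-\lambda' p'_j$.

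Next I would argue that the relation $(\lambda'-\lambda)g_0(Y|X)+\sum_m c_m\, f(Y\mid \alpha_m^\top X+\beta_m,\tau_m)=0$ can hold for almost every $(X,Y)$ only if $\lambda=\lambda'$ and every $c_m=0$. The engine is a linear independence fact: because the law of $X$ is continuous, for almost every fixed $X$ the pairs $(\alpha_m^\top X+\beta_m,\tau_m)$ are pairwise distinct, so the Gaussian densities $Y\mapsto f(Y\mid \alpha_m^\top X+\beta_m,\tau_m)$ are linearly independent (for instance by comparing characteristic functions in $Y$); and the distinguishability of $p_{G_*}$ from $g_0$, which specialized to the derivative-order vector $r=\mathbf 0$ is exactly the statement that $g_0$ together with the Gaussian experts is linearly independent, forbids $g_0$ from being absorbed into a nontrivial finite linear combination of Gaussian experts. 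Hence $\lambda'-\lambda=0$ and $c_m=0$ for all $m$.

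It then remains to conclude from $\lambda=\lambda'$. If this common value is $0$, the mixing measure plays no role and the two parameter pairs agree under the identification $(0,G)\equiv(0,G')$. If $\lambda>0$, the vanishing of all $c_m$ yields $\sum_i p_i\, f(\cdot\mid a_i^\top X+b_i,\sigma_i)=\sum_j p'_j\, f(\cdot\mid (a'_j)^\top X+b'_j,\sigma'_j)$ for almost every $X$, that is $p_G=p_{G'}$; invoking the identifiability of the standard Gaussian mixture of experts (again a consequence of the same linear independence, as distinct atoms carry distinct parameters) then gives $G=G'$, so $(\lambda,G)\equiv(\lambda',G')$.

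I expect the middle step to be the real obstacle: making rigorous both that distinct Gaussian expert functions on $\mathcal{X}\times\mathcal{Y}$ are linearly independent and that no relation $c_0\, g_0+\sum_m c_m\, f(\cdot\mid \alpha_m^\top X+\beta_m,\tau_m)=0$ with $c_0\neq 0$ is possible. The former uses the continuity of the covariate law so that the affine mean maps $X\mapsto a^\top X+b$ truly separate distinct components, and the bookkeeping is delicate because two experts may agree on $a$ yet differ in $b$ or $\sigma$ (the clean route is to fix a generic $X$, argue in $Y$, and then let $X$ vary); the latter is precisely what the distinguishability condition is tailored to preclude.
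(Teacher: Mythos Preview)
Your approach matches the paper's: both rewrite the identity as a vanishing linear combination of $g_0$ and finitely many Gaussian expert densities, group terms sharing a common atom, and then use distinguishability (to kill the $g_0$ coefficient) together with linear independence of Gaussians with distinct location--scale parameters (to kill the rest); the paper merely organizes the bookkeeping by first counting how many atoms $G$ and $G'$ have in common and arguing that any non-shared atom would force a weight to zero. One subtlety you and the paper both leave implicit: the distinguishability hypothesis is stated for $G_*$, yet in the proof it is applied at the atoms of the arbitrary $G,G'$ (the paper writes ``since $p_{G_\ell}$ is distinguishable from $g_0$'' without deriving this from the assumption on $G_*$), so the condition is evidently being read as holding uniformly over mixing measures rather than only at $G_*$.
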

Proof of Proposition~\ref{prop:identifiable} is in Appendix~\ref{appendix:identifiable}. Given that $p_{G_*}$ is distinguishable from $h_0$, this result ensures the convergence of the MLE $(\hat{\lambda}_n,\widehat{G}_n)$ to the true pair of mixing proportion and mixing measure $(\lambdastar,G_*)$ when the density estimation $p_{\hat{\lambda}_n,\widehat{G}_n}(X,Y)$ converges to the true density $p_{\lambdastar,G_*}(X,Y)$ for almost surely $(X,Y)$. Thus, it is natural to explore the density estimation rate in the following proposition:

\begin{proposition}[Density estimation rate]
\label{prop:mle_estimation}
Suppose that the function $g_0$ is bounded with tail $\mathbb{E}_{X}[-\log g_0(Y|X)]\gtrsim Y^q$ for almost surely $Y\in\mathcal{Y}$ for some $q>0$. Then, the following inequality holds true:
\begin{align*}
    \mathbb{P}\Big(V(p_{\widehat{\lambda}_n, \widehat{G}_n},p_{\lambda^*, G_*})>C\sqrt{\log (n)/n}\Big)\lesssim n^{-c},
\end{align*}
where $C>0$ is a constant that depends on $g_0$, $\lambdastar$, $G_*$ and $\Theta$, while the constant $c>0$ depends only on $\Theta$.
\end{proposition}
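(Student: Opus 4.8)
The plan is to follow the by-now standard route for parametric density-estimation rates of the MLE in mixture-type models, based on a bracketing-entropy bound combined with the empirical-process maximal inequality for the log-likelihood-ratio process (as used in \cite{nguyen2016latentmixing,ho2022gaussian} and references therein). Since $V(p,q)\le\sqrt2\,h(p,q)$, it suffices to prove the rate for the Hellinger distance $h$ and then transfer it to $V$ at the very end. \textbf{Step 1 (reduction to an entropy bound).} Let $\mathcal{P}_k(\Theta):=\{p_{\lambda,G}(X,Y):\lambda\in[0,1],\,G\in\mathcal{G}_k(\Theta)\}$, which by construction contains both $p_{\lambda^*,G_*}$ and the MLE $p_{\widehat{\lambda}_n,\widehat{G}_n}$; by the universal assumptions ($\Theta$ compact, $\mathcal X$ bounded) this class is indexed by a compact subset of $[0,1]\times\mathbb R^{k(d+2)}$, so the MLE exists. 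The standard theory then says: if $\Psi_n(\delta)\ge\int_{\delta^2/2^{13}}^{\delta}\sqrt{H_B\big(u,\mathcal{P}_k(\Theta),h\big)}\,\mathrm du\vee\delta$ with $\Psi_n(\delta)/\delta^2$ non-increasing, and $\delta_n$ solves $\sqrt n\,\delta_n^2\ge\Psi_n(\delta_n)$, then $\mathbb P\big(h(p_{\widehat{\lambda}_n,\widehat{G}_n},p_{\lambda^*,G_*})>\delta\big)\lesssim\exp(-n\delta^2/c^2)$ for every $\delta\ge\delta_n$ (the usual surrogate $\bar p=(p+p_{\lambda^*,G_*})/2$ is used to tame the unboundedness of the log-likelihood ratio). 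Hence everything reduces to bounding the Hellinger bracketing entropy $H_B(\epsilon,\mathcal{P}_k(\Theta),h)$.

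\textbf{Step 2 (logarithmic entropy via tail truncation).} I claim $H_B(\epsilon,\mathcal{P}_k(\Theta),h)\lesssim\log(1/\epsilon)$, with implicit constant depending on $g_0$, $\Theta$, $k$, $d$. Since $g_0$ is fixed, a bracket for $p_{\lambda,G}=(1-\lambda)g_0+\lambda p_G$ reduces to a bracket for the finite-dimensional mixture part $p_G$ together with the scalar parameter $\lambda$. Cover $\mathcal Y$ by a window $[-M,M]$ and its complement. On $[-M,M]$ the Gaussian experts $f(Y\mid a^\top X+b,\sigma)$ are uniformly Lipschitz in $(a,b,\sigma)$ over the compact $\Theta$ and the bounded $\mathcal X$ (with an $M$-dependent constant), so a net over the $k(d+2)$-dimensional parameter set produces $\lesssim k(d+2)\log(M/\epsilon)$ brackets. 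On $|Y|>M$, the Gaussian tail of $f$ contributes mass $\lesssim e^{-c_1M^2}$ uniformly over $\Theta$, while the hypothesis $\mathbb E_X[-\log g_0(Y\mid X)]\gtrsim Y^q$ forces the fixed term to contribute at most $\lesssim e^{-c_2M^q}$; choosing $M\asymp(\log(1/\epsilon))^{1/\min\{q,2\}}$ makes the total tail mass $\lesssim\epsilon$, absorbable into one extra bracket. This gives $H_B(\epsilon,\mathcal{P}_k(\Theta),h)\lesssim\log(1/\epsilon)$.

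\textbf{Step 3 (solving for the rate).} With $H_B(u,\mathcal{P}_k(\Theta),h)\lesssim\log(1/u)$ we get $\int_{\delta^2/2^{13}}^{\delta}\sqrt{\log(1/u)}\,\mathrm du\lesssim\delta\sqrt{\log(1/\delta)}$, so one may take $\Psi_n(\delta)=C_0\,\delta\sqrt{\log(1/\delta)}$. Solving $\sqrt n\,\delta_n^2\ge C_0\,\delta_n\sqrt{\log(1/\delta_n)}$ yields $\delta_n\asymp\sqrt{\log n/n}$. Plugging $\delta=C\sqrt{\log n/n}$ for a suitable constant $C$ into the tail bound gives $\mathbb P\big(h(p_{\widehat{\lambda}_n,\widehat{G}_n},p_{\lambda^*,G_*})>C\sqrt{\log n/n}\big)\lesssim\exp\!\big(-c'\,n\cdot\tfrac{\log n}{n}\big)=n^{-c'}$, and $V\le\sqrt2\,h$ upgrades this to the claimed inequality, with $c$ depending only on $\Theta$ (through the parameter dimension and the entropy constant) and $C$ depending on $g_0,\lambda^*,G_*,\Theta$.

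\textbf{Main obstacle.} The only genuinely delicate point is Step 2: the density class is non-compact in the response $Y$, so the bracketing entropy must be controlled simultaneously in the finite-dimensional parameters and in the tails of $\mathcal Y$. This is precisely where the assumption $\mathbb E_X[-\log g_0(Y\mid X)]\gtrsim Y^q$ is used — it prevents the otherwise arbitrary fixed component $g_0$ from having heavy tails that would destroy the logarithmic entropy bound, while Gaussianity handles the mixture part; matching the truncation level $M$ to the scale $\epsilon$ is exactly what converts the clean parametric $n^{-1/2}$ into the $\sqrt{\log n/n}$ rate.
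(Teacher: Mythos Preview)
Your proposal is correct and follows essentially the same route as the paper: both reduce the problem to the van de Geer--type maximal inequality (the paper's Lemma~\ref{lemma:density_estimation_Vandegeer}), establish the logarithmic bracketing entropy bound $H_B(\varepsilon,\mathcal{P}_k([0,1]\times\Theta),h)\lesssim\log(1/\varepsilon)$ (the paper's Lemma~\ref{lemma:bracketing_control}), and then integrate to obtain $\delta_n\asymp\sqrt{\log n/n}$. The only cosmetic difference is in how the brackets are built in Step~2: you truncate at a level $M\asymp(\log(1/\varepsilon))^{1/\min\{q,2\}}$ and use Lipschitz continuity on $[-M,M]$, whereas the paper constructs an explicit envelope $H(X,Y)$ combining the Gaussian and $g_0$ tails and then caps the $\eta$-net brackets by this envelope; both devices exploit the tail hypothesis on $g_0$ in exactly the same way and yield the same logarithmic entropy.
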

Proof of Proposition~\ref{prop:mle_estimation} is in Appendix~\ref{appendix:prop:mle_estimation}. The above bound indicates that the density estimation $p_{\widehat{\lambda}_n, \widehat{G}_n}$ converges to the true density $p_{\lambda^*, G_*}$ under the Total Variation distance at the parametric rate of order $\Otilde(n^{-1/2})$. In order to leverage this result, we assume that the function $g_0$ is bounded with tail $\mathbb{E}_{X}[-\log g_0(Y|X)]\gtrsim Y^q$ for almost surely $Y\in\mathcal{Y}$ for some $q>0$ throughout the paper unless stating otherwise.

\begin{figure*}[!ht]
    \centering
    \includegraphics[scale =.08]{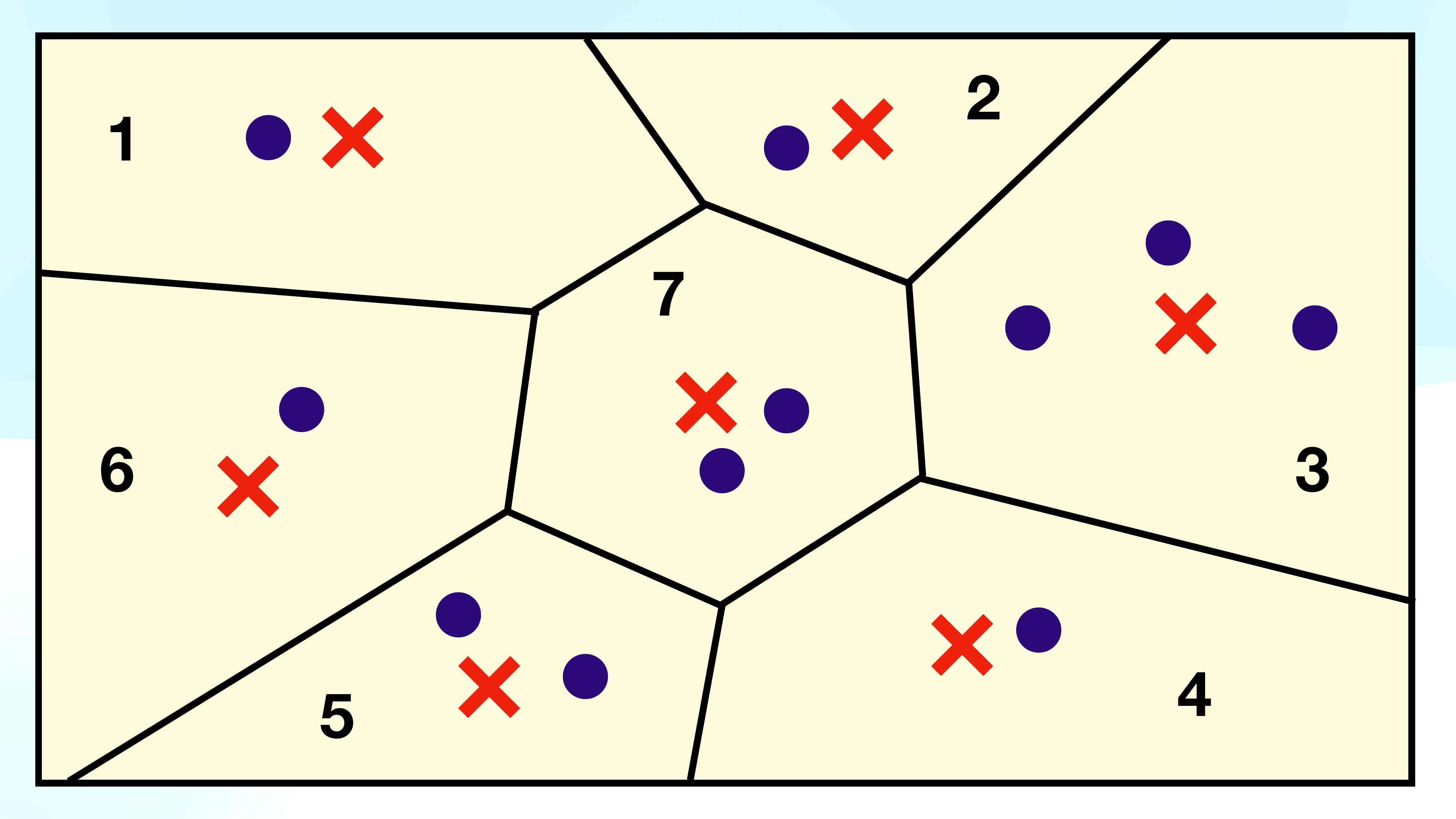}
    \caption{Illustration of the Voronoi cells generated by the components of $G_{\ast}$ (red crosses) and the fitted components of the MLE $\widehat{G}_n$ (blue points). 
    Under the distinguishable settings, Theorem~\ref{theorem:distinguishable_dependent} indicates that the rates for estimating true components $(\aj,\bj,\sj)$ in cells $1,2,4,6$, which are fitted by one component, are of order $\Otilde(n^{-1/2})$. Meanwhile, those for true components $(a^*_3,b^*_3,\sigma^*_3)$ in cell 3, which are fitted by three components, are drastically slow at $\Otilde(n^{-1/4})$, $\Otilde(n^{-1/12})$ and $\Otilde(n^{-1/6})$, respectively.}	
    \label{fig:Voronoi_cells}
    \vspace{-1 em}
\end{figure*}


\section{CONVERGENCE RATES OF PARAMETER ESTIMATION}
\label{sec:parameter_rates}
In this section, we aim to establish the convergence rates of maximum likelihood estimation in the deviated Gaussian mixture of experts under both the distinguishable and non-distinguishable settings.
\subsection{Distinguishable Settings}
\label{sec:distinguishable}
Under this setting, the mixture part $p_{G_*}$ is distinguishable from the function $g_0$ w.r.t vector $r=(r_1,\ldots,r_{k_*})$ that we will choose later. In other words, there is no interaction between $p_{G_*}$ and $g_0$, and the following set is linearly independent for almost surely $X$:
\begin{align}
    \label{eq:independent_set}
    &\left\{\dfrac{\partial^{\ell}f}{\partial h_1^{\ell}}(Y|(a^*_j)^{\top}X+b^*_j,\sigma^*_j), \ g_0(Y|X):\right. \nonumber \\ 
    &\hspace{4cm}\left. j\in[k_*], \ 0\leq \ell\leq r_j\right\}.
\end{align}
Given the parametric density estimation rate $V(p_{\hat{\lambda}_n,\widehat{G}_n},p_{\lambdastar,G_*})=\Otilde(n^{-1/2})$ in Proposition~\ref{prop:mle_estimation}, our main goal is to establish the Total Variation lower bound $V(p_{\hat{\lambda}_n,\widehat{G}_n},p_{\lambdastar,G_*})\gtrsim D_1((\hat{\lambda}_n,\widehat{G}_n),(\lambdastar,G_*))$, where $D_1$ will be defined in equation~\eqref{eq:D_distinguishable_dependent}, in order to achieve the parametric convergence rate of the MLE $D_1((\hat{\lambda}_n,\widehat{G}_n),(\lambdastar,G_*))=\Otilde(n^{-1/2})$. For that purpose, we first rewrite the density discrepancy $p_{\hat{\lambda}_n,\widehat{G}_n}(X,Y)-p_{\lambdastar,G_*}(X,Y)$ in terms of $f(Y|(\hat{a}^n_i)^{\top}X+\hat{b}^n_i,\hat{\sigma}^n_i)-f(Y|(a^*_j)^{\top}X+b^*_j,\sigma^*_j)$, where $(\hat{a}^n_i,\hat{b}^n_i,\hat{\sigma}^n_i)$ is a component of $\widehat{G}_n$. Next, we apply a Taylor expansion to the function $f(Y|(\hat{a}^n_i)^{\top}X+\hat{b}^n_i,\hat{\sigma}^n_i)$ about the point $(a^*_j,b^*_j,\sigma^*_j)$ to decompose the density discrepancy into a linear combination of linearly independent elements associated with coefficients involving the parameter discrepancies, namely $\hat{a}^n_i-a^*_j$, $\hat{b}^n_i-b^*_j$ and $\hat{\sigma}^n_i-\sigma^*_j$. As a result, when $p_{\hat{\lambda}_n,\widehat{G}_n}$ converges to $p_{\lambdastar,G_*}$, the previous parameter discrepancies also go to zero and we achieved our desired estimation rates. Note that such decomposition cannot be done if the set in equation~\eqref{eq:independent_set} is linearly dependent. However, we observe an interaction among parameters of the Gaussian density $f$ via the following partial differential equation (PDE):
\begin{align}
    \label{eq:PDE_interaction}
    \dfrac{\partial^2f}{\partial b^2}=2~\dfrac{\partial f}{\partial \sigma}.
\end{align}
This interaction induces a lot of linearly dependent derivative terms in the previous Taylor expansion. Thus, we have to group these terms together by adding their coefficients. Consequently, when the resulting coefficients tend to zero, we arrive at a system of polynomial equations which was previously studied in \cite{ho2016weakly}. 

\textbf{System of polynomial equations.} Let $\Bar{r}(m)$ be the smallest natural number $r$ such that the following system of polynomial equations does not admit any non-trivial solutions for the unknown variables: $(s_l,t_{1l},t_{2l})_{l=1}^{m}\subseteq\mathbb{R}^3$
\begin{align}
    \label{definition:polynomial_equation}
    \sum_{l=1}^{m}\sum_{\substack{n_1,n_2\in\mathbb{N}\\ n_1+2n_2=\beta}}\dfrac{s_l^2~t_{1l}^{n_1}~t_{2l}^{n_2}}{n_1!n_2!}=0, \quad \beta=1,2,\ldots,r,
\end{align}
A solution to the above system is regarded as non-trivial if all variables $s_{l}$ are non-zero, whereas at least one of the $t_{1l}$ is different from zero. As shown in [Proposition 2.1, \cite{ho2016weakly}], we have $\Bar{r}(2)=4$, $\Bar{r}(3)=6$ and $\Bar{r}(m)\geq 7$ when $m\geq 4$. 

\textbf{Voronoi loss function:} 
Intuitively, true parameters $a^*_j,b^*_j,\sigma^*_j$ which are fitted by one component should admit faster estimation rates than those approximated by more than one component. To capture this convergence behavior of parameter estimation, let us introduce a class of Voronoi cells $\mathcal{A}_j\equiv\mathcal{A}_j(G)$ w.r.t an arbitrary mixing measure $G$, which are generated by the components $\theta^*_j:=(a^*_j,b^*_j,\sigma^*_j)$ of $G_*$ as follows: 
\begin{align}
    \label{eq:Voronoi_cells_A}
    \mathcal{A}_j: =\{i\in[k]:\|\theta_i-\theta^*_j\|\leq \|\theta_i-\theta^*_{\ell}\|, \ \forall \ell\neq j\},
\end{align}
where $\theta_i:=(a_i,b_i,\sigma_i)$ for any $i\in[k]$. Notably, the cardinality of Voronoi cell $\mathcal{A}_j$ is exactly the number of components fitting $\theta^*_j$. An instance of Voronoi cells is illustrated in Figure~\ref{fig:Voronoi_cells}. Based on those cells, the Voronoi loss function used for this setting is defined as
\begin{align}   
    \label{eq:D_distinguishable_dependent}
    &D_1((\lambda, G),(\lambdastar, G_*)):=|\lambda-\lambdastar|+(\lambda+\lambdastar)\nonumber\\
    &\times  \Big[\sum_{j:|\mathcal{A}_j|=1}\sum_{i\in\mathcal{A}_j}p_i\Big(\|\Delta a_{ij}\|+|\Delta b_{ij}|+| \Delta \sigma_{ij}|\Big)\nonumber \\
    &+\sum_{j:|\mathcal{A}_j|>1}\sum_{i\in\mathcal{A}_j}p_i\Big(\|\Delta a_{ij}\|^2+|\Delta b_{ij}|^{\brj}\nonumber  \\
    &\quad +|\Delta \sigma_{ij}|^{\brj/2}\Big)+\sum_{j=1}^{k_*}\big|\sum_{i\in\mathcal{A}_j}\lambda p_i-\lambdastar p^*_j\big|\Big].
\end{align}
where $\Delta a_{ij}:=a_i-\aj$, $\Delta b_{ij}:=b_i-\bj$ and $\Delta\sigma_{ij}:=\sigma_i-\sj$. It is obvious that $D_1((\lambda, G),(\lambdastar, G_*))=0$ if and only if $(\lambda, G)=(\lambdastar, G_*)$. Now, we derive the convergence rate of the MLE $(\hat{\lambda}_n,\widehat{G}_n)$ under the challenging scenario when $\lambdastar\in(0,1]$ in Theorem~\ref{theorem:distinguishable_dependent}, while a discussion on the scenario when $\lambdastar=0$ is relegated to Appendix~\ref{appendix:additional_results} due to the space limit.
\begin{theorem}
\label{theorem:distinguishable_dependent}
Assume that the distinguishability condition holds and $\lambdastar\in(0,1]$ is unknown. Then, we achieve the Total Variation lower bound $V(p_{\lambda,G},p_{\lambdastar,G_*})\gtrsim D_1((\lambda,G),(\lambdastar,G_*))$ for any $(\lambda,G)\in[0,1]\times\Ocal_k(\Theta)$. This bound together with Proposition~\ref{prop:mle_estimation} imply that 
\begin{align*}
    \mathbb{P}\Big(D_1((\hat{\lambda}_n,\widehat{G}_n),(\lambdastar, G_*)) &>C_{1}\sqrt{\log (n)/n}\Big) \lesssim n^{-c_1},
\end{align*}
where $C_{1}>0$ is a constant depending on $g_0,\lambdastar,G_*,\Theta$, while the constant $c_{1}>0$ depends only on $\Theta$.
\end{theorem}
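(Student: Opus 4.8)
# Proof Proposal for Theorem~\ref{theorem:distinguishable_dependent}

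The plan is to establish the Total Variation lower bound $V(p_{\lambda,G},p_{\lambdastar,G_*})\gtrsim D_1((\lambda,G),(\lambdastar,G_*))$, from which the probabilistic rate follows immediately by combining with the density estimation rate in Proposition~\ref{prop:mle_estimation}. The argument splits into a \emph{local} part (near $(\lambdastar,G_*)$) and a \emph{global} part (bounded away from it). For the global part, a standard compactness argument will suffice: if $\inf V(p_{\lambda,G},p_{\lambdastar,G_*}) / D_1((\lambda,G),(\lambdastar,G_*)) = 0$ over pairs with $D_1$ bounded below by $\varepsilon$, then extracting a convergent subsequence $(\lambda^{(t)},G^{(t)})\to(\lambda',G')$ with $D_1((\lambda',G'),(\lambdastar,G_*))\geq\varepsilon>0$ forces $p_{\lambda',G'}=p_{\lambdastar,G_*}$ almost surely, contradicting identifiability (Proposition~\ref{prop:identifiable}); here one must be slightly careful that $D_1$ is lower semicontinuous under weak convergence of mixing measures, which holds because the Voronoi cell structure can only degenerate in a way that does not increase $D_1$ in the limit.

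The heart of the matter is the local bound, which I would prove by contradiction: suppose there is a sequence $(\lambda_n, G_n)$ with $G_n = \sum_i p_i^n \delta_{(a_i^n, b_i^n, \sigma_i^n)}\to(\lambdastar, G_*)$ in $D_1$ and $V(p_{\lambda_n,G_n},p_{\lambdastar,G_*}) / D_1((\lambda_n,G_n),(\lambdastar,G_*))\to 0$. Set $D_n := D_1((\lambda_n,G_n),(\lambdastar,G_*))$. The key step is the Taylor-expansion decomposition described in the text: write the density difference $p_{\lambda_n,G_n}(Y|X) - p_{\lambdastar,G_*}(Y|X)$ as a sum over true components $j$ of terms of the form $\sum_{i\in\mathcal{A}_j} \lambda_n p_i^n\big(f(Y|(a_i^n)^\top X + b_i^n,\sigma_i^n) - f(Y|(a_j^*)^\top X + b_j^*,\sigma_j^*)\big)$, plus weight-discrepancy terms $\sum_j (\sum_{i\in\mathcal{A}_j}\lambda_n p_i^n - \lambdastar p_j^*) f(Y|(a_j^*)^\top X + b_j^*,\sigma_j^*)$, plus the $g_0$ term carrying the factor $(\lambda_n - \lambdastar)$. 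For each $i\in\mathcal{A}_j$ I would Taylor-expand $f(Y|(a_i^n)^\top X + b_i^n,\sigma_i^n)$ around $(a_j^*,b_j^*,\sigma_j^*)$ up to order $1$ when $|\mathcal{A}_j|=1$ and up to order $\bar r(|\mathcal{A}_j|)$ when $|\mathcal{A}_j|>1$, using the PDE~\eqref{eq:PDE_interaction} to collapse $\partial_b^2 f$ into $\partial_\sigma f$ and grouping the coefficients of each surviving linearly independent derivative $\partial^{\ell_1+\ell_2}f/\partial h_1^{\ell_1}\partial h_2^{\ell_2}$. Dividing the whole expression by $D_n$ yields a vanishing linear combination (in the limit $n\to\infty$, after passing to a subsequence so all normalized coefficients converge) of the elements of the linearly independent set~\eqref{eq:independent_set}. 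Hence every limiting coefficient is zero.

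The final step is to show that these coefficient-vanishing relations are incompatible with the normalization $\sum_{\text{terms}}(\cdot)/D_n \to (\text{something})$ whose components sum to a positive quantity --- i.e., not all normalized coefficients can vanish. The $(\lambda_n-\lambdastar)/D_n$ coefficient of $g_0$ vanishing handles the $|\lambda-\lambdastar|$ contribution; the weight-discrepancy coefficients vanishing handles the $|\sum_{i\in\mathcal{A}_j}\lambda p_i - \lambdastar p_j^*|$ terms; for the $|\mathcal{A}_j|=1$ cells, the order-$1$ coefficients of $\partial f/\partial h_1$, $\partial f/\partial h_1^{(\text{components})}$, $\partial f/\partial \sigma$ vanishing gives that $\sum p_i^n(\|\Delta a_{ij}\| + |\Delta b_{ij}| + |\Delta\sigma_{ij}|)/D_n\to 0$; and for the over-fitted cells $|\mathcal{A}_j|>1$, this is exactly where the system of polynomial equations~\eqref{definition:polynomial_equation} enters: the vanishing of all coefficients up to order $\bar r(|\mathcal{A}_j|)$ forces a non-trivial solution of that system with $s_l$ corresponding to $\sqrt{p_i^n}$-type quantities and $t_{1l},t_{2l}$ to rescaled $\Delta b_{ij},\Delta\sigma_{ij}$, contradicting the minimality defining $\bar r$, unless the normalized $\|\Delta a_{ij}\|^2 + |\Delta b_{ij}|^{\bar r(|\mathcal{A}_j|)} + |\Delta\sigma_{ij}|^{\bar r(|\mathcal{A}_j|)/2}$ contributions also vanish. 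Summing these, $\sum(\cdots)/D_n \to 0$, which contradicts $D_n/D_n = 1$ (after checking $|\lambda_n|+\lambdastar$ stays bounded below, valid since $\lambdastar>0$).

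The main obstacle I anticipate is the careful bookkeeping in the Taylor expansion for over-fitted cells: correctly identifying which derivative terms survive after applying~\eqref{eq:PDE_interaction}, matching the grouped coefficients to the polynomial system~\eqref{definition:polynomial_equation} with the right change of variables and scaling (so that the exponents $2$, $\bar r(|\mathcal{A}_j|)$, $\bar r(|\mathcal{A}_j|)/2$ on $\|\Delta a_{ij}\|,|\Delta b_{ij}|,|\Delta\sigma_{ij}|$ emerge naturally from the homogeneity of the system), and ensuring the Taylor remainder terms are genuinely negligible relative to $D_n$. A secondary subtlety is that the presence of $g_0$ must not interfere: the distinguishability condition guarantees $g_0$ is linearly independent of the relevant derivative family, so its coefficient decouples cleanly, but one must verify the choice of the vector $r = (r_1,\ldots,r_{k_*})$ with $r_j = \bar r(|\mathcal{A}_j|)$ is admissible — that is, the distinguishability hypothesis is invoked with precisely this $r$.
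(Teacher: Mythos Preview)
Your proposal is correct and follows the paper's approach closely: local--global split, compactness plus identifiability (Proposition~\ref{prop:identifiable}) for the global part, and for the local part a contradiction argument via Taylor expansion with the polynomial system~\eqref{definition:polynomial_equation} governing the over-fitted cells.

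One technical point deserves care, however. After dividing the density difference by $D_n$, the coefficients of the independent elements are \emph{not} a priori bounded --- for instance, in an over-fitted cell the first-order coefficient $\sum_{i\in\mathcal{A}_j}\lambda_n p_i^n\,\Delta b_{ij}^n / D_n$ can diverge, since $D_n$ only controls $|\Delta b_{ij}^n|^{\bar r(|\mathcal{A}_j|)}$ with $\bar r(|\mathcal{A}_j|)\geq 4$ --- so you cannot directly ``pass to a subsequence so all normalized coefficients converge.'' The paper resolves this by reversing your order: first (its Step~2) \emph{assume} all coefficients$/D_n \to 0$ and run the polynomial-system argument to a contradiction, concluding that at least one coefficient$/D_n$ does not vanish; then (its Step~3) let $m_n$ denote the maximum of these ratios, divide instead by $m_n D_n$ so that all normalized coefficients are bounded with at least one of modulus~$1$, and apply Fatou plus the distinguishability condition to force every limiting coefficient to zero --- contradicting the modulus-$1$ survivor. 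Your overall structure and the identification of where $\bar r(|\mathcal{A}_j|)$ enters are right; just route the Fatou step through the $m_n D_n$ normalization rather than $D_n$ alone.
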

Proof of Theorem~\ref{theorem:distinguishable_dependent} is in Appendix~\ref{appendix:distinguishable_dependent}. 
When $\lambdastar\in(0,1]$, the result that $D_1((\hat{\lambda}_n,\widehat{G}_n),(\lambdastar, G_*))$ vanishes at a rate of order $\Otilde(n^{-1/2})$ implies the following observations (which are illustrated in Figure~\ref{fig:Voronoi_cells} as well): 

\textbf{(i)} Firstly, for any $j\in[k_*]$ such that $|\mathcal{A}^n_j|=1$, where $\mathcal{A}^n_j=\mathcal{A}_j(\widehat{G}_n)$, it follows that all the true parameters $\aj,\bj,\sj$, which are fitted by a single component, share the same parametric rate of order $\Otilde(n^{-1/2})$. On the other hand, [Theorem 2, \cite{ho2022gaussian}], which used the generalized Wasserstein as a loss function, indicated that the rates for estimating those parameters were of orders $\Otilde(n^{-1/4})$, $\Otilde(n^{-1/2\bar{r}(k-k_*+1)})$ and $\Otilde(n^{-1/\bar{r}(k-k_*+1)})$, respectively. When $k-k_*+1=3$, these rates become $\Otilde(n^{-1/4})$, $\Otilde(n^{-1/12})$ and $\Otilde(n^{-1/6})$, which are substantially slower than our parametric rate. This highlights the benefits of using the Voronoi loss function over the generalized Wasserstein in the convergence analysis of the MLE.

\textbf{(ii)} Secondly, for any $j\in[k_*]$ such that $|\mathcal{A}^n_j|>1$, the rates for estimating true parameters $\aj,\bj,\sj$, which are fitted by more than one component, are not uniform. More specifically, the estimation rates for $\bj$ and $\sj$ are significantly slow, standing at orders $\Otilde(n^{-1/2\Bar{r}(|\mathcal{A}^n_j|)})$ and $\Otilde(n^{-1/\Bar{r}(|\mathcal{A}^n_j|)})$, respectively. This is due to the interaction between them via the PDE in equation~\eqref{eq:PDE_interaction}. By contrast, since $\aj$ does not interact with those parameters, their estimation rates are much faster of order $\Otilde(n^{-1/4})$. 

\textbf{(iii)} Finally, we point out a scenario when true parameters $b^*_j,\sigma^*_j$ attain the slowest estimation rates. In particular, assume that the MLE $\widehat{G}_n$ has $\hat{k}_n$ components. When $\widehat{G}_n$ converges to $G_*$, each Voronoi cell $\mathcal{A}^n_j$ contains at least one element for any $j\in[k_*]$, which implies that $|\mathcal{A}^n_j|\leq \hat{k}_n-k_*+1$. The equality is achieved if, for example, $|\mathcal{A}^n_1|=\hat{k}_n-k_*+1$ and $|\mathcal{A}^n_j|=1$ for any $j\in[k_*]\setminus\{1\}$. Then, the rates for estimating $b^*_1,\sigma^*_1$ reach the slowest orders of $\Otilde(n^{-1/2\Bar{r}(|\mathcal{A}^n_j|)})$ and $\Otilde(n^{-1/\Bar{r}(|\mathcal{A}^n_j|)})$, respectively, which match those of their counterparts in [Theorem 2, \cite{ho2022gaussian}]. Conversely, we achieve the fastest estimation rates for other parameters $b^*_j,\sigma^*_j$, which are of order $\Otilde(n^{-1/2})$. 


%
\subsection{Non-distinguishable Settings}
\label{sec:non-distinguishable}
Under this setting, the mixture part $p_{G_*}$ is not distinguishable from the function $g_0$, that is, the set in equation~\eqref{eq:independent_set} is no longer linearly independent for almost surely $X$. There are several scenarios under which this phenomenon occurs, and one of them is when $g_0$ being a Gaussian mixture of $k_0$ experts as follows:
\begin{align}
\label{eq:g_0_partial_form}
g_0(Y|X)&=p_{G_0}(Y|X) \nonumber\\&:=\sum_{j=1}^{k_0}p^0_jf(Y|(a^0_j)^{\top}X+b^0_j,\sigma^0_j),
\end{align}
where $G_0:=\sum_{i=1}^{k_0}p^0_i\delta_{(a^0_i,b^0_i,\sigma^0_i)}\in\mathcal{E}_{k_0}(\Theta)$ with $k_0\in[k_*]$ such that $G_0$ and $G_*$ share some common atoms. Without loss of generality, we assume that $(a^0_i,b^0_i,\sigma^0_i)=(\aj,\bj,\sj)$ for any $j\in[\kbar]$, where $\kbar\in[k_0]$. We consider this choice of function $g_0$ as we can control the level of the interaction between $g_{0}$ and $p_{G_{\ast}}$ explicitly via the number of overlapped components $\kbar$ of two mixing measures $G_0$ and $G_*$. In particular, we consider two separate regimes of the value of $\kbar$. The first one is when $1\leq \Bar{k}<k_0$, which is referred to as the \emph{partial overlap} regime, and the second one is when $\Bar{k}=k_0$, which is termed the \emph{full overlap} regime. Due to the space limit, we will present only results for the partial overlap regime in this section, while those for the full overlap regime are deferred to Appendix~\ref{appendix:full_overlap}. Furthermore, similar to Section~\ref{sec:distinguishable}, we will also focus on only the scenario when $\lambdastar\in(0,1]$, and relegate the details for the scenario when $\lambdastar=0$ to Appendix~\ref{appendix:additional_results}.

\textbf{Partial overlap.} 
Given the formulation of function $g_0$ in equation~\eqref{eq:g_0_partial_form}, the deviated Gaussian mixture of experts is no longer identifiable, that is, the equation $p_{\lambda,G}(X,Y)=p_{\lambdastar,G_*}(X,Y)$ for almost surely $(X,Y)$ does not merely lead to $(\lambda,G)\equiv (\lambdastar,G_*)$ anymore, which causes a significant issue compared to the distinguishable settings. Therefore, it is necessary to find all the solutions $(\lambda,G)$ of the equation $p_{\lambda,G}(X,Y)=p_{\lambdastar,G_*}(X,Y)$ for almost surely $(X,Y)$. For that purpose, let us consider a new mixing measure. In particular, for any mixing proportion $\lambda>\lambdastar$, we define
\begin{align}
\label{eq:G_bar_definition}
    \overline{G}_*(\lambda):=\left(1-\dfrac{\lambdastar}{\lambda}\right)G_0 + \dfrac{\lambdastar}{\lambda}G_*,
\end{align}
as a mixing measure having a total of $k_*+k_0-\Bar{k}$ components in $\Theta$. Note that the previous equation can be rewritten as $\lambda[p_{G}(X,Y)-p_{\overline{G}_*(\lambda)}(X,Y)]=0$ for almost surely $(X,Y)$. Thus, when $k\geq k_*+k_0-\Bar{k}$ and $\lambda>\lambdastar$, we admit $(\lambda, G)\equiv(\lambda, \overline{G}_*(\lambda))$ as a solution. On the other hand, when either $k< k_*+k_0-\kbar$ or $\lambda\leq\lambdastar$, we obtain an obvious solution $(\lambda,G)\equiv(\lambdastar,G_*)$.
For those reasons, we have to design a Voronoi loss function which is able to capture the aforementioned solutions.

\textbf{Voronoi loss function.} To facilitate our presentation, we assume that $\overline{G}_*(\lambda)=\sum_{j=1}^{k_*+k_0-\Bar{k}}p'_j(\lambda)\delta_{(a'_{j},b'_{j},\sigma'_{j})}$. When $k\geq k_*+k_0-\kbar$ and $\lambda>\lambdastar$, we introduce another set of Voronoi cells $\mathcal{B}_j\equiv\mathcal{B}_j(G)$ w.r.t an arbitrary mixing measure $G\in\mathcal{O}_k(\Theta)$ generated by the support of $\overline{G}_*(\lambda)$, denoted by $\theta'_j:=(a'_j,b'_j,\sigma'_j)$, as follows:
\begin{align}
    \label{eq:Voronoi_cells_B}
    \mathcal{B}_j=\{i\in[k]:\|\theta_i-\theta'_j\|\leq\|\theta_i-\theta'_{\ell}\|,\ \forall \ell\neq j\},
\end{align}
for any $j\in[\kstar+k_0-\kbar]$, where $\theta_i:=(a_i,b_i,\sigma_i)$. Let us denote $\Delta' a_{ij}:=a_i-a'_j$, $\Delta' b_{ij}:=b_i-b'_j$ and $\Delta' \sigma_{ij}:=\sigma_i-\sigma'_j$, then the discrepancy between two mixing measures $G$ and $\overline{G}_*(\lambda)$ can be characterized by
\begin{align*}
    &D_3(G,\overline{G}_*(\lambda)) \nonumber\\&:=\sum_{j:|\mathcal{B}_j|=1}\sum_{i\in \mathcal{B}_j}p_i\left(\|\Delta' a_{ij}\|+|\Delta' b_{ij}|+|\Delta' \sigma_{ij}|\right)\nonumber\\
    &+\sum_{j:|\mathcal{B}_j|>1}\sum_{i\in \mathcal{B}_j}p_i\left(\|\Delta' a_{ij}\|^2+|\Delta' b_{ij}|^{\brbj} \right.\nonumber\\&+\left.|\Delta' \sigma_{ij}|^{\brbj/2}\right)+\sum_{j=1}^{k_*+k_0-\Bar{k}}\left|\sum_{i\in\mathcal{B}_j} p_i- p'_j(\lambda)\right|.
\end{align*}
When either $k<k_*+k_0-\kbar$ or $\lambda\leq\lambdastar$, we reuse the loss function $D_1$ in equation~\eqref{eq:D_distinguishable_dependent} to capture the solution $(\lambda,G)\equiv(\lambdastar,G_*)$. Thus, we combine the two loss functions $D_1$ and $D_3$ together to construct the following general Voronoi loss function used for any settings of $k$ and $\lambda$ under the partial overlap regime: 
\begin{align}
    \label{eq:D_partial_dependent}
    &D_2((\lambda, G),(\lambdastar, G_*))\nonumber\\
    &:=\begin{cases}
        D_1((\lambda,G),(\lambda^*,G_*)), \hspace{2.4em}\forall k< k_*+k_0-\bar{k};\\
        \textbf{}\\
        \ones_{\{\lambda\leq\lambdastar\}}D_1((\lambda,G),(\lambdastar, \Gstar)) \\+\ones_{\{\lambda>\lambdastar\}}D_3({G},\bGstar), \hspace{1em}\forall k\geq \kstar+\kzero-\kbar.
    \end{cases}
\end{align}
Now, we are ready to present the main result for the partial overlap regime in the following theorem.
\begin{theorem}
\label{theorem:partial_dependent}
Assume that $\lambda^*\in(0,1]$ is unknown, and let $g_{0}$ take the form in equation~\eqref{eq:g_0_partial_form} with $1\leq\bar{k}<k_0$. Then, we obtain that $V(p_{\lambda,G},p_{\lambdastar,G_*})\gtrsim D_2((\lambda,G),(\lambdastar,G_*))$ for any $(\lambda,G)\in[0,1]\times\Ocal_k(\Theta)$. This bound together with Proposition~\ref{prop:mle_estimation} indicate that  
\begin{align*}
    \mathbb{P}(D_2((\hat{\lambda}_n, \widehat{G}_n),(\lambdastar,\Gstar))&>C_2\sqrt{\log (n)/n}) \lesssim n^{-c_2},
\end{align*}
where $C_2>0$ is a constant depending on $g_0,\lambdastar,\Gstar,\Theta$, while the constant $c_2>0$ depends only on $\Theta$.
\end{theorem}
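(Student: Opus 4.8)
### Proof proposal for Theorem~\ref{theorem:partial_dependent}

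\textbf{Overall strategy.} The key deliverable is the Total Variation lower bound $V(p_{\lambda,G},p_{\lambdastar,G_*})\gtrsim D_2((\lambda,G),(\lambdastar,G_*))$; once this is established, the probabilistic conclusion follows by combining it with the density estimation rate of Proposition~\ref{prop:mle_estimation} exactly as in Theorem~\ref{theorem:distinguishable_dependent}. I would split the proof of the lower bound into the two regimes appearing in the definition~\eqref{eq:D_partial_dependent} of $D_2$. When $k< k_*+k_0-\bar{k}$, the bound is $V(p_{\lambda,G},p_{\lambdastar,G_*})\gtrsim D_1((\lambda,G),(\lambdastar,G_*))$, which is precisely the conclusion of Theorem~\ref{theorem:distinguishable_dependent} restricted to the mixing measures that $\Gcal_k(\Theta)$ can represent; I would argue that the distinguishability-style Taylor argument goes through verbatim because, with fewer than $k_*+k_0-\bar k$ fitted atoms, $G$ cannot approach the alternative solution $\bGstar$, so the only limit point is $(\lambdastar,G_*)$ and the local analysis around $G_*$ is unchanged. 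The substantive work is the second regime, $k\ge k_*+k_0-\bar k$ and $\lambda>\lambdastar$, where I must show $V(p_{\lambda,G},p_{\lambdastar,G_*})\gtrsim D_3(G,\bGstar)$, i.e. that the density discrepancy controls the Voronoi distance from $G$ to the \emph{shifted} target $\bGstar$ defined in~\eqref{eq:G_bar_definition}.

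\textbf{Reduction to a local inequality.} I would first reduce the global bound to a local one by the standard contradiction/compactness scheme used throughout this literature: if the bound fails, there is a sequence $(\lambda_n,G_n)$ with $V(p_{\lambda_n,G_n},p_{\lambdastar,G_*})/D_3(G_n,\bGstar(\lambda_n))\to 0$; by compactness of $\Theta$ and $[0,1]$ one may assume $\lambda_n\to\bar\lambda$ and $G_n\to$ some limit, and the key observation is that $\lambda_n[p_{G_n}-p_{\bGstar(\lambda_n)}]\to 0$ forces (via identifiability of the ordinary Gaussian mixture of experts, the $\lambda^*=1$ case) the limit of $G_n$ to equal $\bGstar(\bar\lambda)$. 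This is exactly where the construction~\eqref{eq:G_bar_definition} does its job — it absorbs $g_0=p_{G_0}$ into a single mixture so that the problem becomes a perturbation analysis of an ordinary $(k_*+k_0-\bar k)$-component Gaussian mixture of experts around $\bGstar$. One must handle the $\bar\lambda=\lambdastar$ boundary case separately (there both $\Gstar$ and $\bGstar(\lambdastar)=\Gstar$ coincide, so no new behaviour), and one must check that $\bGstar(\lambda_n)\to\bGstar(\bar\lambda)$ continuously in $\lambda_n$, which is immediate from the explicit convex-combination formula provided $\bar\lambda>0$.

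\textbf{The Taylor-expansion core.} Having localized, I would write $p_{\lambda_n,G_n}(Y|X)-p_{\lambdastar,G_*}(Y|X)=\lambda_n[p_{G_n}(Y|X)-p_{\bGstar(\lambda_n)}(Y|X)]$ and Taylor-expand each $f(Y|a_i^\top X+b_i,\sigma_i)$ with $i\in\mathcal{B}_j$ about the atom $\theta'_j=(a'_j,b'_j,\sigma'_j)$ of $\bGstar(\lambda_n)$, up to order $1$ for singleton cells and up to order $\bar r(|\mathcal{B}_j|)$ for the over-fitted cells, grouping the linearly dependent derivative terms created by the PDE~\eqref{eq:PDE_interaction} $\partial^2 f/\partial b^2 = 2\,\partial f/\partial\sigma$. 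This produces a linear combination of the derivatives $\partial^{\ell_1+\ell_2}f/\partial h_1^{\ell_1}\partial h_2^{\ell_2}$ evaluated at the atoms of $\bGstar$ with coefficients that are polynomials in the scaled parameter discrepancies $\Delta' a_{ij},\Delta' b_{ij},\Delta'\sigma_{ij}$ divided by $D_3(G_n,\bGstar(\lambda_n))$. If all these coefficients vanished in the limit, then — because the atoms of $\bGstar$ are pairwise distinct and the corresponding derivative family is linearly independent (the ordinary Gaussian-mixture-of-experts strong-identifiability fact from \cite{ho2022gaussian,ho2016weakly}) — I would obtain that the normalized polynomial-in-discrepancies coefficients all tend to zero, which for the over-fitted cells is precisely a non-trivial solution to the polynomial system~\eqref{definition:polynomial_equation} of size $|\mathcal{B}_j|$ and order $\bar r(|\mathcal{B}_j|)$, contradicting the minimality defining $\bar r$; for singleton cells it contradicts that the normalized coefficients sum to a quantity bounded below by $1$. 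The main obstacle I anticipate is \textbf{bookkeeping the interaction between the two "sources" of atoms in $\bGstar$}: its $k_*+k_0-\bar k$ atoms consist of the $\bar k$ overlapped ones (with $\lambda$-dependent weights $p'_j(\lambda)$ that must be tracked so the weight-matching term $\sum_j|\sum_{i\in\mathcal{B}_j}p_i-p'_j(\lambda)|$ is handled), the $k_0-\bar k$ atoms coming purely from $G_0$, and the $k_*-\bar k$ atoms coming purely from $G_*$ — one has to verify that the linear-independence / strong-identifiability input still applies to this heterogeneous but finite collection of distinct atoms, and that $|\mathcal{B}_j|\le k-(k_*+k_0-\bar k)+1$ so the relevant $\bar r$ values are the ones claimed.

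\textbf{Conclusion.} Combining the two regimes gives $V(p_{\lambda,G},p_{\lambdastar,G_*})\gtrsim D_2((\lambda,G),(\lambdastar,G_*))$ uniformly over $[0,1]\times\Gcal_k(\Theta)$. Plugging in $(\lambda,G)=(\hat\lambda_n,\widehat G_n)$ and invoking $\mathbb{P}(V(p_{\hat\lambda_n,\widehat G_n},p_{\lambdastar,G_*})>C\sqrt{\log n/n})\lesssim n^{-c}$ from Proposition~\ref{prop:mle_estimation} yields the stated high-probability bound on $D_2$, with constants inheriting their dependence from that proposition and from the local-inequality constant. The per-parameter rates advertised in Table~\ref{table:parameter_rates} then read off from the definition of $D_2$: exact-fitted atoms and $(\lambda^*,G_*)$ at $\Otilde(n^{-1/2})$, and, in the $\lambda>\lambdastar$ branch, the over-fitted atoms of $\bGstar$ at $\Otilde(n^{-1/4})$ for $a$ and $\Otilde(n^{-1/2\bar r(|\mathcal{B}_j|)}),\Otilde(n^{-1/\bar r(|\mathcal{B}_j|)})$ for $b,\sigma$.
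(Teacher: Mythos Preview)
Your treatment of the $D_3$ branch (the case $k\ge k_*+k_0-\bar k$ and $\lambda>\lambdastar$) matches the paper's approach closely: rewrite $p_{\lambda_n,G_n}-p_{\lambdastar,G_*}=\lambda_n[p_{G_n}-p_{\bGstarn}]$, Taylor-expand around the atoms of $\bGstarn$ with Voronoi cells $\mathcal{B}_j$, and derive a contradiction via the polynomial system~\eqref{definition:polynomial_equation}. That part is fine.

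The gap is in the $D_1$ branch. You write that when $D_2$ reduces to $D_1$ ``the distinguishability-style Taylor argument goes through verbatim'' because $G$ cannot approach $\bGstar$. But Theorem~\ref{theorem:distinguishable_dependent}'s proof does \emph{not} go through verbatim: its Step~3 uses that the set $\mathcal{H}_1$ in~\eqref{eq:distinguishable_dependent_2}, which contains $g_0(Y|X)$ as a single opaque element, is linearly independent---and that is exactly the distinguishability hypothesis, which fails here. Concretely, since $g_0=p_{G_0}$ shares its first $\bar k$ atoms with $G_*$, the term $(\lambdastar-\lambda_n)g_0(Y|X)$ is a linear combination of $f(Y|(a_j^*)^\top X+b_j^*,\sigma_j^*)$ for $j\le\bar k$ plus extra Gaussians, so the Fatou/linear-independence step collapses.

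The paper fixes this by a different decomposition: in the $\lambda_n\le\lambdastar$ case (and likewise for $\lambda_n>\lambdastar$ with $k<k_*+k_0-\bar k$) it splits $g_0$ into its Gaussian components, absorbs the overlapping atoms $j\le\bar k$ into the weight term via modified coefficients $\bar p_j^*(\lambda_n)=\lambdastar p_j^*+(\lambda_n-\lambdastar)p_j^0$, and lets only the non-overlapping atoms of $G_0$ (indices $j'\in[k_0]\setminus[\bar k]$) survive as a separate term $C_n$. The resulting linearly independent set is
\[
\mathcal{H}_2=\Big\{X^{\alpha_1}\tfrac{\partial^{|\alpha_1|+\ell}f}{\partial h_1^{|\alpha_1|+\ell}}(Y|(a_j^*)^\top X+b_j^*,\sigma_j^*),\ f(Y|(a^0_{j'})^\top X+b^0_{j'},\sigma^0_{j'})\Big\},
\]
whose independence is argued directly from the pairwise distinctness of the $k_*+(k_0-\bar k)$ atoms involved. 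Without this modification the local inequality for the $D_1$ branch does not close. Your global-step intuition (that $G$ cannot equal $\bGstar$ when $k<k_*+k_0-\bar k$, hence the only solution of $p_{\lambda',G'}=p_{\lambdastar,G_*}$ is $(\lambdastar,G_*)$) is correct and is what the paper uses, but it addresses only the global part; the local Taylor step still needs the $\mathcal{H}_2$ decomposition.
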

Proof of Theorem~\ref{theorem:partial_dependent} is in Appendix~\ref{appendix:partial_dependent}. A few comments regarding Theorem~\ref{theorem:partial_dependent} are in order. Firstly, when either $k< \kstar+k_0-\kbar$ or $\hat{\lambda}_n\leq\lambdastar$, the loss function $D_2$ reduces to $D_1$. Therefore, the convergence rates of parameter estimation remain the same as those in Theorem~\ref{theorem:distinguishable_dependent}. Secondly, when $k\geq k_*+k_0-\Bar{k}$ and $\hat{\lambda}_n>\lambdastar$, the loss function $D_2$ turns into $D_3$. As a result, for true components $(a'_j,b'_j,\sigma'_j)$ which are approximated by more than one fitted components, the rates of estimating $b'_j$ and $\sigma'_j$ are reported to be $\Otilde(n^{-1/2\bar{r}(|\mathcal{B}^n_j|)})$ and $\Otilde(n^{-1/\bar{r}(|\mathcal{B}^n_j|)})$, respectively, where $\mathcal{B}^n_j:=\mathcal{B}_j(\widehat{G}_n)$. Meanwhile, those for $a'_j$ are of order $\Otilde(n^{-1/4})$. However, for true components $(a'_j,b'_j,\sigma'_j)$ approximated by a single fitted component, their estimation rates are uniform, standing at $\Otilde(n^{-1/2})$. This again confirms the ability to capture distinct estimation rates accurately of the proposed Voronoi loss functions in comparison with the generalized Wasserstein.

\subsection{Proof Sketch}
\label{sec:proof_sketch}
In this section, we provide a generic proof sketch for Theorems~\ref{theorem:distinguishable_dependent} and \ref{theorem:partial_dependent}, and distinguish our proof techniques from those used in the most related work \cite{ho2022gaussian}. More details of these proofs are deferred to Appendix~\ref{appendix:missing_proofs}. For simplicity, the metric $\mathcal{D}$ used in this sketch is implicitly understood as one among the Voronoi loss functions $D_1$ and $D_2$ in Sections~\ref{sec:distinguishable} and~\ref{sec:non-distinguishable}. Generally, our goal is to establish the Total Variation lower bound $V(p_{\lambda,G},p_{\lambdastar,\Gstar})\gtrsim \mathcal{D}((\lambda,G),(\lambdastar,\Gstar))$ for any $(\lambda,G)\in[0,1]\times\Ocal_{k}(\Theta)$, which together with Proposition~\ref{prop:mle_estimation} give us our desired conclusions in those theorems. 

\begin{figure*}[t]
\begin{center}
    
  \begin{tabular}{cc}
  \widgraph{0.38\textwidth}{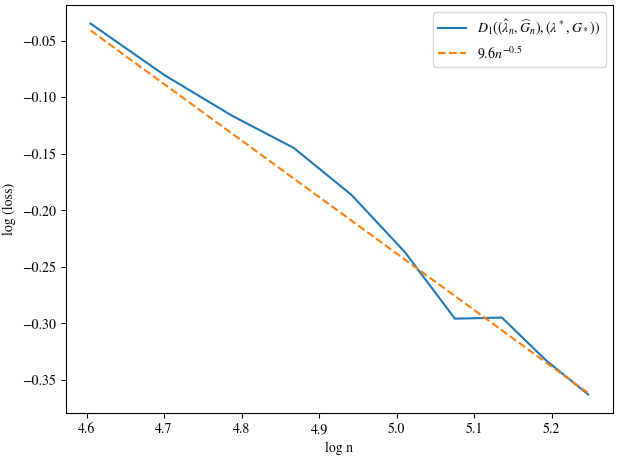} 
&
\widgraph{0.38\textwidth}{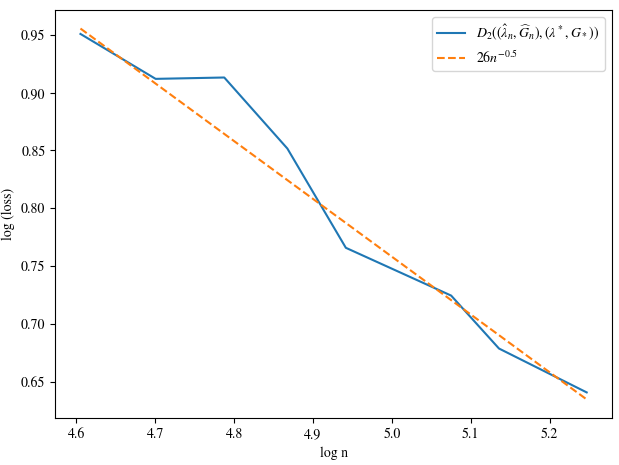} \\
(a) Distinguishable setting &(b) Non-distinguishable setting
  \end{tabular}
  \end{center}
  \vskip -0.1in
  \caption{
  \footnotesize{Convergence rate of the maximum likelihood estimation $(\hat{\lambda}_n,\widehat{G}_n)$ under the Voronoi loss functions. 
}
} 
  \label{fig:dis_estimation}
   \vskip -0.2in
\end{figure*}

\textbf{Local inequality.} First, we prove the following local bound by contradiction in three main steps:
\begin{align}
    \label{eq:local_sketch}
    &\lim_{\varepsilon\to 0}\inf_{\substack{(\lambda,G)\in[0,1]\times\Ocal_{k}(\Theta):\\ \mathcal{D}((\lambda,G),(\lambdastar,\Gstar))\leq\varepsilon}} \frac{V(p_{\lambda,G},p_{\lambdastar,\Gstar})}{\mathcal{D}((\lambda,G),(\lambdastar,\Gstar))}>0.
\end{align}
\textbf{Step 1.} Assume that the local inequality does not hold, then there exists a sequence $(\lambdan,G_n)$ such that both $\mathcal{D}_n:=\mathcal{D}((\lambda_n,G_n),(\lambdastar,\Gstar))$ and $V(p_{\lambdan,G_n},p_{\lambdastar,\Gstar})/\mathcal{D}_n$ vanish as $n\to\infty$. Different from \cite{ho2022gaussian}, we need to invoke the Taylor expansion twice in this step due to the sophisticated structure of our metric $\mathcal{D}$. Firstly, for each $j\in[k_*]:|\mathcal{A}_j|=1$, we apply the first-order Taylor expansion to the quantity $U_n:=[p_{\lambdan,G_n}(X,Y)-p_{\lambdastar,G_*}(X,Y)]/\mathcal{D}_n$, whereas for each $j\in[k_*]:|\mathcal{A}_j|>1$, we use the Taylor expansion up to order $r_j$ that will be chosen later in Step 2. Then, we show that $U_n$ can be written as a linear combination of elements of some linearly independent set $\mathcal{H}$.

\textbf{Step 2.} We attempt to show by contradiction that at least one among the coefficients in that combination does not converge to zero. Assume that all of them go to zero. Then, by some algebraic transformations of those limits, we arrive at the following system of polynomial equations:
\begin{align*}
    \sum_{i\in\mathcal{A}_j}\sum_{\substack{n_1,n_2\in\mathbb{N},\\ n_1+2n_2=\beta}}\dfrac{s^2_{i}~t^{n_1}_{1i}~t^{n_2}_{2i}}{n_1!n_2!}=0, \quad \beta=1,2,\ldots,r_j.
\end{align*}
By construction, this system necessarily has at least one non-trivial solution. Therefore, in order to point out a contradiction, we set $r_j=\Bar{r}(|\mathcal{A}_j|)$ so that the above system does not have any non-trivial solutions.

\textbf{Step 3.} On the other hand, by means of Fatou's lemma and the limit $V(p_{\lambdan,G_n},p_{\lambdastar,\Gstar})/\mathcal{D}_n\to 0$, we show that $U_n\to 0$ for almost surely $(X,Y)$. Since $\mathcal{H}$ is a linearly independent set, we deduce that all the coefficients of elements of $\mathcal{H}$ in the representation of $U_n$ vanish as $n\to\infty$, which contradicts the result in Step 2. Hence, we obtain the local inequality in equation~\eqref{eq:local_sketch}. 

\textbf{Global inequality.} Given the above local inequality, it suffices to prove the following global bound:
\begin{align}
    \label{eq:global_sketch}
    &\inf_{\substack{(\lambda,G)\in[0,1]\times\Ocal_{k}(\Theta):\\ \mathcal{D}((\lambda,G),(\lambdastar,\Gstar))>\varepsilon}}\frac{V(p_{\lambda,G},p_{\lambdastar,\Gstar})}{\mathcal{D}((\lambda,G),(\lambdastar,\Gstar))}>0.
\end{align}
A key step for establishing this bound is to solve the equation $p_{\lambda,G}(X,Y)=p_{\lambdastar,G_*}(X,Y)$ for almost surely $(X,Y)$. If $p_{G_*}$ is distinguishable from $g_0$, then this equation has the unique solution $(\lambda,G)\equiv(\lambdastar,G_*)$. However, this property does not hold when the distinguishability condition fails, which induces a huge challenge compared to \cite{ho2022gaussian}. Thus, we solve the previous equation under the partial overlap regime in Section~\ref{sec:distinguishable}. Hence, the proof sketch is completed.

\section{SIMULATION STUDY}
\label{sec:experiments}
In this section, we carry out a simulation study to empirically verify our theoretical results regarding the convergence rate of the MLE $(\hat{\lambda}_n,\widehat{G}_n)$ under both the distinguishable and non-distinguishable settings.

\textbf{Distinguishable setting.} We first generate the covariates $X_1,\ldots, X_n \overset{i.i.d}{\sim} \mathcal{N}(0,1)$, and then set
\begin{align*}
    g_0(Y|X)=\sum_{j=1}^{2}p_j^0 f(Y|a^0_{j}X +b^0_{j},\sigma^0_{j}),
\end{align*}
where $(a^0_1,b^0_1,\sigma^0_1)=(.2,.1,.01)$, $(a^0_2,b^0_2,\sigma^0_2)=(.1,0,.01)$ and  
$p^0_{1}=p^0_{2}=\frac{1}{2}$. Next, we consider the following true conditional density function: $$p_{\lambda^*,G_*}(Y|X):=(1 - \lambda^{*}) g_0(Y| X) + \lambda^{*} f(Y|a^*X+b^*,\sigma^*),$$ in which $\lambda^*= 0.5$ and $(a^*,b^*,\sigma^*)=(1,1,1)$. Here, we have $k_0=2>1=k_*$, therefore, the distinguishability condition is satisfied according to Example~\ref{eq:example}. Subsequently, we draw a sample $Y_1,Y_2\ldots,Y_n$ of size $n\in \{100,110,120,\ldots,200\}$ from $p_{\lambda^*,G_*}(Y|X)$. Then, we overfit the true model by a deviated Gaussian mixture of $k=2$ experts, and run the EM algorithm \cite{dempster_maximum_1977} in 1000 iterations to find the estimators $\hat{\lambda}_n$, $(\hat{p}^n_1,\hat{a}^{n}_1,\hat{b}^{n}_1,\hat{\sigma}^{n}_1)$ and $(\hat{p}^n_2,\hat{a}^{n}_2,\hat{b}^{n}_2,\hat{\sigma}^{n}_2)$. Finally, we compute the discrepancy $D_1((\hat{\lambda}_n,\widehat{G}_n),(\lambdastar,G_*))$, where $\widehat{G}_n=\sum_{i=1}^{2}\hat{p}^n_i\delta_{(\hat{a}^n_i,\hat{b}^n_i,\hat{\sigma}^n_i)}$ and $G_*=\delta_{(a^*,b^*,\sigma^*)}$,
and plot its values in Figure~\ref{fig:dis_estimation}(a). From the figure, it is obvious that the convergence rate of the MLE $(\hat{\lambda}_n,\widehat{G}_n)$ under the Voronoi loss $D_1$ is at the order of $\Otilde(n^{-1/2})$, which is consistent with our theoretical result in Theorem~\ref{theorem:distinguishable_dependent}.

\textbf{Non-distinguishable setting.} For this setting, we also generate $X_1,\ldots, X_n \overset{i.i.d}{\sim} \mathcal{N}(0,1)$, but set
\begin{align*}
    g_0(Y|X)=\sum_{j=1}^{3}p_j^0 f(Y|a^0_{j}X +b^0_{j},\sigma^0_{j}),
\end{align*}
with $(a^0_1,b^0_1,\sigma^0_1)=(.2,.1,.01)$, $(a^0_2,b^0_2,\sigma^0_2)=(.1,.1,.01)$, $(a^0_3,b^0_3,\sigma^0_3)=(.21,.11,.01215)$
and  $p^0_{1}=p^0_{2}=p_0^3=\frac{1}{3}$. Next, we consider the following true conditional density
\begin{align*}
    p_{\lambda^*,G_*}(Y|X):=&~(1 - \lambda^{*}) g_0(Y| X) \\
    &+ \lambda^{*} \sum_{j=1}^3 p_j^*f(Y|a^*_j X+b_1^*,\sigma_j^*),
\end{align*}
with $(a^*_1,b^*_1,\sigma^*_1)=(.2,.1,.01)$, $(a^*_2,b^*_2,\sigma^*_2)=(.1,.4,.25)$, $(a^*_3,b^*_3,\sigma^*_3)=(1.1,.3,.25)$
$p^*_{1}=p^*_{2}=p^*_{3}=\frac{1}{3}$ and $\lambda^*= 0.5$. Here, each of $G_0$ and $G_*$ has 3 components and they share one common component among them, specifically $(a^0_1,b^0_1,\sigma^0_1)=(a^*_1,b^*_1,\sigma^*_1)$. Thus, $p_{G_*}$ is not distinguishable from $g_0$, and this setting belongs to the partial overlap regime in Section~\ref{sec:non-distinguishable} with $\kbar=1$. We then sample $Y_1,\ldots,Y_n$ with $n\in \{100,110,120,\ldots,200\}$ from $p_{\lambda^*,G_*}(Y|X)$. Next, we overfit the true model by a deviated Gaussian of $k=4$ experts and use the EM algorithm to find the estimators $\hat{\lambda}_n$, $\hat{p}^n_{i}$, $\hat{a}^n_{i}$, $\hat{b}^n_{i}$ and $\hat{\sigma}^n_{i}$ for $i\in[4]$ in 1000 iterations. After that, we calculate the distance $D_2((\hat{\lambda}_n, \widehat{G}_n),(\lambdastar, G_*))$, and plot its values in Figure~\ref{fig:dis_estimation}(b). It can be seen from the figure that the convergence rate of the MLE $(\hat{\lambda}_n,\widehat{G}_n)$ is at the order of $\Otilde(n^{-1/2})$, which aligns with our claim in Theorem~\ref{theorem:partial_dependent}.
\section{CONCLUSION}
\label{sec:conclusion}
In this paper, we characterize the convergence behaviors of maximum likelihood estimation in the deviated Gaussian mixture of experts, which is motivated by the goodness-of-fit test. We first show that the convergence rate of density estimation to the true density is parametric on the sample size. Regarding the parameter estimation problem, we consider two separate settings based on the level of distinguishability between the known function $g_{0}(Y|X)$ and the mixture of experts part $p_{G_*}(Y|X)$ in the proposed model. In each setting, we design a novel Voronoi loss function to capture the interaction among the parameters of expert functions, and the distinguishability of $p_{G_*}$ from $g_0$. 
We then theoretically and empirically demonstrate that our proposed loss functions outperform the generalized Wasserstein studied in previous work in terms of precisely characterizing distinct parameter estimation rates, which are determined by the solvability of a system of polynomial equations.


\bibliographystyle{abbrv}
\bibliography{aistats_references}

\begin{thebibliography}{10}

\bibitem{Jin_confidence}
T.~Cai, J.~Jin, and M.~G. Low.
\newblock Estimation and confidence sets for sparse normal mixtures.
\newblock {\em Annals of Statistics}, 35(6):2421–2449, 2007.

\bibitem{Wu_detection}
T.~Cai and Y.~Wu.
\newblock Optimal detection of sparse mixtures against a given null
  distribution.
\newblock {\em IEEE Transactions on Information Theory}, 60(4):2217 -- 2232,
  2014.

\bibitem{Chen_modified}
H.~Chen, J.~Chen, and J.~D. Kalbfleisch.
\newblock A modified likelihood ratio test for homogeneity in finite mixture
  models.
\newblock {\em Journal of the Royal Statistical Society: Series B (Statistical
  Methodology)}, 63(1):19--29, 2001.

\bibitem{Deb_two_component}
N.~Deb, S.~Saha, A.~Guntuboyina, and B.~Sen.
\newblock Two-component mixture model in the presence of covariates.
\newblock {\em Journal of the American Statistical Association},
  117(540):1820--1834, 2022.

\bibitem{delBarrio-etal-99}
E.~del Barrio, J.~Cuesta-Albertos, C.~Matr\'an, and J.~Rodr\'iguez-Rodr\'iguez.
\newblock Tests of goodness of fit based on the $l_2$-{W}asserstein distance.
\newblock {\em Annals of Statistics}, 27(4):1230--1239, 1999.

\bibitem{dempster_maximum_1977}
A.~P. Dempster, N.~M. Laird, and D.~B. Rubin.
\newblock Maximum {Likelihood} from {Incomplete} {Data} {Via} the {EM}
  {Algorithm}.
\newblock {\em Journal of the Royal Statistical Society: Series B
  (Methodological)}, 39(1):1--22, Sept. 1977.

\bibitem{Jin_homogeneous}
D.~Donoho and J.~Jin.
\newblock Higher criticism for detecting sparse heterogeneous mixtures.
\newblock {\em Annals of Statistics}, 32(3):962--994, 2004.

\bibitem{pmlr-v162-du22c}
N.~Du, Y.~Huang, A.~M. Dai, S.~Tong, D.~Lepikhin, Y.~Xu, M.~Krikun, Y.~Zhou,
  A.~W. Yu, O.~Firat, B.~Zoph, L.~Fedus, M.~P. Bosma, Z.~Zhou, T.~Wang,
  E.~Wang, K.~Webster, M.~Pellat, K.~Robinson, K.~Meier-Hellstern, T.~Duke,
  L.~Dixon, K.~Zhang, Q.~Le, Y.~Wu, Z.~Chen, and C.~Cui.
\newblock {GL}a{M}: Efficient scaling of language models with
  mixture-of-experts.
\newblock In {\em Proceedings of the 39th International Conference on Machine
  Learning}, volume 162 of {\em Proceedings of Machine Learning Research},
  pages 5547--5569. PMLR, 17--23 Jul 2022.

\bibitem{Eigen_learning_2014}
D.~Eigen, M.~Ranzato, and I.~Sutskever.
\newblock Learning factored representations in a deep mixture of experts.
\newblock In {\em ICLR Workshops}, 2014.

\bibitem{50136}
N.~Gaur, B.~Farris, P.~Haghani, I.~Leal, P.~J.~M. Mengibar, M.~Prasad,
  B.~Ramabhadran, and Y.~Zhu.
\newblock Mixture of informed experts for multilingual speech recognition.
\newblock In {\em ICASSP 2021, IEEE International Conference on Acoustics,
  Speech and Signal Processing}, 2021.

\bibitem{hazimeh2021dselect}
H.~Hazimeh, Z.~Zhao, A.~Chowdhery, M.~Sathiamoorthy, Y.~Chen, R.~Mazumder,
  L.~Hong, and E.~Chi.
\newblock Dselect-k: Differentiable selection in the mixture of experts with
  applications to multi-task learning.
\newblock {\em Advances in Neural Information Processing Systems},
  34:29335--29347, 2021.

\bibitem{ho2016weakly}
N.~Ho and X.~Nguyen.
\newblock {Convergence rates of parameter estimation for some weakly
  identifiable finite mixtures}.
\newblock {\em The Annals of Statistics}, 44(6):2726 -- 2755, 2016.

\bibitem{ho2022gaussian}
N.~Ho, C.-Y. Yang, and M.~I. Jordan.
\newblock Convergence rates for gaussian mixtures of experts.
\newblock {\em Journal of Machine Learning Research}, 23(323):1--81, 2022.

\bibitem{hunter2008social}
D.~R. Hunter, S.~M. Goodreau, and M.~S. Handcock.
\newblock Goodness of fit of social network models.
\newblock {\em Journal of the American Statistical Association},
  103(481):248--258, 2008.

\bibitem{Jacob_Jordan-1991}
R.~A. Jacobs, M.~I. Jordan, S.~J. Nowlan, and G.~E. Hinton.
\newblock Adaptive mixtures of local experts.
\newblock {\em Neural Computation}, 3, 1991.

\bibitem{jitkrittum2020kernel}
W.~Jitkrittum, H.~Kanagawa, and B.~Sch\"{o}lkopf.
\newblock Testing goodness of fit of conditional density models with kernels.
\newblock In J.~Peters and D.~Sontag, editors, {\em Proceedings of the 36th
  Conference on Uncertainty in Artificial Intelligence (UAI)}, volume 124 of
  {\em Proceedings of Machine Learning Research}, pages 221--230. PMLR, 03--06
  Aug 2020.

\bibitem{Jordan-1994}
M.~I. Jordan and R.~A. Jacobs.
\newblock Hierarchical mixtures of experts and the {EM} algorithm.
\newblock {\em Neural Computation}, 6:181--214, 1994.

\bibitem{Xu_Jordan-1995}
M.~I. Jordan and L.~Xu.
\newblock Convergence results for the {EM} approach to mixtures of experts
  architectures.
\newblock {\em Neural Networks}, 8, 1995.

\bibitem{Kasahara_nonparametric}
H.~Kasahara and K.~Shimotsu.
\newblock Non‐parametric identification and estimation of the number of
  components in multivariate mixtures.
\newblock {\em Journal of the Royal Statistical Society: Series B (Statistical
  Methodology)}, 76(1):97--111, 2014.

\bibitem{Kasahara_number_components}
H.~Kasahara and K.~Shimotsu.
\newblock Testing the number of components in normal mixture regression models.
\newblock {\em Journal of the American Statistical Association},
  110(512):1632--1645, 2015.

\bibitem{Lathuiliere2017}
S.~Lathuili{\`{e}}re, R.~Juge, P.~Mesejo, R.~Mu{\~{n}}oz-Salinas, and
  R.~Horaud.
\newblock {Deep mixture of linear inverse regressions applied to head-pose
  estimation}.
\newblock In {\em Proceedings of the IEEE Conference on Computer Vision and
  Pattern Recognition}, pages 4817--4825, 2017.

\bibitem{liang_m3vit_2022}
H.~Liang, Z.~Fan, R.~Sarkar, Z.~Jiang, T.~Chen, K.~Zou, Y.~Cheng, C.~Hao, and
  Z.~Wang.
\newblock M$^3${ViT}: {Mixture}-of-{Experts} {Vision} {Transformer} for
  {Efficient} {Multi}-task {Learning} with {Model}-{Accelerator} {Co}-design.
\newblock In {\em {NeurIPS}}, 2022.

\bibitem{ma_modeling_2018}
J.~Ma, Z.~Zhao, X.~Yi, J.~Chen, L.~Hong, and E.~H. Chi.
\newblock Modeling {Task} {Relationships} in {Multi}-{Task} {Learning} with
  {Multi}-{Gate} {Mixture}-of-{Experts}.
\newblock In {\em Proceedings of the 24th {ACM} {SIGKDD} {International}
  {Conference} on {Knowledge} {Discovery} \& {Data} {Mining}}, {KDD} '18, pages
  1930--1939, New York, NY, USA, 2018. Association for Computing Machinery.
\newblock event-place: London, United Kingdom.

\bibitem{Ashok_breaking_icml}
A.~Makkuva, P.~Viswanath, S.~Kannan, and S.~Oh.
\newblock Breaking the gridlock in mixture-of-experts: {C}onsistent and
  efficient algorithms.
\newblock In {\em ICML}, 2019.

\bibitem{Ashok_learning_nips}
A.~Makkuva, P.~Viswanath, S.~Kannan, and S.~Oh.
\newblock Learning in gated neural networks.
\newblock In {\em AISTATS}, 2020.

\bibitem{nguyen2016latentmixing}
X.~Nguyen.
\newblock {Convergence of latent mixing measures in finite and infinite mixture
  models}.
\newblock {\em The Annals of Statistics}, 41(1):370 -- 400, 2013.

\bibitem{Patra_estimation}
R.~Patra and B.~Sen.
\newblock Estimation of a two-component mixture model with applications to
  multiple testing.
\newblock {\em Journal of the Royal Statistical Society: Series B (Statistical
  Methodology)}, 78(4):869--893, 2016.

\bibitem{Jacobs-1996}
F.~Peng, R.~A. Jacobs, and M.~A. Tanner.
\newblock Bayesian inference in mixtures-of-experts and hierarchical
  mixtures-of-experts models with an application to speech recognition.
\newblock {\em Journal of the American Statistical Association}, 91:953--960,
  1996.

\bibitem{puigcerver_scalable_2021}
J.~Puigcerver, C.~R. Ruiz, B.~Mustafa, C.~Renggli, A.~S. Pinto, S.~Gelly,
  D.~Keysers, and N.~Houlsby.
\newblock Scalable {Transfer} {Learning} with {Expert} {Models}.
\newblock In {\em International {Conference} on {Learning} {Representations}},
  2021.

\bibitem{Quoc-conf-2017}
N.~Shazeer, A.~Mirhoseini, K.~Maziarz, A.~Davis, Q.~Le, G.~Hinton, and J.~Dean.
\newblock Outrageously large neural networks: The sparsely-gated
  mixture-of-experts layer.
\newblock In {\em International Conference on Learning Representations (ICLR)},
  2017.

\bibitem{Vandegeer-2000}
S.~van~de Geer.
\newblock {\em Empirical Processes in M-estimation}.
\newblock Cambridge University Press, 2000.

\bibitem{Verzelen_feature}
N.~Verzelen and E.~Arias-Castro.
\newblock Detection and feature selection in sparse mixture models.
\newblock {\em Annals of Statistics}, 45(5):1920--1950, 2017.

\bibitem{Villani-03}
C.~Villani.
\newblock {\em Topics in Optimal Transportation}.
\newblock American Mathematical Society, 2003.

\bibitem{Villani-09}
C.~Villani.
\newblock {\em Optimal transport: Old and New}.
\newblock Springer, 2008.

\bibitem{xia2022image}
X.~Xia, W.~Yang, J.~Ren, Y.~Li, Y.~Zhan, B.~Han, and T.~Liu.
\newblock Pluralistic image completion with gaussian mixture models.
\newblock In S.~Koyejo, S.~Mohamed, A.~Agarwal, D.~Belgrave, K.~Cho, and A.~Oh,
  editors, {\em Advances in Neural Information Processing Systems}, volume~35,
  pages 24087--24100. Curran Associates, Inc., 2022.

\bibitem{You_Speech_MoE_2}
Z.~You, S.~Feng, D.~Su, and D.~Yu.
\newblock Speechmoe2: Mixture-of-experts model with improved routing.
\newblock In {\em {IEEE} {International} {Conference} on {Acoustics}, {Speech}
  and {Signal} {Processing} ({ICASSP})}, pages 7217--7221, 2022.

\bibitem{zuo2023moebert}
S.~Zuo, Q.~Zhang, C.~Liang, P.~He, T.~Zhao, and W.~Chen.
\newblock Moebert: from bert to mixture-of-experts via importance-guided
  adaptation.
\newblock {\em The North American Chapter of the Association for Computational
  Linguistics}, 2023.

\end{thebibliography}

\newpage
\onecolumn
\appendix

\centering
\textbf{\Large{Supplementary Material for 
``On Parameter Estimation in Deviated \\ \vspace{0.1em} Gaussian Mixture of Experts''}}

\justifying
\setlength{\parindent}{0pt}
In this supplementary material, we first provide proofs for main results in Appendix~\ref{appendix:missing_proofs}, while leaving those for auxiliary results in Appendix~\ref{appendix:proof_auxiliary_results}. Then, we present the convergence behavior of parameter estimation under the full overlap regime in Appendix~\ref{appendix:full_overlap}. Finally, we study the parameter estimation problem in the deviated Gaussian mixture of experts when the true mixing proportion vanishes, i.e. $\lambdastar=0$, in Appendix~\ref{appendix:additional_results}. 
\section{PROOF OF MAIN RESULTS}
\label{appendix:missing_proofs}
In this appendix, we provide the proof of Theorem~\ref{theorem:distinguishable_dependent} in Appendix~\ref{appendix:distinguishable_dependent}, and then present that for Theorem~\ref{theorem:partial_dependent} in Appendix~\ref{appendix:partial_dependent}.

To begin with, let us define some essential notations that will be used in our arguments. For any vectors $u=(u_1,u_2,\ldots,u_d)\in\mathbb{R}^d$ and $q=(q_1,q_2,\ldots,q_d)\in\mathbb{N}^d$, we denote $u^{q}:=u_{1}^{q_1}u_{2}^{q_2}\ldots u_{d}^{q_d}$, $|u|:=u_1+u_2+\ldots+u_d$ and $q!:=q_1!q_2!\ldots q_d!$. Next, the two expert functions considered in this work are denoted by $h_1(X,a,b)=a^{\top}X+b$ and $h_2(X,\sigma)=\sigma$, for any $(a,b,\sigma)\in\Theta$ and $X\in\mathcal{X}$. Finally, for any set $A$, we denote $A^c$ as its complement.
\subsection{Proof of Theorem~\ref{theorem:distinguishable_dependent}}
\label{appendix:distinguishable_dependent}
According to the result in Proposition~\ref{prop:mle_estimation}, in order to reach the desired conclusion, we need to demonstrate that $V(p_{\hat{\lambda}_n, \widehat{G}_n},p_{\lambdastar, G_*})\gtrsim D_{1}((\hat{\lambda}_n, \widehat{G}_n),(\lambdastar, G_*))$, which follows from the following inequality:
\begin{align}
    \label{eq:original_inequality}
    \inf_{\lambda\in[0,1],G\in\Ocal_{k,\xi}(\Theta)}\dfrac{V(p_{\lambda, G},p_{\lambdastar, G_*})}{D_{1}((\lambda, G),(\lambdastar, G_*))}>0.
\end{align}
For that purpose, we split the above inequality into two parts, which we referred to local inequality and global inequality. Note that in the above infimum is subject to mixing measures in the set $\mathcal{O}_{k,\xi}(\Theta):=\{G = \sum_{i=1}^{k'}p_i\delta_{(a_{i},b_{i},\sigma_{i})}:1\leq k'\leq k, \ p_i\geq\xi,\  (a_{i},b_{i},\sigma_{i})\in\Theta\}$ for some $\xi>0$ for simplicity.

\textbf{Local inequality.} Firstly, we will show the following local inequality:
\begin{align}
    \label{eq:distinguishable_dependent_local}
    \lim_{\varepsilon\to 0}\inf_{\substack{\lambda\in[0,1],G\in\Ocal_{k,\xi}(\Theta):\\ D_{1}((\lambda, G),(\lambdastar, G_*))\leq\varepsilon}}\dfrac{V(p_{\lambda, G},p_{\lambdastar, G_*})}{D_{1}((\lambda, G),(\lambdastar, G_*))}>0.
\end{align}
Assume by contrary that the above claim does not hold, then there exist a sequence of mixing measures $G_n=\sum_{i=1}^{k_n}p^n_i\delta_{(\ain,\bin,\sigmain)}\in\Ocal_{k,\xi}(\Theta)$ and a sequence of mixing proportions $\lambdan\in[0,1]$ that satisfy
\begin{align*}
\begin{cases}
    D_{1n}:=D_{1}((\lambdan, G_n),(\lambdastar, G_*))\to 0,\\
    V(p_{\lambdan, G_n},p_{\lambdastar, G_*})/D_{1n}\to 0,
\end{cases}
\end{align*}
as $n\to\infty$. Next, the following Voronoi cells with respect to $\widehat{G}_n$ is defined as:
\begin{align*}
    \mathcal{A}_j^n=\mathcal{A}_j(G_n)=\{i\in[k_n]:\|\theta^n_i-\theta^*_j\|\leq\|\theta^n_i-\theta^*_{\ell}\|, \ \forall \ell\neq j\},\quad  \forall j\in[k_*], 
\end{align*}
where $\theta^n_i:=(\ain,\bin,\sigmain)$ and $\theta^*_j:=(\aj,\bj,\sj)$. Since $k_n\leq k$ for all $n$, we can find a subsequence of $G_n$ such that $k_n$ does not change with $n$. Then, by replacing $G_n$ with this subsequence, we may assume that $k_n=k$ for all $n\in\mathbb{N}$. Additionally, $\mathcal{A}_j=\mathcal{A}_j^n$ does not change with $n$ for all $j\in[k_*]$, either. As $D_{1n}\to 0$, we can represent $G_n=\sum_{j=1}^{\kstar}\sum_{i\in\mathcal{A}_j}p^n_i\delta_{(\ain,\bin,\sigmain)}$ such that $|\mathcal{A}_j|\geq 1$ for all $j\in[k_*]$ and $\sum_{j=1}^{k_*}|\mathcal{A}_j|=k$. Furthermore, it follows from the formulation of metric $D_1$ in equation~\eqref{eq:D_distinguishable_dependent} that $\lambdan\to\lambdastar$, $(\ain,\bin,\sigmain)\to(\aj,\bj,\sj)$ for any $i\in\mathcal{A}_j$ and $\sum_{i\in\mathcal{A}_j}p^n_i\to p^*_j$ for any $j\in[k_*]$ when $n$ tends to infinity.

\textbf{Step 1 - Application of Taylor expansion.} Now, we consider the following quantity:
\begin{align}
    &p_{\lambdan, G_n}(X,Y)-p_{\lambdastar, G_*}(X,Y)\nonumber\\
    &=\sum_{j:|\mathcal{A}_j|>1}\sum_{i\in \mathcal{A}_j}\lambdan p^n_i[f(Y|(\ain)^{\top}X+\bin,\sigmain)-f(Y|(\aj)^{\top}X+\bj,\sj)]\Bar{f}(X)\nonumber\\
    &+\sum_{j:|\mathcal{A}_j|=1}\sum_{i\in \mathcal{A}_j}\lambdan p^n_i[f(Y|(\ain)^{\top}X+\bin,\sigmain)-f(Y|(\aj)^{\top}X+\bj,\sj)]\Bar{f}(X)\nonumber\\
    &+\sum_{j=1}^{k_*}\left(\sum_{i\in\mathcal{A}_j}\lambdan p^n_i-\lambdastar p^*_j\right)f(Y|(\aj)^{\top}X+\bj,\sj)\Bar{f}(X)+(\lambdastar-\lambdan)g_0(Y|X)\Bar{f}(X)\nonumber\\
    \label{eq:density_decomposition}
    &:=A_{n,1}+A_{n,2}+B_n+C_n.
\end{align}
For the sake of presentation, let us denote $\Delta a^n_{ij}:=\ain-\aj$, $\Delta b^n_{ij}:=\bin-\bj$ and $\Delta\sigma^n_{ij}:=\sigmain-\sj$ for all $i\in[k_n]$ and $j\in[k_*]$. For each $j\in[k_*]$ such that $|\mathcal{A}_j|>1$, by applying the Taylor expansion up to order $\Bar{r}(|\mathcal{A}_j|)$, we can rewrite $f(Y|(\ain)^{\top}X+\bin,\sigmain)-f(Y|(\aj)^{\top}X+\bj,\sj)$ as
\begin{align*}
    &\sum_{|\alpha|=1}^{\Bar{r}(|\mathcal{A}_j|)}\frac{1}{\alpha!}(\daijn)^{\alpha_1}(\dbijn)^{\alpha_2}(\dsijn)^{\alpha_3}\frac{\partial^{|\alpha_1|+\alpha_2+\alpha_3}f}{\partial a^{\alpha_1}\partial b^{\alpha_2}\partial \sigma^{\alpha_3}}(Y|(\aj)^{\top}X+\bj,\sj) + R_{1ij}(Y|X)\\
    =&\sum_{|\alpha|=1}^{\Bar{r}(|\mathcal{A}_j|)}\frac{1}{\alpha!}(\daijn)^{\alpha_1}(\dbijn)^{\alpha_2}(\dsijn)^{\alpha_3}\frac{X^{\alpha_1}}{2^{\alpha_3}}\frac{\partial^{|\alpha_1|+\alpha_2+2\alpha_3}f}{\partial h_1^{|\alpha_1|+\alpha_2+2\alpha_3}}(Y|(\aj)^{\top}X+\bj,\sj) + R_{1ij}(Y|X),
\end{align*}
where $R_{1ij}(Y|X)$ is a Taylor remainder term such that $R_{1ij}(X,Y)/D_{1n}\to 0$, and the first equality comes from the following partial differential equation (PDE):
\begin{align*}
    \frac{\partial^{\alpha_3} f}{\partial h_2^{\alpha_3}}(Y|(\aj)^{\top}X+\bj,\sj)=\frac{1}{2^{\alpha_3}}\cdot\frac{\partial^{2\alpha_3} f}{\partial h_1^{2\alpha_3}}(Y|(\aj)^{\top}X+\bj,\sj).
\end{align*}
As a result, let $\ell=\alpha_2+2\alpha_3$, then $A_{n,1}$ can be represented as
\begin{align*}
    A_{n,1}=\sum_{j:|\mathcal{A}_j|>1}\sum_{|\alpha_1|=0}^{\brj}\sum_{\ell=0}^{2(\brj-|\alpha_1|)}E^n_{\alpha_1,\ell}(j)X^{\alpha_1}\cdot\frac{\partial^{|\alpha_1|+\ell}f}{\partial h_1^{|\alpha_1|+\ell}}(Y|(\aj)^{\top}X+\bj,\sj)\Bar{f}(X) 
    + R_1(X,Y),
\end{align*}
where $R_1(X,Y):=\sum_{j=1}^{k_*}\sum_{i\in\mathcal{A}_j}R_{1ij}(Y|X)\Bar{f}(X)$, which leads to $R_1(X,Y)/D_{1n}\to 0$ as $n\to\infty$. In addition, the coefficients $E^n_{\alpha_1,\ell}(j)$ in this representation are defined as
\begin{align*}
    E^n_{\alpha_1,\ell}(j):=\sum_{i\in\mathcal{A}_j}\sum_{\substack{\alpha_2+2\alpha_3=\ell \\ \alpha_2+\alpha_3\geq 1-|\alpha_1|}}\frac{\lambdan p^n_i}{2^{\alpha_3}\alpha!}\cdot(\daijn)^{\alpha_1}(\dbijn)^{\alpha_2}(\dsijn)^{\alpha_3},
\end{align*}
for any $j\in[k_*]$, $0\leq |\alpha_1|\leq \brj$ and $0\leq \ell\leq 2(\brj-|\alpha_1|)$.

Similarly, by means of Taylor expansion up to the first order, $A_{n,2}$ is decomposed as
\begin{align*}
    A_{n,2}=\sum_{j:|\mathcal{A}_j|=1}\sum_{|\alpha_1|=0}^{1}\sum_{\ell=0}^{2(1-|\alpha_1|)}E^n_{\alpha_1,\ell}(j)X^{\alpha_1}\cdot\frac{\partial^{|\alpha_1|+\ell}f}{\partial h_1^{|\alpha_1|+\ell}}(Y|(\aj)^{\top}X+\bj,\sj)\Bar{f}(X)+R_2(X,Y),
\end{align*}
where $R_2(X,Y)$ is a Taylor remainder term such that $R_2(X,Y)/D_{1n}\to 0$ as $n\to\infty$. 

Note that three terms $A_{n,1}$, $A_{n,2}$, $B_n$ and $C_n$ can be viewed as linear combinations of elements of the set $\mathcal{H}_1$ defined as
\begin{eqnarray}    
\label{eq:distinguishable_dependent_2}
\mathcal{H}_1 : = \Bigg\{X^{\alpha_1}\cdot\frac{\partial^{|\alpha_1|+\ell}f}{\partial h_1^{|\alpha_1|+\ell}}(Y|(\aj)^{\top}X+\bj,\sj)\Bar{f}(X), \ g_0(Y|X)\Bar{f}(X):\ j\in[k_*],\nonumber\\ 
0\leq |\alpha_1|\leq \brj, \ 0\leq \ell\leq 2(\brj-|\alpha_1|)\Bigg\}.
\end{eqnarray}
\textbf{Step 2 - Non-vanishing coefficients.} In this step, we will show by contradiction that not all the coefficients in the representations of $A_{n,1}/D_{1n}$, $A_{n,2}/D_{1n}$, $B_n/D_{1n}$ and $C_n/D_{1n}$ vanish as $n\to\infty$. In particular, assume that all of them vanish converge to zero. Given this hypothesis, it can be seen from the definitions of $C_n$ and $B_n$ in equation~\eqref{eq:density_decomposition} that
\begin{align}
\label{eq:Cn_Bn_vanish}
\frac{(\lambdastar-\lambdan)}{D_{1n}}\to 0, \qquad \frac{1}{D_{1n}}\cdot\sum_{j=1}^{k_*}\left|\sum_{i\in\mathcal{A}_j}\lambdan p^n_i-\lambdastar p^*_j\right|\to 0.
\end{align}
Regarding the coefficients in $A_{n,2}$, by considering the limits of $E^n_{\zerod,1}(j)/D_{1n}$ and $E^n_{\alpha_1,0}(j)/D_{1n}$ for $j\in[k_*]:|\mathcal{A}_j|=1$ and  $\alpha_1\in\{e_1,e_2,\ldots,e_d\}$, where $e_u:=(0,\ldots,0,\underbrace{1}_{\textit{u-th}},0,\ldots,0)$ being a one-hot vector in $\mathbb{R}^d$ for any $u\in[d]$, we obtain that
\begin{align*}
    \frac{1}{D_{1n}}\cdot\sum_{j:|\mathcal{A}_j|=1}\sum_{i\in\mathcal{A}_j}\lambdan p^n_i\Big(\|\daijn\|_1+|\dbijn|+|\dsijn|\Big)\to 0.
\end{align*}
Due to the topological equivalence of $1$-norm and $2$-norm, the above limit is equivalent to
\begin{align}
    \label{eq:An2_vanish}
    \frac{1}{D_{1n}}\cdot\sum_{j:|\mathcal{A}_j|=1}\sum_{i\in\mathcal{A}_j}\lambdan p^n_i\Big(\|\daijn\|+|\dbijn|+|\dsijn|\Big)\to 0.
\end{align}
Regarding the coefficients in $A_{n,1}$, it follows from the limits of $E^n_{\alpha_1,0}(j)/D_{1n}$ for any $\alpha_1\in\{2e_1,2e_2,\ldots,2e_d\}$ and $j\in[k_*]:|\mathcal{A}_j|>1$ that
\begin{align*}
    \frac{1}{D_{1n}}\cdot\sum_{j:|\mathcal{A}_j|>1}\sum_{i\in\mathcal{A}_j}\lambdan p^n_i\|\daijn\|^2\to 0.
\end{align*}
Putting the results in equations~\eqref{eq:Cn_Bn_vanish} and \eqref{eq:An2_vanish} together with the formulation of $D_{1n}$ that
\begin{align*}
    \frac{1}{D_{1n}}\cdot\sum_{j:|\mathcal{A}_j|>1}\sum_{i\in\mathcal{A}_j}\lambdan p^n_i\Big(|\dbijn|^{\brj}+|\dsijn|^{\brj/2}\Big)\to1.
\end{align*}
Therefore, we can find an index $j^*\in[k_*]:|\mathcal{A}_{j^*}|>1$ such that
\begin{align*}
    \frac{1}{D_{1n}}\cdot \sum_{i\in\mathcal{A}_{j^*}}\lambdan p^n_i\Big(|\Delta b^n_{ij^*}|^{\Bar{r}(|\mathcal{A}_{j^*}|)}+|\Delta \sigma^n_{ij^*}|^{\Bar{r}(|\mathcal{A}_{j^*}|)/2}\Big)\not\to 0.
\end{align*}
WLOG, we assume that $j^*=1$ throughout this proof. Moreover, since $E^n_{\zerod,\ell}(1)/D_{1n}\to 0$ as $n\to\infty$ for any $1\leq \ell\leq \brone$, we deduce that
\begin{align}
    \label{eq:ratio_distinguishable}
    \dfrac{\sum_{i\in\mathcal{A}_{1}}\sum_{\substack{\alpha_2+2\alpha_3=\ell}} p^n_i\cdot\dfrac{(\dbione)^{\alpha_2}(\dsione)^{\alpha_3}}{2^{\alpha_3}\alpha_2!\alpha_3!}}{\sum_{i\in\mathcal{A}_{1}} p^n_i\Big(|\dbione|^{\brone}+|\dsione|^{\brone/2}\Big)}\to0,
\end{align}
for any $1\leq \ell\leq \brone$.
Subsequently, we denote
\begin{align*}
    \overline{M}_n=\max\{|\dbione|, \ |\dsione|^{1/2}:i\in\mathcal{A}_{1}\}, \quad \overline{p}_n=\max_{i\in\mathcal{A}_{1}} p^n_{i}.
\end{align*}
Since the sequence $p^n_i/\overline{p}_n$ is bounded, we can substitute it by its subsequence which admits a non-negative limit $s_i^2=\lim_{n\to\infty}p^n_{i}/\overline{p}_n$. Furthermore, as $p^n_{i}\geq \xi>0$ for all $i\in\mathcal{A}_{1}$, at least one among the limit $s_i^2$ is equal to $1$. Similarly, let $(\dbione)/\overline{M_n}\to t_{1i}$ and $(\dsione)/(2\overline{M}_n^2)\to t_{2i}$ as $n\to\infty$ for any $i\in\mathcal{A}_{1}$. Then, at least one among $t_{1i}$ and $t_{2i}$ for $i\in\mathcal{A}_{1}$ is equal to either $1$ or $-1$.

Then, we divide both the numerator and the denominator of the ratio in equation~\eqref{eq:ratio_distinguishable} by $\overline{p}_n\overline{M}_n^{\ell}$, and obtain the following system of polynomial equations:
\begin{align*}
    \sum_{i\in\mathcal{A}_{1}}\sum_{\substack{\alpha_2+2\alpha_{3}=\ell}}\dfrac{s_i^2~t_{1i}^{\alpha_2}~t_{2i}^{\alpha_{3}}}{{\alpha_2}!~{\alpha_{3}}!}=0, \quad \forall \ell=1,2,\ldots,\brone.
\end{align*}
From the definition of $\brone$, this system does not have any non-trivial solutions, which contradicts to the aforementioned properties of $s_{i}$, $t_{1i}$ and $t_{2i}$. Consequently, not all the coefficients in the representations of $A_{n,1}/D_{1n}$, $A_{n,2}/D_{1n}$, $B_n/D_{1n}$ and $C_n/D_{1n}$ go to 0 as $n\to\infty$.

\textbf{Step 3 - Collapse of coefficients by Fatou's lemma.} In this step, we will point out a contradiction to the result in Step 2 by using Fatou's lemma. In particular, let us denote by $m_n$ the maximum of the absolute values of the coefficients in the representations of $A_{n,1}/D_{1n}$, $A_{n,2}/D_{1n}$, $B_n/D_{1n}$ and $C_n/D_{1n}$, i.e.
\begin{align*}
m_n=\max_{\substack{j\in[k_*], \ 0\leq|\alpha_1|\leq\Bar{r}(|\mathcal{A}_j|),\\ 0\leq\ell\leq 2(\brj-|\alpha_1|)}}\left\{\frac{|E^n_{\alpha_1,\ell}(j)|}{D_{1n}},\frac{|\lambdan-\lambdastar|}{D_{1n}}\right\},
\end{align*}
with a note that $E^n_{\zerod,0}(j):=\sum_{i\in\mathcal{A}_j}\lambdan p^n_i-\lambdastar p^*_j$. Since $|E^n_{\alpha_1,\ell}(j)|/(m_nD_{1n})$ and $|\lambdan-\lambdastar|/(m_nD_{1n})$ are bounded, we can replace them by their subsequences such that 
\begin{align*}
\frac{|E^n_{\alpha_1,\ell}(j)|}{m_n D_{1n}}\to\tau_{\alpha_1,\ell}(j), \qquad
        \frac{|\lambdan-\lambdastar|}{m_n D_{1n}}\to \tau,
\end{align*}
as $n\to\infty$ for all $j\in[k_*]$, $0\leq |\alpha_1|\leq\Bar{r}(|\mathcal{A}_j|)$ and $0\leq\ell\leq 2(\brj-|\alpha_1|)$. Here, at least one among $\tau_{\alpha_1,\ell}(j)$ and $\tau$ is different from zero. By applying the Fatou's lemma, we get
\begin{align*}
    0=\lim_{n\to\infty}\frac{2V(p_{\lambdan, G_n},p_{\lambdastar, G_*})}{m_nD_{1n}}\geq\int\liminf_{n\to\infty}\dfrac{|p_{\lambdan, G_n}(X,Y)-p_{\lambdastar, G_*}(X,Y)|}{m_nD_{1n}}\dint (X,Y)\geq 0,
\end{align*}
which implies that 
\begin{align*}
    \dfrac{|p_{\lambdan, G_n}(X,Y)-p_{\lambdastar, G_*}(X,Y)|}{m_nD_{1n}}\to 0,
\end{align*}
for almost surely $(X,Y)$. Recall that the left hand side in the above equation converges to
\begin{align*}
    \sum_{j,\alpha_1,\ell}\tau_{\alpha_1,\ell}(j)X^{\alpha_1}\cdot\frac{\partial^{|\alpha_1|+\ell}f}{\partial h_1^{|\alpha_1|+\ell}}(Y|(\aj)^{\top}X+\bj,\sj)\Bar{f}(X)+\tau g_{0}(Y|X)\Bar{f}(X),
\end{align*}
where the ranges of $(j,\alpha_1,\ell)$ in the summation are $j\in[k_*]$, $0\leq|\alpha_1|\leq \brj$ and  $0\leq \ell\leq 2(\brj-|\alpha_1|)$. As a result, we get
\begin{align}
\label{eq:distinguishable_dependent_3}
\sum_{j,\alpha_1,\ell}\tau_{\alpha_1,\ell}(j)X^{\alpha_1}\cdot\frac{\partial^{|\alpha_1|+\ell}f}{\partial h_1^{|\alpha_1|+\ell}}(Y|(\aj)^{\top}X+\bj,\sj)+\tau g_{0}(Y|X)=0,
\end{align}
for almost surely $(X,Y)$. Since $p_{G_*}$ is distinguishable from $g_0$, it follows from Definition~\ref{definition:distinguishability} that $\tau_{\alpha_1,\ell}(j)X^{\alpha_1}=\tau=0$, for any $j\in[k_*]$, $0\leq|\alpha_1|\leq \brj$ and  $0\leq \ell\leq 2(\brj-|\alpha_1|)$ for almost surely $X$. This result indicates that $\tau_{\alpha_1,\ell}(j)=\tau=0$, which contradicts to the fact that at least one among $\tau_{\alpha_1,\ell}(j),\tau$ is non-zero. Hence, we obtain the local inequality in equation~\eqref{eq:distinguishable_dependent_local}.

As a consequence, there exists some $\varepsilon'>0$ such that
\begin{align*}
    \inf_{\substack{\lambda\in[0,1],G\in\Ocal_{k,\xi}(\Theta):\\ D_1((\lambda, G),(\lambdastar, G_*))\leq\varepsilon'}}V(p_{\lambda, G},p_{\lambdastar,\Gstar})/D_1((\lambda, G),(\lambdastar,\Gstar))>0.
\end{align*}
\textbf{Global inequality:} Thus, it remains to prove the following global inequality:
\begin{align*}
    \inf_{\substack{\lambda\in[0,1],G\in\Ocal_{k,\xi}(\Theta):\\ D_1((\lambda, G),(\lambdastar, G_*))>\varepsilon'}}V(p_{\lambda, G},p_{\lambdastar,\Gstar})/D_1((\lambda, G),(\lambdastar,\Gstar))>0.
\end{align*}
Assume by contrary that it is not the case. Then, there exist some sequences $G'_n\in\Ocal_{k,\xi}(\Theta)$ and $\lambda'_n\in[0,1]$ such that 
\begin{align*}
    V(p_{\lambda'_n,G'_n},p_{\lambdastar, G_*})/D_1((\lambda'_n,G'_n),(\lambdastar, G_*))\to 0,\\
    D_1((\lambda'_n,G'_n),(\lambdastar, G_*))>\varepsilon'.
\end{align*}
As a result, we get $ V(p_{\lambda'_n,G'_n},p_{\lambdastar, G_*})\to 0$. Note that $\Theta$ and $[0,1]$ are bounded sets,  then we can find a subsequence of $G'_n$ and a subsequence of $\lambda'_n$ such that $G'_n\to G'$ and $\lambda'_n\to\lambda'$, where $G'\in\Ocal_{k,\xi}(\Theta)$ and $\lambda'\in[0,1]$. By replacing $G'_n$ and $\lambda'_n$ with these subsequences, we get that $D_1((\lambda'_n,G'_n),(\lambdastar, G_*))>\varepsilon'$. Moreover, by the Fatou's lemma, we obtain that
\begin{align*}
    0=\lim_{n\to\infty}2V(p_{\lambda'_n,G'_n},p_{\lambdastar, G_*})&\geq \int\liminf_{n\to\infty}\left|p_{\lambda'_n,G'_n}(X,Y)-p_{\lambdastar, G_*}(X,Y)\right|\dint (X,Y)\\
    &=\int\left|p_{\lambda',G'}(X,Y)-p_{\lambdastar, G_*}(X,Y)\right|\dint(X,Y)\geq 0,
\end{align*}
which indicates that $p_{\lambda',G'}(X,Y)=p_{\lambdastar, G_*}(X,Y)$ for almost surely $(X,Y)$. According to Proposition~\ref{prop:identifiable}, the deviated Gaussian mixture of experts is identifiable when $p_{G_*}$ is distinguishable from $g_0$. Thus, it follows that $(\lambda',G')\equiv (\lambdastar, G_*)$. This contradicts to the previous claim that $D_1((\lambda',G'),(\lambdastar, G_*))>\varepsilon'>0$. Hence, the proof is completed.

\subsection{Proof of Theorem~\ref{theorem:partial_dependent}}
\label{appendix:partial_dependent}
Similar to the proof of Theorem~\ref{theorem:distinguishable_dependent}, we need to prove the following claim:
\begin{align*}
    \inf_{\substack{\lambda\in[0,1],G\in\Ocal_{k,\xi}(\Theta)}}\dfrac{V(p_{\lambda, G},p_{\lambdastar, G_*})}{D_{2}((\lambda, G),(\lambdastar, G_*))}>0.
\end{align*}
\textbf{Local inequality.} Firstly, we will demonstrate the local version of the above inequality:
\begin{align}   
    \label{eq:partial_dependent_local}
    \lim_{\varepsilon\to 0}\inf_{\substack{\lambda\in[0,1],G\in\Ocal_{k,\xi}(\Theta),\\ D_{2}((\lambda, G),(\lambdastar, G_*))\leq\varepsilon}}\dfrac{V(p_{\lambda, G},p_{\lambdastar, G_*})}{D_{2}((\lambda, G),(\lambdastar, G_*))}>0.
\end{align}
Assume by contrary that the above inequality does not hold, then there exist sequences $\lambdan\in[0,1]$ and $G_n=\sum_{i=1}^{k_n}p^n_i\delta_{(\ain,\bin,\sigmain)}\in\Ocal_{k,\xi}(\Theta)$ such that
\begin{align*}
    \begin{cases}
        D_{2n}:=D_{2}((\lambdan, G_n),(\lambdastar, G_*))\to 0,\\
        V(p_{\lambdan, G_n},p_{\lambdastar, G_*})/D_{2n}\to 0.
    \end{cases}
\end{align*}
\textbf{Case 1:} $\lambdan\leq\lambdastar$ for infinitely $n\in\mathbb{N}$. WLOG, we assume that $\lambdan\leq\lambdastar$ for all $n\in\mathbb{N}$. 

In this case, we have $D_{2n}=D_{1}((\lambdan, G_n),(\lambdastar, G_*))$, for any $n\in\mathbb{N}$. Note that $k_n\leq k$, thus, we can replace $G_n$ with one of its subsequences such that $k_n$ does not vary with $n$. Therefore, we assume that $k_n=k$ for all $n$. In addition, the Voronoi cells $\mathcal{A}_j=\mathcal{A}_j^n$ does not change with $n$ for all $j\in[k_*]$. Next, we decompose the quantity $p_{\lambdan, G_n}(X,Y)-p_{\lambdastar, G_*}(X,Y)$ as follows:
\begin{align*}
    p_{\lambdan, G_n}(X,Y)-p_{\lambdastar, G_*}(X,Y)&=(\lambdastar-\lambdan)\sum_{j=\Bar{k}+1}^{k_0}p^0_jf(Y|(a^0_j)^{\top}X+b^0_j,\sigma^0_j)\Bar{f}(X)\\
    &+\sum_{j:|\mathcal{A}_j|>1}\sum_{i\in \mathcal{A}_j}\lambdan p^n_i[f(Y|(\ain)^{\top}X+\bin,\sigmain)-f(Y|(\aj)^{\top}X+\bj,\sj)]\Bar{f}(X)\\
    &+\sum_{j:|\mathcal{A}_j|=1}\sum_{i\in \mathcal{A}_j}\lambdan p^n_i[f(Y|(\ain)^{\top}X+\bin,\sigmain)-f(Y|(\aj)^{\top}X+\bj,\sj)]\Bar{f}(X)\\
    &+\sum_{j=1}^{k_*}\left(\sum_{i\in\mathcal{A}_j}\lambdan p^n_i-\Bar{p}^*_j(\lambdan)\right)f(Y|(\aj)^{\top}X+\bj,\sj)\Bar{f}(X)\\
    &:=C_n+A_{n,1}+A_{n,2}+B_n,
\end{align*}
where we define $\Bar{p}^*_j(\lambdan):=\begin{cases}
    \lambdastar p^*_j+(\lambdan-\lambdastar)p^0_j, \quad j\in[\Bar{k}]\\
    \lambdastar p^*_j, \hspace{2.55cm} j\in[k_*]\setminus[\kbar]
\end{cases}$.

By applying the Taylor expansions as in Appendix~\ref{appendix:distinguishable_dependent}, we are able to show that $A_{n,1}/D_{2n}$, $A_{n,2}/D_{2n}$, $B_{n}/D_{2n}$ and $C_{n}/D_{2n}$ can be written as linear combinations of elements of the following set
\begin{align}
    \label{eq:set_H2}
    \mathcal{H}_2:=\Bigg\{X^{\alpha_1}\cdot\frac{\partial^{|\alpha_1|+\ell}f}{\partial h_1^{|\alpha_1|+\ell}}&(Y|(\aj)^{\top}X+\bj,\sj)\Bar{f}(X), \ f(Y|(a^0_{j'})^{\top}X+b^0_{j'},\sigma^0_{j'})\Bar{f}(X):j\in[k_*],\nonumber\\ 
&j'\in[k_0]\setminus[\kbar],\ 0\leq |\alpha_1|\leq \brj, \ 0\leq \ell\leq 2(\brj-|\alpha_1|)\Bigg\}.
\end{align}
Furthermore, not all the coefficients in these representations go to zero as $n$ tends to infinity.

Subsequently, by following the same arguments for deriving equation~\eqref{eq:distinguishable_dependent_3}, we can find some constants $\tau_{\alpha_1,\ell}(j)$ and $\tau(j')$, where $j\in[k_*]$, $0\leq |\alpha_1|\leq\Bar{r}(|\mathcal{A}_j|)$, $0\leq \ell\leq 2(\brj-|\alpha_1|)$ and $j'\in[k_0]\setminus[\kbar]$, such that at least one among them is non-zero and
\begin{align}
\label{eq:H2_equation}
\sum_{j,\alpha_1,\ell}\tau_{\alpha_1,\ell}(j)X^{\alpha_1}\cdot\frac{\partial^{|\alpha_1|+\ell}f}{\partial h_1^{|\alpha_1|+\ell}}(Y|(\aj)^{\top}X+\bj,\sj)+\sum_{j'=\Bar{k}+1}^{k_0}\tau(j')f(Y|(a^0_{j'})^{\top}X+b^0_{j'},\sigma^0_{j'}) =0,
\end{align}
for almost surely $(X,Y)$. Now, we demonstrate that the set $\mathcal{H}_2$ is linearly independent with respect to $X$ and $Y$, or equivalently, $\tau_{\alpha_1,\ell}(j)=\tau(j')=0$, for any  $j\in[k_*]$, $0\leq |\alpha_1|\leq\Bar{r}(|\mathcal{A}_j|)$, $0\leq \ell\leq 2(\brj-|\alpha_1|)$ and $j'\in[k_0]\setminus[\kbar]$. Indeed, equation~\eqref{eq:H2_equation} can be rewritten as
\begin{align}
    \label{eq:new_H2_equation}
    &\sum_{j=1}^{k_*}\sum_{v=0}^{2\brj}\left(\sum_{|\alpha_1|+\ell=v}\tau_{\alpha_1,\ell}(j)X^{\alpha_1}\right)\frac{\partial^{v}f}{\partial h_1^{v}}(Y|(\aj)^{\top}X+\bj,\sj)+\sum_{j'=\Bar{k}+1}^{k_0}\tau(j')f(Y|(a^0_{j'})^{\top}X+b^0_{j'},\sigma^0_{j'}) =0,
\end{align}
for almost surely $X$ and $Y$. It is worth noting that $(\aj,\bj,\sj)$ and $(a^0_{j'},b^0_{j'},\sigma^0_{j'})$ are distinct components for $j\in[k_*]$ and $j\in[k_0]\setminus[\kbar]$, therefore, $((\aj)^{\top}X+\bj,\sj)$ and $((a^0_{j'})^{\top}X+b^0_{j'},\sigma^0_{j'})$ are distinct pairs for almost surely $X\in\mathcal{X}$. This implies that $\frac{\partial^{v}f}{\partial h_1^{v}}(Y|(\aj)^{\top}X+\bj,\sj)$ and $f(Y|(a^0_{j'})^{\top}X+b^0_{j'},\sigma^0_{j'})$ are linearly independent with respect to $Y$ for $0\leq v\leq 2\brj$ for any $j\in[k_*]$ and $j'\in[k_0]\setminus[\kbar]$. Then, it follows from equation~\eqref{eq:new_H2_equation} that $\tau(j')=0$ for any $j'\in[k_0]\setminus[\kbar]$ and $\sum_{|\alpha_1|+\ell=u}\tau_{\alpha_1,\ell}(j)X^{\alpha_1}=0$ for any $j\in[k_*]$, $0\leq v\leq 2\brj$ for almost surely $X$. Note that this is a polynomial of $X\in\mathcal{X}$, which is a bounded subset of $\mathbb{R}^d$, we deduce that $\tau_{\alpha_1,\ell}(j)=0$ for all $|\alpha_1|+\ell=v$, $j\in[k_*]$ and $0\leq v\leq 2\brj$. This contradicts the previous claim that at least one among $\tau_{\alpha_1,\ell}(j),\tau(j')$ is different from zero. 

Thus, we obtain the local inequality in equation~\eqref{eq:partial_dependent_local} for this case.

\textbf{Case 2:} $\lambdan>\lambdastar$ for infinitely $n\in\mathbb{N}$. WLOG, we assume that $\lambdan>\lambdastar$ for all $n\in\mathbb{N}$.

\textbf{Case 2.1:} $k\leq k_*+k_0-\Bar{k}-1$

In this case, the discrepancy $D_{2n}$ reduces to $D_{1}((\lambdan, G_n),(\lambdastar, G_*))$. Therefore, the local inequality for this case can be achieved analogously to that for Case 1.

\textbf{Case 2.2:} $k\geq \kstar+k_0-\kbar$

In this case, the discrepancy $D_{2n}$ equals to $D_3(G_n,\overline{G}_*(\lambdan))$, which was defined in equation~\eqref{eq:D_partial_dependent}. Additionally, we have
\begin{align*}
    &p_{\lambdan, G_n}(X,Y)-p_{\lambdastar, G_*}(X,Y)=\lambdan\Bigg\{\sum_{j=1}^{k_*}\sum_{i\in \mathcal{A}_j} p^n_if(Y|(\ain)^{\top}X+\bin,\sigmain)\\
    &-\Big[\Big(1-\frac{\lambdastar}{\lambdan}\Big)\sum_{j=1}^{k_0}p^0_jf(Y|(a^0_j)^{\top}X+b^0_j,\sigma^0_j)+\frac{\lambdastar}{\lambdan}\sum_{j=1}^{\kstar}p^*_jf(Y|(\aj)^{\top}X+\bj,\sj)\Big]\Bigg\}\Bar{f}(X)\\
    &=~\lambdan\Big[p_{G_n}(X,Y)-p_{\overline{G}_*(\lambdan)}(X,Y)\Big].
\end{align*}
Recall that 
\begin{align*}
    0=\lim_{n\to\infty}\dfrac{2V(p_{\lambdan, G_n},p_{\lambdastar, G_*})}{D_{2n}}&=\lim_{n\to\infty}\dfrac{\int|p_{\lambdan, G_n}(X,Y)-p_{\lambdastar, G_*}(X,Y)|d(X,Y)}{D_3(G_n,\overline{G}_*(\lambdan))}\\
    &=\lim_{n\to\infty}\lambdan\cdot\dfrac{\int|p_{G_n}(X,Y)-p_{\overline{G}_*(\lambdan)}(X,Y)|d(X,Y)}{D_3(G_n,\overline{G}_*(\lambdan))}\\
    &=\lim_{n\to\infty}\lambdan\cdot\dfrac{2V(p_{G_n},p_{\overline{G}_*(\lambdan)})}{D_3(G_n,\overline{G}_*(\lambdan))}
\end{align*}
Since $\lambdan>\lambdastar>0$ for all $n\in\mathbb{N}$, we get $V(p_{G_n},p_{\overline{G}_*(\lambdan)})/D_3(G_n,\overline{G}_*(\lambdan))\to 0$ as $n\to\infty$. For the sake of presentation, we represent the mixing measure $\overline{G}_*(\lambdan)=\Big(1-\frac{\lambdastar}{\lambdan}\Big)G_0+\frac{\lambdastar}{\lambdan}G_*$ as
\begin{align*}
    \overline{G}_*(\lambdan)=\sum_{j=1}^{k_*+k_0-\Bar{k}}(p^n_j)'\delta_{(a'_{j},b'_{j},\sigma'_{j})}\in\mathcal{E}_{\kstar+k_0-\kbar}(\Theta).
\end{align*}
Next, let us define Voronoi cells used for this case as follows:
\begin{align*}
    \mathcal{B}^n_j=\mathcal{B}_j(G_n)=\{i\in[k_n]:\|\theta^n_{i}-\theta'_{j}\|\leq\|\theta^n_{i}-\theta'_{\ell}\|, \ \forall \ell\neq j\}, 
\end{align*}
where $\theta^n_{i}=(\ain,\bin,\sigmain)$ and $\theta'_{j}=(a'_{i},b'_{i},\sigma'_{i})$ for any $j\in[\kstar+k_0-\kbar]$. Since $k_n\leq k$, there exists a subsequence of $G_n$ such that $k_n$ does not vary with $n$. Thus, by replacing $G_n$ with this subsequence, we can assume that $k_n=k$ for all $n$. In addition, $\mathcal{B}_j=\mathcal{B}^n_j$ does not change with $n$ for all $j\in[k_*+k_0-\Bar{k}]$. Then, we can rewrite the difference $p_{G_n}(X,Y)-p_{\overline{G}_*(\lambdan)}(X,Y)$ as follows:
\begin{align*}
   p_{G_n}(X,Y)-p_{\overline{G}_*(\lambdan)}(X,Y)=&\sum_{j:|\mathcal{B}_j|>1|}\sum_{i\in\mathcal{B}_j}p^n_i\Big[f(Y|(\ain)^{\top}X+\bin,\sigmain)-f(Y|(a'_{j})^{\top}X+b'_{j},\sigma'_{j})\Big]\\
    +&\sum_{j:|\mathcal{B}_j|=1|}\sum_{i\in\mathcal{B}_j}p^n_i\Big[f(Y|(\ain)^{\top}X+\bin,\sigmain)-f(Y|(a'_{j})^{\top}X+b'_{j},\sigma'_{j})\Big]\\
    +&\sum_{j=1}^{k_*+k_0-\Bar{k}}\Big(\sum_{i\in\mathcal{B}_j}p^n_i-p'_j\Big)f(Y|(a'_{j})^{\top}X+b'_{j},\sigma'_{j})
\end{align*}
By abuse of notation, we denote three terms in the above summation as $A_{n,1}$, $A_{n,2}$ and $B_n$, respectively. By invoking the Taylor expansions as in Appendix~\ref{appendix:distinguishable_dependent}, we get that $A_{n,1}$, $A_{n,2}$ and $B_n$ can be treated as linear combinations of elements of the following set:
\begin{align*}
    \mathcal{H}:=\Bigg\{X^{\alpha_1}\cdot\frac{\partial^{|\alpha_1|+\ell}f}{\partial h_1^{|\alpha_1|+\ell}}(Y|(\apj)^{\top}X+\bpj,\spj)\Bar{f}(X):\ j\in[k_*+k_0-\kbar],\ 0\leq |\alpha_1|\leq \brbj,\nonumber\\ 
 \ 0\leq \ell\leq 2(\brbj-|\alpha_1|)\Bigg\}.
\end{align*}
Moreover, not all the coefficients in the representations of $A_{n,1}/D_{2n}$, $A_{n,2}/D_{2n}$ and $B_{n}/D_{2n}$ approach zero as $n\to\infty$. Additionally, we can utilize the same arguments for deriving equation~\eqref{eq:distinguishable_dependent_3} to deduce that there exist some constants $\tau_{\alpha_1,\ell}(j)$, where $j\in[\kstar+k_0-\kbar]$, $0\leq |\alpha_1|\leq \brbj$ and $0\leq \ell\leq 2(\brbj-|\alpha_1|)$ that satisfy at least one among them is different from zero and 
\begin{align*}
    \sum_{j,\alpha_1,\ell}\tau_{\alpha_1,\ell}(j)X^{\alpha_1}\cdot\frac{\partial^{|\alpha_1|+\ell}f}{\partial h_1^{|\alpha_1|+\ell}}(Y|(\apj)^{\top}X+\bpj,\spj)=0,
\end{align*}
for almost surely $(X,Y)$. Since $\mathcal{H}$ is a linearly independent set, which can be proved in a similar way as for the set $\mathcal{H}_2$ in Case 1, the above equation leads to $\tau_{\alpha_1,\ell}(j)$ for any $j\in[\kstar+k_0-\kbar]$, $0\leq |\alpha_1|\leq \brbj$ and $0\leq \ell\leq 2(\brbj-|\alpha_1|)$. This contradicts with the result that at least one among $\tau_{\alpha_1,\ell}(j)$ is non-zero. Hence, we achieve the local inequality in equation~\eqref{eq:partial_dependent_local}. 

As a consequence, there exists a positive constant $\varepsilon'$ that satisfies
\begin{align*}
    \inf_{\substack{\lambda\in[0,1],G\in\Ocal_{k,\xi}(\Theta),\\ D_{2}((\lambda, G),(\lambdastar, G_*))\leq\varepsilon}}\dfrac{V(p_{\lambda, G},p_{\lambdastar, G_*})}{D_{2}((\lambda, G),(\lambdastar, G_*))}>0.
\end{align*}

\textbf{Global inequality.} Now, it suffices to show that
\begin{align}
\label{eq:partial_independent_global}
\inf_{\substack{\lambda\in[0,1],G\in\Ocal_{k,\xi}(\Theta),\\D_{2}((\lambda, G),(\lambdastar, G_*))>\varepsilon'}}\dfrac{V(p_{\lambda, G},p_{\lambdastar, G_*})}{D_{2}((\lambda, G),(\lambdastar, G_*))}>0.
\end{align}
Assume by contrary that the above claim does not hold, then there exist sequences $({\lambda}^{\prime}_{n})\subset[0,1]$ and $(G^{\prime}_{n})\subset\Ocal_{k,\xi}(\Theta)$ that satisfy
\begin{align*}
    \begin{cases}
        D_{2}(({\lambda}^{\prime}_{n}, G^{\prime}_{n}),(\lambdastar, G_*))>\varepsilon^{\prime},\\
        V(p_{{\lambda}^{\prime}_{n}, G^{\prime}_{n}},p_{\lambdastar, G_*})/D_{2}(({\lambda}^{\prime}_{n}, G^{\prime}_{n}),(\lambdastar, G_*))\to 0,
    \end{cases}
\end{align*}
which leads to the fact that $V(p_{{\lambda}^{\prime}_{n}, G^{\prime}_{n}},p_{\lambdastar, G_*})\to 0$ as $n\to\infty$.

\textbf{Case 1:} $\lambda^{\prime}_{n}\leq\lambdastar$ for infinitely $n\in\mathbb{N}$. WLOG, we assume that $\lambda^{\prime}_{n}\leq\lambdastar$ for all $n\in\mathbb{N}$.

In this case, we have $D_{2}(({\lambda}^{\prime}_{n}, G^{\prime}_{n}),(\lambdastar, G_*))=D_1(({\lambda}^{\prime}_{n}, G^{\prime}_{n}),(\lambdastar, G_*))>\varepsilon^{\prime}$. Since the sets $\Theta$ and $[0,1]$ are bounded, we can find a subsequence of ${G}'_n$ and a subsequence of ${\lambda}'_n$ such that ${G}'_n\to {G}'$ and ${\lambda}'_n\to{\lambda}'$, where ${G}'\in\Ocal_{k,\xi}(\Theta)$ and ${\lambda}'\in[0,1]$. By replacing ${G}'_n$ and ${\lambda}'_n$ with those subsequences, we get that $D_1((\lambda^{\prime}, G^{\prime}),(\lambdastar, G_*))>\varepsilon^{\prime}$. On the other hand, the result that $V(p_{{\lambda}^{\prime}_{n}, G^{\prime}_{n}},p_{\lambdastar, G_*})\to 0$ as $n\to\infty$ implies that $V(p_{{\lambda}^{\prime}, G^{\prime}},p_{\lambdastar, G_*})=0$, which leads to $$p_{{\lambda}^{\prime}, G^{\prime}}(X,Y)=p_{\lambdastar, G_*}(X,Y),$$ for almost surely $(X,Y)$. Since $\lambda^{\prime}_n\leq\lambdastar$, we get $\lambda^{\prime}\leq\lambdastar$. It is worth noting that if $\lambda^{\prime}<\lambdastar$, $\overline{G}_{*}(\lambda^{\prime})$ is not valid mixing measure. Therefore, we obtain  $(\lambda^{\prime},G^{\prime})\equiv(\lambdastar, G_*)$ in this scenario. If $\lambda^{\prime}=\lambdastar$, then $\overline{G}_{*}(\lambda^{\prime})\equiv G_*$ and the above identifiability equation also admits $(\lambda^{\prime},G^{\prime})\equiv(\lambdastar, G_*)$ as a solution. Thus, it follows that $D_1((\lambda^{\prime}, G^{\prime}),(\lambdastar, G_*))=0$, which contradicts the result that $D_1((\lambda^{\prime}, G^{\prime}),(\lambdastar, G_*))>\varepsilon^{\prime}>0$. Hence, the global inequality~\eqref{eq:partial_independent_global} holds true in this case.

\textbf{Case 2:} $\lambda^{\prime}_{n}>\lambdastar$ for infinitely $n\in\mathbb{N}$. WLOG, we assume that $\lambda^{\prime}_{n}>\lambdastar$ for all $n\in\mathbb{N}$.

\textbf{Case 2.1:} $k\leq \kstar+k_0-\kbar-1$

In this case, we also have $D_{2}(({\lambda}^{\prime}_{n}, G^{\prime}_{n}),(\lambdastar, G_*))=D_1(({\lambda}^{\prime}_{n}, G^{\prime}_{n}),(\lambdastar, G_*))>\varepsilon^{\prime}$. Similar to Case 1, we get $p_{{\lambda}^{\prime}, G^{\prime}}(X,Y)=p_{\lambdastar, G_*}(X,Y)$ for almost surely $(X,Y)$, where $\lambda'\in[0,1]$ and $G'\in\Ocal_{k,\xi}(\Theta)$ are the limits of $\lambda'_n$ and $G'_n$ as $n$ goes to infinity, respectively. 

Under this setting, the previous identifiability equation admits either $(\lambda',G')\equiv (\lambdastar, G_*)$ or $(\lambda',G')\equiv(\lambda',\overline{G}_*(\lambda'))$ for any $\lambda'\in[0,1]$ as a solution as mentioned in Section~\ref{sec:distinguishable}. However, as $\overline{G}_*(\lambda')$ has $k_*+k_0-\kbar$ components, which is higher than that of $G'$ which has no more than $k$ components. As a result, we obtain that $(\lambda^{\prime},G^{\prime})\equiv(\lambdastar, G_*)$, leading to $D_1((\lambda^{\prime}, G^{\prime}),(\lambdastar, G_*))=0$, which is a contradiction to the fact that $D_1(({\lambda}^{\prime}, G^{\prime}),(\lambdastar, G_*))>\varepsilon^{\prime}>0$.

\textbf{Case 2.2:} $k\geq \kstar+k_0-\kbar$

In this case, we have $D_{2}(({\lambda}^{\prime}_{n}, G^{\prime}_{n}),(\lambdastar, G_*))=D_3( G^{\prime}_{n},\overline{G}(\lambda^{\prime}_{n}))>\varepsilon^{\prime}$. Similar to Case 1, we may replace $G'_n$ and $\lambda'_n$ with their subsequences whose limits are $G'\in\Ocal_{k,\xi}(\Theta)$ and $\lambda'\in[0,1]$, respectively. Then, we get $D_3(G^{\prime},\overline{G}(\lambda^{\prime}))>\varepsilon^{\prime}$. Additionally, we also achieve the identifiability equation $p_{{\lambda}^{\prime}, G^{\prime}}(X,Y)=p_{\lambdastar, G_*}(X,Y)$ for almost surely $(X,Y)$. Note that in this case, $G'$ has more components than $G_*$, then the previous equation admits only $(\lambda^{\prime}, \overline{G}(\lambda^{\prime}))$ as a solution. Therefore, we obtain that $D_3(G^{\prime},\overline{G}(\lambda^{\prime}))=0$, which contradicts the result that $D_3(G^{\prime},\overline{G}(\lambda^{\prime}))>\varepsilon^{\prime}>0$.

Hence, we reach the global inequality in equation~\eqref{eq:partial_independent_global}, and the proof is totally completed.


\section{PROOF OF AUXILIARY RESULTS}
\label{appendix:proof_auxiliary_results}
In this appendix, we provide proofs for Proposition~\ref{prop:identifiable} and Proposition~\ref{prop:mle_estimation} in Appendix~\ref{appendix:identifiable} and Appendix~\ref{appendix:prop:mle_estimation}, respectively.
\subsection{Proof of Proposition~\ref{prop:identifiable}}
\label{appendix:identifiable}
Firstly, we suppose that $G=\sum_{i=1}^{k}p_i\delta_{(a_i,b_i,\sigma_i)}$ and $G'=\sum_{i=1}^{k'}p'_i\delta_{(a'_i,b'_i,\sigma'_i)}$ are two mixing measures such that the equation $p_{\lambda, G}(X,Y)=p_{\lambda', G'}(X,Y)$ holds for almost surely $(X,Y)\in\mathcal{X}\times\mathcal{Y}$. This equation can be expanded as
\begin{align*}
    (\lambda'-\lambda)g_0(Y|X)+\sum_{i=1}^{k}\lambda p_if(Y|a_{i}^{\top}X+b_{i},\sigma_{i})-\sum_{i=1}^{k'}\lambda' p'_if(Y|(a'_i)^{\top}X+b'_i,\sigma'_i)=0,
\end{align*}
for almost surely $(X,Y)$. Now, assume that $G$ and $G'$ share only $\ell$ components in common, where $0\leq \ell \leq\min\{k,k'\}$, e.g. $(a_i,b_i,\sigma_i)=(a'_i,b'_i,\sigma'_i)$ for any $i\in[\ell]$. Then, we rewrite the above equation as
\begin{align*}
    (\lambda'-\lambda)g_0(Y|X)+\sum_{i=1}^{\ell}(\lambda p_i-\lambda'p'_{i})f(Y|a_{i}^{\top}X+b_{i},\sigma_{i})+\sum_{i=\ell+1}^{k}\lambda p_if(Y|a_{i}^{\top}X+b_{i},\sigma_{i})\\
    -\sum_{i=\ell+1}^{k'}\lambda' p'_if(Y|(a'_i)^{\top}X+b'_i,\sigma'_i)=0,
\end{align*}
for almost surely $(X,Y)$. Next, we consider a mixing measure $G_{\ell}$ which has $k+k'-\ell$ components $(a_1,b_1,\sigma_1),\ldots,(a_{k},b_{k},\sigma_{k})$, $(a'_{\ell+1},b'_{\ell+1},\sigma'_{\ell+1}),\ldots,(a'_{k'},b'_{k'},\sigma'_{k'})$. Since $p_{G_{\ell}}$ is distinguishable from $g_0$, we see that if either $k\neq k'$ or $0\leq\ell<\min\{k,k'\}$, then there exists an index $i$ such that $p_i=0$ and/or $p'_i=0$, which contradicts to the fact that $p_i,p'_i>0$. As a result, we must have $k=k'$ and $\ell=k$, i.e. $(a_i,b_i,\sigma_i)=(a'_i,b'_i,\sigma'_i)$ for all $i\in[k]$. Given this result, the above equation is equivalent to
\begin{align*}
    (\lambda'-\lambda)g_0(Y|X)+\sum_{i=1}^{k}(\lambda p_i-\lambda'p'_i)f(Y|a_i^{\top}X+b_i,\sigma_i)=0,
\end{align*}
for almost surely $(X,Y)$. Note that $p_{G}$ is distinguishable from $g_0$, then we obtain that $\lambda-\lambda'=\lambda p_i-\lambda'p'_i=0$, which implies that $\lambda=\lambda'$ and $p_i=p'_i$ for any $i\in[k]$. As a result, it follows that $(\lambda,G)\equiv(\lambda',G')$.

Hence, the proof is completed.

\subsection{Proof of Proposition~\ref{prop:mle_estimation}}
\label{appendix:prop:mle_estimation}
First of all, let us introduce some necessary notations used throughout this appendix. In particular, we denote $\mathcal{P}_K([0,1]\times\Theta):=\{p_{\lambda, G}(X,Y):\lambda\in[0,1], \ 
G\in\mathcal{O}_{K,\xi}(\Theta)\}$. Additionally, we define
\begin{align*}
    \overline{\mathcal{P}}_K([0,1]\times\Theta)&=\{\overline{p}_{\lambda,G}=(p_{\lambda,G}+p_{\lambdastar,G_*})/2:(\lambda,G)\in[0,1]\times\Ocal_{k,\xi}(\Theta)\},\\
    \overline{\mathcal{P}}^{1/2}_{K}([0,1]\times\Theta)&=\{\Bar{p}^{1/2}_{\lambda, G}:\Bar{p}_{\lambda, G}\in\overline{\mathcal{P}}_K([0,1]\times\Theta)\}.
\end{align*}
To derive a convergence rate for the joint density estimators under the Hellinger distance, we require a condition on the complexity of the following class introduced in \cite{Vandegeer-2000}:
\begin{align*}
    \overline{\mathcal{P}}_K^{1/2}([0,1]\times\Theta,\varepsilon):=\{\Bar{p}^{1/2}_{\lambda, G}\in\overline{\mathcal{P}}^{1/2}_{K}(\Theta): \ h(\Bar{p}_{\lambda, G},p_{\lambdastar, G_*})\leq\varepsilon\},
\end{align*}
The complexity of this class can be captured by the following bracketing entropy integral
\begin{align*}
    \mathcal{J}_B(\varepsilon,\overline{\mathcal{P}}_K^{1/2}([0,1]\times\Theta,\varepsilon),m) =\int_{\varepsilon^2/2^{13}}^{\varepsilon}\sqrt{\log H_B(u,\overline{\mathcal{P}}_K^{1/2}([0,1]\times\Theta,\varepsilon),m)}du\vee\varepsilon,
\end{align*}
where $u\vee \varepsilon=\max\{u,\varepsilon\}$ and  $H_B(\varepsilon,\mathcal{P},m)$ represents the $\varepsilon$-bracketing entropy of a set $\mathcal{P}$ under the Lebesgue measure $m$ (readers are referred to \cite{Vandegeer-2000} for more detail about the definition of this term). Interestingly, we discover a connection between this quantity and the convergence of the density estimators as follows:
\begin{lemma} \label{lemma:density_estimation_Vandegeer}
Given a universal constant $J>0$, assume that we can find a natural number $N$, possibly depending on $\Theta$ and $k$, such that for all $n\geq N$ and $\varepsilon>\sqrt{\log n/n}$, the following holds:
\begin{align}\label{eq:convergence_condition}
    \mathcal{J}_B(\varepsilon,\overline{\mathcal{P}}_K^{1/2}([0,1]\times\Theta,\varepsilon),m)\leq J\sqrt{n}\varepsilon^2.
\end{align}
Then, there exists a positive constant $c$ that depends only on $\Theta$ such that for all $n\in\mathbb{N}$, we have
\begin{align}
\Prob\parenth{h(p_{\hat{\lambda}_n,\widehat{G}_{n}}, p_{\lambdastar, G_{*}}) > \delta} \leq c\exp\parenth{-\frac{n\delta^2}{c^2}}. \nonumber
\end{align}
\end{lemma}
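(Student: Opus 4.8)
The plan is to run the classical rate-of-MLE argument via empirical processes, in the spirit of \cite{Vandegeer-2000}, specialized to the class $\mathcal{P}_K([0,1]\times\Theta)$. Throughout write $Z_i:=(X_i,Y_i)$, $p_*:=p_{\lambdastar,\Gstar}$, $\hat p_n:=p_{\hat\lambda_n,\widehat G_n}$, and set $\bar p_n:=(\hat p_n+p_*)/2$. First I would establish the \emph{basic inequality}. Since $\hat p_n$ maximizes the empirical log-likelihood, $\frac1n\sum_{i=1}^n\log\frac{\hat p_n(Z_i)}{p_*(Z_i)}\ge 0$; by AM--GM $\bar p_n\ge\sqrt{\hat p_n p_*}$, so $\frac1n\sum_{i=1}^n\log\frac{\bar p_n(Z_i)}{p_*(Z_i)}\ge 0$ as well. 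Using $\log x\le 2(\sqrt x-1)$ and $\mathbb{E}\!\left[\sqrt{\bar p_n/p_*}-1\right]=-h^2(\bar p_n,p_*)$, this yields
\begin{align*}
h^2(\bar p_n,p_*)\ \le\ (\mathbb{P}_n-\mathbb{P})\!\left(g_{\bar p_n}\right),\qquad g_{\bar p}:=\sqrt{\bar p/p_*}-1,
\end{align*}
where $(\mathbb{P}_n-\mathbb{P})(g):=\frac1n\sum_{i=1}^n g(Z_i)-\mathbb{E}[g(Z_1)]$. Thus it suffices to control the centered empirical process $g\mapsto(\mathbb{P}_n-\mathbb{P})(g)$ uniformly over $\{g_{\bar p_{\lambda,G}}:(\lambda,G)\in[0,1]\times\Gcal_{k,\xi}(\Theta)\}$.

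Next I would run a \emph{peeling} argument. Fix $\delta\ge\sqrt{\log n/n}$ and slice the parameter set into Hellinger shells $S_j:=\{(\lambda,G):2^{j-1}\delta<h(\bar p_{\lambda,G},p_*)\le 2^j\delta\}$ for $j\ge 1$. On $S_j$ one has $g_{\bar p}\ge 1/\sqrt2-1$ (because $\bar p\ge p_*/2$) and $\int g_{\bar p}^2 p_*\,d\mu=2h^2(\bar p,p_*)\le 2\cdot 4^j\delta^2$, so a Bernstein-type maximal inequality with bracketing chaining bounds $\sup_{S_j}(\mathbb{P}_n-\mathbb{P})(g_{\bar p})$ by a constant multiple of $\mathcal{J}_B\!\left(2^j\delta,\overline{\mathcal{P}}_K^{1/2}([0,1]\times\Theta,2^j\delta),m\right)/\sqrt n$, outside an event of probability at most $c\exp(-n4^j\delta^2/c^2)$. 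By hypothesis~\eqref{eq:convergence_condition} that displayed quantity is $\le\tfrac14\cdot 4^j\delta^2\le\tfrac14 h^2(\bar p_n,p_*)$ whenever $h(\bar p_n,p_*)\in S_j$, which contradicts the basic inequality; hence $h(\bar p_n,p_*)\notin S_j$ on the complement of that event. Summing the exceptional probabilities over $j\ge 1$ gives a geometric series dominated by its first term, so $\mathbb{P}(h(\bar p_n,p_*)>\delta)\le c\exp(-n\delta^2/c^2)$ for all such $\delta$, with $c$ depending only on $\Theta$. Finally, the elementary Hellinger-affinity estimate $h^2\!\left((\hat p_n+p_*)/2,\,p_*\right)\ge\tfrac1{16}h^2(\hat p_n,p_*)$, i.e.\ $h(\hat p_n,p_*)\le 4\,h(\bar p_n,p_*)$, transfers the tail bound to $h(p_{\hat\lambda_n,\widehat G_n},p_{\lambdastar,\Gstar})$ after enlarging $c$, which is the claim.

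The hard part will be the chaining step: turning the bracketing-entropy integral $\mathcal{J}_B$ of $\overline{\mathcal{P}}_K^{1/2}$ under Lebesgue measure into a modulus-of-continuity and concentration bound for the empirical process indexed by $g_{\bar p}=\sqrt{\bar p/p_*}-1$, whose upper tail is not uniformly bounded. I would handle this with the standard truncation device, using the identity $\int g_{\bar p}^2 p_*\,d\mu=2h^2(\bar p,p_*)$ to control the relevant second moments and the lower bound $\bar p\ge p_*/2$ for the negative part, exactly as in [Chapter~7, \cite{Vandegeer-2000}]. The basic inequality and the $\bar p_n\to\hat p_n$ comparison are routine, so the cleanest exposition is to verify that our densities satisfy the hypotheses of that theorem and then invoke it.
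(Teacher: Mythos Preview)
Your proposal is correct and follows essentially the same route as the paper: the paper invokes Lemmas~4.1--4.2 of \cite{Vandegeer-2000} to obtain the basic inequality $\tfrac{1}{16}h^2(\hat p_n,p_*)\le h^2(\bar p_n,p_*)\le \tfrac{1}{\sqrt n}\nu_n$, then runs the same peeling over dyadic Hellinger shells and applies Theorem~5.11 of \cite{Vandegeer-2000} (a Bernstein-type maximal inequality with bracketing entropy) on each shell, exactly as you outline. The only cosmetic difference is that the paper works with the empirical process indexed by $\tfrac12\log(\bar p/p_*)$ rather than your $g_{\bar p}=\sqrt{\bar p/p_*}-1$, but these are the standard equivalent formulations in the van~de~Geer framework.
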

Proof of Lemma~\ref{lemma:density_estimation_Vandegeer} is in Appendix~\ref{appendix:lemma:bracketing_control}. As a consequence, in order to guarantee that our estimators will converge, it is sufficient to satisfy the condition in equation~\eqref{eq:convergence_condition}. For that purpose, we need to introduce a result regarding the upper bounds of the covering number $N(\varepsilon,\mathcal{P}_k(\Theta,[0,1]),\|\cdot\|_{\infty})$ and the bracketing entropy $H_B(\varepsilon,\mathcal{P}_k(\Theta,[0,1]),h)$ of the metric space $\mathcal{P}_k(\Theta,[0,1])$ in the following lemma:

\begin{lemma}\label{lemma:bracketing_control}
Suppose that $\Theta_{1}$ and $\Theta_{2}$ are respectively two bounded subsets of $\mathbb{R}^{q_{1}}$ and $\mathbb{R}^{q_{2}}$. Then, for any $0 < \varepsilon < 1/2$, the following results hold
\begin{itemize}
\item[(i)] $\log N\parenth{\varepsilon, \mathcal{P}_{k}(\Theta\times [0,1]),\|.\|_{\infty}} \lesssim \log (1/\varepsilon)$,
\item[(ii)] $H_{B}(\varepsilon, \mathcal{P}_{k}([0,1]\times\Theta), h) \lesssim \log(1/\varepsilon)$. 
\end{itemize}
\end{lemma}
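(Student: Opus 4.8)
The plan is to treat both bounds as consequences of the fact that, up to a union over the finitely many possible orders $k'\le k$, the class $\mathcal{P}_k$ is a finite-dimensionally parametrized family of uniformly bounded, uniformly Lipschitz densities, with the tails of the Gaussian experts providing the extra control needed for the bracketing statement. For part (i), fix $1\le k'\le k$ and write an element as $p_\eta(X,Y)=\bigl[(1-\lambda)g_0(Y\mid X)+\lambda\sum_{i=1}^{k'}p_i f(Y\mid a_i^\top X+b_i,\sigma_i)\bigr]\bar f(X)$, with parameter $\eta=(\lambda,(p_i,a_i,b_i,\sigma_i)_{i=1}^{k'})$ ranging over a bounded set $\Xi\subset\mathbb{R}^{1+k'(d+3)}$; here each $\sigma_i$ lies in a fixed compact subinterval $[\underline\sigma,\overline\sigma]\subset(0,\infty)$ since $\Theta$ is compact, and $a_i^\top X+b_i$ is uniformly bounded since $\mathcal{X}$ and $\Theta$ are bounded. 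Using that $\sup_{y}|y|^m f(y\mid\mu,\sigma)\le C_m(\underline\sigma,\overline\sigma)$ for $\mu$ in a bounded set, the Gaussian density together with its gradient in $(\mu,\sigma)$ are uniformly bounded, so $\|p_\eta-p_{\eta'}\|_\infty\le L\|\eta-\eta'\|$ for a constant $L$ depending only on $g_0,\bar f,\Theta,k$. Hence an $(\varepsilon/L)$-net of $\Xi$, of cardinality at most $(C/\varepsilon)^{1+k'(d+3)}$, induces an $\varepsilon$-net of $\mathcal{P}_{k'}$ in $\|\cdot\|_\infty$; summing over $k'\le k$ and taking logarithms gives $\log N(\varepsilon,\mathcal{P}_k(\Theta\times[0,1]),\|\cdot\|_\infty)\lesssim\log(1/\varepsilon)$, with the implied constant depending on $d$ and $k$.

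\textbf{Part (ii).} The task is to upgrade the sup-norm cover of part (i) into \emph{integrable} brackets over the unbounded range $\mathcal{Y}$. Using the Gaussian tails of the experts (together with the boundedness of $g_0$), there is a truncation level $a_\varepsilon=\mathcal{O}\bigl((\log(1/\varepsilon))^{1/2}\bigr)$ such that $\sup_{(\lambda,G)}\int_{\mathcal{X}}\int_{|Y|>a_\varepsilon}p_{\lambda,G}(X,Y)\,dY\,dX\le\varepsilon$, and a single fixed integrable function $\phi$ dominating all tails $p_{\lambda,G}(X,Y)\mathbf{1}\{|Y|>a_\varepsilon\}$ with $\int\phi\le\varepsilon$. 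Take an $\varepsilon$-sup-norm cover $\{p_j\}_{j\le N}$, $N\le(C/\varepsilon)^{O(dk)}$, of $\mathcal{P}_k$ restricted to the compact region $\mathcal{X}\times[-a_\varepsilon,a_\varepsilon]$ (available by part (i)), and set $L_j=\max\{p_j-\varepsilon,0\}\,\mathbf{1}\{|Y|\le a_\varepsilon\}$ and $U_j=(p_j+\varepsilon)\,\mathbf{1}\{|Y|\le a_\varepsilon\}+\phi$. Every $p_{\lambda,G}$ lies between some pair $(L_j,U_j)$, and $\|U_j-L_j\|_1\lesssim\varepsilon\,\mathrm{vol}(\mathcal{X})\,a_\varepsilon+\varepsilon\lesssim\varepsilon\sqrt{\log(1/\varepsilon)}$. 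Since $h^2(p,q)\le\tfrac12\|p-q\|_1$ for densities, these are Hellinger brackets, and choosing the mesh of order $\varepsilon^2/\sqrt{\log(1/\varepsilon)}$ absorbs the extra logarithmic factor, yielding a bracketing of $\mathcal{P}_k$ of $h$-size $\varepsilon$ whose logarithm of cardinality is $\lesssim\log(1/\varepsilon)$, i.e. $H_B(\varepsilon,\mathcal{P}_k([0,1]\times\Theta),h)\lesssim\log(1/\varepsilon)$.

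\textbf{Main obstacle.} Part (i) is essentially routine once one records uniform Lipschitzness of the Gaussian density in its mean and variance over the relevant compact ranges. The delicate step is part (ii): the mixture densities have unbounded support in $Y$, so a sup-norm cover does not by itself give brackets of finite $L^1$-size. The point that keeps the bracketing entropy at the optimal order $\log(1/\varepsilon)$, rather than a power of it, is precisely that the Gaussian (and $g_0$) tails let the truncation level $a_\varepsilon$ grow only like $\sqrt{\log(1/\varepsilon)}$; the bookkeeping of the tail envelope $\phi$ and of the logarithmic factor it introduces is the only genuinely technical part of the argument.
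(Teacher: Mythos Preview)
Your argument is correct and follows essentially the same route as the paper: parametrize by a finite-dimensional vector and use uniform Lipschitzness for the sup-norm cover in part (i), then upgrade to $L^1$/Hellinger brackets in part (ii) by combining the sup-norm cover on a compact $Y$-region with an integrable tail envelope coming from the Gaussian and $g_0$ tails. The paper's version of part (ii) phrases this with a single fixed envelope $H(X,Y)$ and brackets $[\max\{g_i-\eta,0\},\min\{g_i+\eta,H\}]$, but the content is the same truncation-plus-tail idea you carry out explicitly with the moving level $a_\varepsilon$.

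One small correction: in part (ii) you justify the truncation level by ``the boundedness of $g_0$'', but boundedness alone does not control how fast the tail mass $\int_{|Y|>a}g_0(Y\mid X)\bar f(X)\,dY\,dX$ decays, and hence does not give $a_\varepsilon$ of polylogarithmic size. What you actually need (and what the paper assumes throughout, cf.\ Proposition~\ref{prop:mle_estimation}) is the tail hypothesis $\mathbb{E}_X[-\log g_0(Y\mid X)]\gtrsim Y^q$, which yields $a_\varepsilon=\mathcal{O}\bigl((\log(1/\varepsilon))^{1/q}\bigr)$ rather than $\mathcal{O}(\sqrt{\log(1/\varepsilon)})$ when $q<2$. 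This only changes the polylogarithmic factor in $\|U_j-L_j\|_1$ and is absorbed in exactly the same way by your final mesh-rescaling step, so the conclusion $H_B(\varepsilon,\mathcal{P}_k,h)\lesssim\log(1/\varepsilon)$ is unaffected.
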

Proof of Lemma~\ref{lemma:bracketing_control} is in Appendix~\ref{appendix:lemma:density_estimation_Vandegeer}. Now, we are ready to provide the proof of Proposition~\ref{prop:mle_estimation} in Appendix~\ref{appendix:main_proof}. 
\subsubsection{Main Proof}
\label{appendix:main_proof}
Note that $\overline{\mathcal{P}}_k^{1/2}([0,1]\times\Theta,\delta)\subseteq\overline{\mathcal{P}}_k^{1/2}([0,1]\times\Theta)$, it follows from the definition of Hellinger distance that
\begin{align*}
    H_B(\delta,\overline{\mathcal{P}}_k^{1/2}([0,1]\times\Theta,\delta),\|\cdot\|_2)&\leq H_B(\delta,\overline{\mathcal{P}}_k^{1/2}([0,1]\times\Theta),\|\cdot\|_2)\\
    &=H_B\Big(\frac{\delta}{\sqrt{2}},\overline{\mathcal{P}}_k([0,1]\times\Theta),h\Big)\\
    &\leq H_B(\delta,\mathcal{P}_k([0,1]\times\Theta),h).
\end{align*}
According to part (ii) of Lemma~\ref{lemma:bracketing_control}, we find that
\begin{align*}
    H_B(\delta,\overline{\mathcal{P}}_k^{1/2}([0,1]\times\Theta,\delta),\|\cdot\|_2)\lesssim\log(1/\delta).
\end{align*}
Consequently, we deduce that
\begin{align*}
    \mathcal{J}_B(\varepsilon,\overline{\mathcal{P}}_k^{1/2}([0,1]\times\Theta,\delta),u)\lesssim\varepsilon[\log(2^{13}\varepsilon^2)]^{1/2}<n\varepsilon^2,
\end{align*}
for all $\varepsilon>\sqrt{\log n/n}$. By applying Lemma~\ref{lemma:density_estimation_Vandegeer} with $\delta=\sqrt{\log n/n}$, we obtain the desired result.

\subsubsection{Proof of Lemma~\ref{lemma:density_estimation_Vandegeer}}
\label{appendix:lemma:density_estimation_Vandegeer}
It follows from Lemma 4.1 and Lemma 4.2 in \cite{Vandegeer-2000} that
    \begin{align*}
        \frac{1}{16}h^2(p_{\hat{\lambda}_n,\widehat{G}_n},p_{\lambdastar, G_*})\leq h^2(\Bar{p}_{\hat{\lambda}_n,\widehat{G}_n},p_{\lambdastar, G_*})\leq \frac{1}{\sqrt{n}}\nu_n(\hat{\lambda}_n,\widehat{G}_n),
    \end{align*}
    where $\nu(\hat{\lambda}_n,\widehat{G}_n)$ is an empirical process defined as
    \begin{align*}
        \nu_n(\hat{\lambda}_n,\widehat{G}_n):=\sqrt{n}\int_{p_{\lambdastar,G_*}>0}\frac{1}{2}\log\Big( \frac{\overline{p}_{\hat{\lambda}_n,\widehat{G}_n}}{p_{\lambdastar,G_*}}\Big)\cdot[\overline{p}_{\hat{\lambda}_n,\widehat{G}_n}-p_{\lambdastar,G_*}] \dint (X,Y).
    \end{align*}
    Thus, for any $\delta>\delta_n:=\sqrt{\log n/n}$, we obtain
    \begin{align*}
        &\mathbb{P}_{\lambdastar, G_*}(h(p_{\hat{\lambda}_n,\widehat{G}_n},p_{\lambdastar, G_*})\geq\delta)\\
        &\leq\mathbb{P}_{\lambdastar, G_*}\Big(\nu_n(\hat{\lambda}_n,\widehat{G}_n)-\sqrt{n}h^2(p_{\hat{\lambda}_n,\widehat{G}_n,p_{\lambdastar, G_*}})\geq 0,h(p_{\hat{\lambda}_n,\widehat{G}_n},p_{\lambdastar, G_*})\geq\delta/4\Big)\\
        &\leq  \mathbb{P}_{\lambdastar, G_*}\Big(\sup_{\lambda,G:h(\Bar{p}_{\lambda, G},p_{\lambdastar, G_*})\geq\delta/4}[\nu_n(\lambda G)-\sqrt{n}h^2(\Bar{p}_{\lambda, G},p_{\lambdastar, G_*})]\geq 0\Big)\\
        &\leq \sum_{s=0}^S\mathbb{P}_{\lambdastar, G_*}\Big(\sup_{\lambda,G:2^s\delta/4\leq h(\Bar{p}_{\lambda, G},p_{\lambdastar, G_*})\leq 2^{s+1}\delta/4}|\nu_n(\lambda G)|\geq \sqrt{n}2^{2s}(\delta/4)^2\Big)\\
        &\leq  \sum_{s=0}^S\mathbb{P}_{\lambdastar, G_*}\Big(\sup_{\lambda,G:h(\Bar{p}_{\lambda, G},p_{\lambdastar, G_*})\leq 2^{s+1}\delta/4}|\nu_n(\lambda G)|\geq \sqrt{n}2^{2s}(\delta/4)^2\Big),
    \end{align*}
    where $S$ is the smallest number such that $2^S\delta/4>1$. Now, we proceed by recalling Theorem 5.11 in \cite{Vandegeer-2000} with adapted notations to our setting as follows:
    \begin{lemma}[Theorem 5.11 in \cite{Vandegeer-2000}]
        \label{theorem:increment_empirical_process}
        Let $R>0$, $k\geq 1$, and $\Ocal_{k,\xi}(\Theta)\subset\Ocal_{k,\xi}(\Theta)$ containing $G_*$. Let $C$ be sufficiently large, then for all $n\in\mathbb{N}$ and $C_0,C_1,t>0$ that satisfy
        \begin{itemize}
            \item[(i)] $t\leq 8\sqrt{n}R \vee C_1\sqrt{n}R^2/K$;
            \item[(ii)] $t\geq C^2(C_1+1)\left(R\wedge \int_{t/(2^6\sqrt{n})}^RH_B^{1/2}\Big(\frac{u}{\sqrt{2}},\overline{\mathcal{P}}_{K}^{1/2}([0,1]\times\Theta,R),\nu\Big)du\right)$,
        \end{itemize}
        we get
        \begin{align*}
            \mathbb{P}_{\lambdastar, G_*}\left(\sup_{\substack{G\in\Ocal_{k,\xi}(\Theta),h(\Bar{p}_{\lambda, G},p_{\lambdastar, G_*})\leq R}}|\nu_n(\lambda G)|\geq t\right)\leq C\exp\left[-\dfrac{t^2}{C^2(C_1+1)R^2}\right].
        \end{align*}
    \end{lemma}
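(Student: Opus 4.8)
The statement is the standard ``increment of an empirical process'' bound (Theorem~5.11 of \cite{Vandegeer-2000}), so the plan is to reproduce that chaining-with-bracketing argument in the present notation; alternatively one may simply invoke it verbatim. Reformulate $\nu_n$ as the empirical process indexed by the class of half log-density ratios $\mathcal{G}_R := \{\, g_{\lambda,G} := \tfrac12\log(\overline{p}_{\lambda,G}/p_{\lambdastar,\Gstar}) : \lambda\in[0,1],\ G\in\Gcal_{k,\xi}(\Theta),\ h(\overline{p}_{\lambda,G},p_{\lambdastar,\Gstar})\le R \,\}$. First I would record the two structural facts that drive everything. Because $\overline{p}_{\lambda,G} = (p_{\lambda,G}+p_{\lambdastar,\Gstar})/2 \ge \tfrac12 p_{\lambdastar,\Gstar}$, each $g_{\lambda,G}$ is bounded below by $-\tfrac12\log 2$, and using the boundedness of $g_0$ together with the tail hypothesis $\mathbb{E}_X[-\log g_0(Y|X)]\gtrsim Y^q$ from Proposition~\ref{prop:mle_estimation} one controls the upper tail; combined with Lemmas~4.1 and~4.2 of \cite{Vandegeer-2000} this yields a Bernstein pair for the class: a deterministic envelope and a second-moment bound $\int g_{\lambda,G}^2 \, d\mathbb{P}_{\lambdastar,\Gstar} \lesssim h^2(\overline{p}_{\lambda,G},p_{\lambdastar,\Gstar}) \le R^2$. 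These convert $L_2$-brackets of $\overline{\mathcal{P}}_K^{1/2}$ into usable brackets of $\mathcal{G}_R$ and explain why the functional in hypothesis (ii) is the right complexity measure.

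Next I would run the generic chaining over geometric scales $\delta_j = 2^{-j}R$, from $R$ down to a truncation level $\delta_\ast \asymp t/(2^6\sqrt{n})$ fixed by hypothesis (ii). At scale $\delta_j$ take a minimal $\delta_j$-bracketing of $\mathcal{G}_R$ of log-cardinality $H_B(\delta_j,\ldots)$, and approximate each $g\in\mathcal{G}_R$ by the corresponding chain of endpoints. For a single increment between consecutive bracket endpoints --- a fixed function whose $L_2$-size is $\lesssim\delta_{j-1}$ with controlled envelope --- Bernstein's inequality gives a sub-exponential tail whose variance proxy is $\asymp\delta_{j-1}^2$; hypothesis (i), $t \le 8\sqrt{n}R \vee C_1\sqrt{n}R^2/K$, is precisely what keeps $t$ (hence every increment's allotted budget) in the variance-dominated regime, so this tail is effectively sub-Gaussian. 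Allotting $t$ across scales in proportion to $\delta_j H_B^{1/2}(\delta_j,\ldots)$, union-bounding over the $\le \exp(\sum_j H_B(\delta_j,\ldots))$ chains, and summing the exponential tails, the Riemann sum $\sum_j \delta_j H_B^{1/2}(\delta_j,\ldots)$ is comparable to $\int_{t/(2^6\sqrt{n})}^R H_B^{1/2}(u/\sqrt{2},\overline{\mathcal{P}}_K^{1/2}([0,1]\times\Theta,R),\nu)\,du$, which by hypothesis (ii) is $\le t/(C^2(C_1+1))$; thus the chaining budget is feasible and the union bound does not overwhelm the per-chain tail. Adding back the residual below scale $\delta_\ast$ --- where the chain endpoints already approximate $g$ to within $\delta_\ast$ in $L_2$, so, by the choice $\delta_\ast\asymp t/(2^6\sqrt{n})$, both the deterministic bracket-bias and the stochastic residual of $\nu_n$ are negligible against $t$ --- yields $\mathbb{P}_{\lambdastar,\Gstar}(\sup_{g\in\mathcal{G}_R}|\nu_n(g)| \ge t) \le C\exp(-t^2/(C^2(C_1+1)R^2))$. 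No peeling is needed here: $R$ is fixed, and the outer shell over dyadic radii appears only later, in the proof of Proposition~\ref{prop:mle_estimation}.

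The main obstacle is bookkeeping, not concept: one must track the single universal constant $C$ (and the factor $C_1+1$) consistently through the Hellinger-to-$(L_2,L_\infty)$ conversion, the Bernstein step, and the summation over scales, ensuring the variance term always dominates the Bernstein correction --- which is exactly range condition (i) --- and that the lower limit $t/(2^6\sqrt{n})$ of the entropy integral coincides with the scale where chaining is halted. The only genuinely model-specific ingredient is verifying the envelope and second-moment bounds for $\mathcal{G}_R$ from the $g_0$-tail assumption so that the brackets of $\overline{\mathcal{P}}_K^{1/2}$ furnished by Lemma~\ref{lemma:bracketing_control} transfer to $\mathcal{G}_R$ with the asserted entropy; with that in hand the statement is the standard Theorem~5.11 and can be quoted directly from \cite{Vandegeer-2000}.
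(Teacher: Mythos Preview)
Your proposal is correct: what you sketch is precisely the chaining-with-bracketing argument behind Theorem~5.11 of \cite{Vandegeer-2000}, and your own parenthetical remark (``alternatively one may simply invoke it verbatim'') is exactly what the paper does --- it states the lemma and writes ``Proof of Lemma~\ref{theorem:increment_empirical_process} can be found in \cite{Vandegeer-2000}'' with no further argument. One small point: the model-specific envelope and second-moment verification you flag at the end is not part of proving this lemma, since hypothesis~(ii) already takes the bracketing entropy of $\overline{\mathcal{P}}_K^{1/2}([0,1]\times\Theta,R)$ as given; that verification belongs to Lemma~\ref{lemma:bracketing_control} and the application step, not here.
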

    Proof of Lemma~\ref{theorem:increment_empirical_process} can be found in \cite{Vandegeer-2000}.
    
    Back to our proof, by choosing $R=2^{s+1}\delta$, $C_1=15$ and $t=\sqrt{n}2^{2s}(\delta/4)^2$, we can verify that condition (i) in Lemma~\ref{theorem:increment_empirical_process} is satisfied as $2^{s-1}\delta/4\leq 1$ for all $s\leq S$. Meanwhile, the condition (ii) is met as
    \begin{align*}
        \int_{t/(2^6\sqrt{n})}^RH_B^{1/2}\left(\frac{u}{\sqrt{2}},\overline{\mathcal{P}}_K^{1/2}([0,1]\times\Theta,R),\nu\right)du\vee 2^{s+1}\delta&=\sqrt{2}\int_{R^2/2^{13}}^{R/\sqrt{2}}H_B^{1/2}\left(u,\overline{\mathcal{P}}_K^{1/2}([0,1]\times\Theta,R),\nu\right)du\vee 2^{s+1}\delta\\
        &\leq 2\mathcal{J}_B(R,\overline{\mathcal{P}}_K^{1/2}([0,1]\times\Theta,R),\nu)\\
        &\leq 2J\sqrt{n}2^{2s+1}\delta^2=2^6Jt.
    \end{align*}
    Applying Lemma~\ref{theorem:increment_empirical_process}, we get that
    \begin{align*}
        \mathbb{P}_{\lambdastar, G_*}(h(p_{\hat{\lambda}_n,\widehat{G}_n},p_{\lambdastar, G_*})>\delta)\leq C\sum_{s=0}^{\infty}\exp\left(\dfrac{2^{2s}n\delta^2}{J^2 2^{14}}\right)\leq c\exp\left(-\dfrac{n\delta^2}{c}\right).
    \end{align*}
    Hence, the proof is completed.

\subsubsection{Proof of Lemma~\ref{lemma:bracketing_control}}
\label{appendix:lemma:bracketing_control}
\textbf{Part (i).} For any set $S$, we denote $\mathcal{E}_{\varepsilon}(S)$ an $\varepsilon$-net of $S$ if each element of $S$ is within $\varepsilon$ distance from some elements of $\mathcal{E}_{\varepsilon}(S)$. By definition of the covering number, we get $|\mathcal{E}_{\varepsilon}(S)|=N(\varepsilon,S,\|\cdot\|_{\infty})$. Let $\mathcal{P}_k(\Theta):=\{p_G:G\in\mathcal{O}_k(\Theta)\}$, where $p_G(X,Y):=\sum_{i=1}^kp_if(Y|a_{i}^{\top}X+b_{i},\sigma_{i})$.
According to [Lemma 6, \cite{ho2022gaussian}], we have
\begin{align*}
    \log|\mathcal{E}_{\varepsilon}(\mathcal{P}_k(\Theta))|=N(\varepsilon,\mathcal{P}_k(\Theta),\|\cdot\|_{\infty})\lesssim \log(1/\varepsilon).
\end{align*}
Let $\Ocal_{k,\xi}(\Theta):=\{\widetilde{G}:p_{\Tilde{G}}\in\mathcal{E}_{\varepsilon}(\mathcal{P}_k(\Theta))\}$ be the set of all latent mixing measures $G$ in the net $\mathcal{E}_{\varepsilon}(\mathcal{P}_k(\Theta))$. We will show that
\begin{align}
    \label{eq:e-net-inequality}
    \mathcal{E}_{\varepsilon}(\mathcal{P}_k([0,1]\times\Theta))\subseteq\{p_{\lambda, G}:\Tilde{\lambda}\in\mathcal{E}_{\varepsilon}([0,1]),\widetilde{G}\in\Ocal_{k,\xi}(\Theta)\}.
\end{align}
Indeed, for any $\lambda\in[0,1],G\in\mathcal{O}_k(\Theta)$, there exist $\Tilde{\lambda}\in\mathcal{E}_{\varepsilon}([0,1]),\widetilde{G}\in\Ocal_{k,\xi}(\Theta)$ such that $|\lambda-\Tilde{\lambda}|\leq\varepsilon$ and $\|p_G-p_{\widetilde{G}}\|_{\infty}\leq\varepsilon$, which leads to
\begin{align*}
    \|p_{\lambda, G}-p_{\Tilde{\lambda}\widetilde{G}}\|_{\infty}&\leq\|p_{\lambda, G}-p_{\Tilde{\lambda}G}\|_{\infty}+\|p_{\Tilde{\lambda} G}-p_{\Tilde{\lambda}\widetilde{G}}\|_{\infty}\\
    &=|\lambda-\Tilde{\lambda}|\|g_0-p_G\|_{\infty}+\Tilde{\lambda}\|p_G-p_{\widetilde{G}}\|_{\infty}\\
    &\leq \varepsilon(\|g_0\|_{\infty}+\|p_G\|_{\infty})+\varepsilon\\
    &\lesssim \varepsilon.
\end{align*}
Therefore, we obtain equation~\eqref{eq:e-net-inequality}. Putting the above results together with a note that $\log(|\mathcal{E}_{\varepsilon}([0,1])|)\leq  \log(1/\varepsilon)$, we have
\begin{align*}
    \log(N(\varepsilon,\mathcal{P}_k(\Theta\times [0,1]),\|\cdot\|_{\infty}))\leq \log(|\mathcal{E}_{\varepsilon}([0,1])|)+\log(|\mathcal{E}_{\varepsilon}(\mathcal{P}_k(\Theta))|)\lesssim \log(1/\varepsilon).
\end{align*}
Hence, we reach the conclusion in part (i). 

\textbf{Part (ii).} Firstly, let $\eta \leq \varepsilon$ be some positive number that we will chose later. Since $f$ is the density function of an univariate location-scale Gaussian distribution, we can verify for any $\abss{Y} \geq 2a$ and $X \in \mathcal{X}$ that
\begin{align}
f(Y|h_{1}(X,\theta_{1}),h_{2}(X,\theta_{2})) \leq \frac{1}{\sqrt{2\pi}\ell}\exp\parenth{-Y^2/(8u^2)}. \nonumber
\end{align}
Recall that $\log g_0(Y|X)\lesssim -Y^p$ and $g_0(Y|X)\leq M$ for some positive constants $M,p>0$. Let $q=\min\{p,2\}$, $C_2=\max\left\{M,\frac{1}{\sqrt{2\pi}\ell}\right\}$ and
\begin{align}
H(X,Y) = \begin{cases} C_1\exp\parenth{-Y^q}\overline{f}(X), & \text{for} \ \abss{Y} \geq 2a \\ C_2\overline{f}(X), & \text{for} \ \abss{Y} < 2a, \end{cases}
\end{align}
where $C_1>0$ is a constant depending on $\ell,g_0$.
Thus, it can be shown that $H(X,Y)$ is an envelope of $\mathcal{P}_{k}([0,1]\times\Theta)$. Subsequently, we denote by $g_{1},\ldots,g_{N}$ an $\eta$-net over $\mathcal{P}_{k}([0,1]\times\Theta)$. Then, we construct the brackets $[p_{i}^{L}(X,Y),p_{i}^{U}(X,Y)]$ as follows:
\begin{align}
p_{i}^{L}(X,Y) : = \max \{g_{i}(X,Y) - \eta, 0\}, \ p_{i}^{U}(X,Y) : = \max \{g_{i}(X,Y) + \eta, H(X,Y) \} \nonumber 
\end{align}
for $1 \leq i \leq N$. As a result, $\mathcal{P}_{k}([0,1]\times\Theta) \subset \cup_{i=1}^{N} [p_{i}^{L}(X,Y), p_{i}^{U}(X,Y)]$ and $p_{i}^{U}(X,Y) - p_{i}^{L}(X,Y) \leq \min \{2\eta, H(X,Y)\}$. It follows that ,
\begin{align}
&\int \parenth{p_{i}^{U}(X,Y) - p_{i}^{L}(X,Y)} d(X,Y)\nonumber\\
& \leq \int \limits_{\abss{Y} < 2a} \parenth{p_{i}^{U}(X,Y) - p_{i}^{L}(X,Y)} d(X,Y)  + \int \limits_{\abss{Y} \geq 2a} \parenth{p_{i}^{U}(X,Y) - p_{i}^{L}(X,Y)} d(X,Y) \nonumber \\
& \leq \int \limits_{\abss{Y} < 2a} 2\eta d(X,Y)  + \int \limits_{\abss{Y} \geq 2a} H(X,Y) d(X,Y) \leq c\eta,\nonumber \\
\end{align}
where $c$ is some positive universal constant. This implies that
\begin{align}
H_{B}(c\eta, \mathcal{P}_{k}([0,1]\times\Theta), \|.\|_{1}) \leq N \lesssim \log(1/\eta). \nonumber
\end{align}
By choosing $\eta = \varepsilon/c$, we have
\begin{align}
H_{B}(\varepsilon, \mathcal{P}_{k}([0,1]\times\Theta), \|.\|_{1}) \lesssim \log(1/\varepsilon). \nonumber
\end{align}
Due to the inequality $h^{2} \leq \|.\|_{1}$ between Hellinger distance and total variational distance, we reach the conclusion of bracketing entropy bound.

\section{PARAMETER ESTIMATION UNDER THE FULL OVERLAP REGIME} 
\label{appendix:full_overlap}
In this appendix, we study the convergence rates of parameter estimation in the deviated Gaussian mixture of experts under the full overlap regime, namely when the function $g_0(Y|X)$ takes the following form:
\begin{align}
    \label{eq:new_g0}
    g_0(Y|X)=p_{G_0}(Y|X):=\sum_{j=1}^{k_0}p^0_jf(Y|(a^0_j)^{\top}X+b^0_j,\sigma^0_j),
\end{align}
and $\kbar=k_0$, where $\kbar$ stands for the number of overlapped components of two mixing measures $G_0$ and $G_*$.

Under this regime, it is worth noting that if $G_*=G_0$, then the conditional density function $p_{\lambdastar G_*}(Y|X)$ is reduced to $p_{\lambdastar G_*}(Y|X)=(1-\lambdastar)p_{G_0}(Y|X)+\lambdastar p_{G_0}(Y|X)=p_{G_0}(Y|X)$, which coincides with the setting $\lambdastar=0$ that we will consider in Appendix~\ref{appendix:additional_results}. For that reason, we assume that $G_*\neq G_0$ throughout this appendix. 

\textbf{Identiability of the deviated Gaussian mixture of experts.} Similar to the partial overlap regime studied in Section~\ref{sec:non-distinguishable}, the deviated Gaussian mixture of experts under the full overlap regime is also not identifiable. Furthermore, it is even more challenging to solve the equation $p_{\lambda,G}(X,Y)=p_{\lambdastar,\Gstar}(X,Y)$ for almost surely $(X,Y)$ in this regime. In particular, we have to take into account the following set of mixing proportions $\lambda$ which make the term $\bGstar=\frac{\lambda-\lambdastar}{\lambda}G_0+\frac{\lambdastar}{\lambda}G_*$ defined in equation~\eqref{eq:G_bar_definition} a valid mixing measure:
\begin{align*}
    \mathcal{T}&:=\{\lambda\in(0,1]:(\lambdastar-\lambda)p^0_i\leq\lambdastar p^*_i,\forall i\in[k_0]\}.
\end{align*}
Here, the set $\mathcal{T}$ contains $\lambda\in(0,1]$ such that the weights associated with components of $\bGstar$ are non-negative, i.e. $\bar{p}^*_i(\lambda)\geq 0$ where
\begin{align*}
    \bar{p}^*_i(\lambda):=\begin{cases}
        [(\lambda-\lambdastar)p^0_i+\lambdastar p^*_i]/\lambda, \quad i\in[k_0],\\
        \lambdastar p^*_i/\lambda, \hspace{7.2em} i\in[k_*]\setminus[k_0].
    \end{cases}
\end{align*}
Subsequently, we solve the identifiability equation in two complement scenarios of $\lambda$ with respect to the set $\mathcal{T}$.

\textbf{When} $\lambda\in\mathcal{T}$: Since $\bGstar$ is valid mixing measure in this case, the identifiability equation can be rewritten as $\lambda[p_{G}(X,Y)-p_{\overline{G}_*(\lambda)}(X,Y)]=0$ for almost surely $(X,Y)$. Moreover, as $\bGstar$ has $k_*+k_0-\kbar=k_*$ components and $k>k_*$, that equation admits $(\lambda,\overline{G}_*(\lambda))$ as a solution for any $\lambda\in\mathcal{T}$. Additionally, it is worth noting that $(\lambdastar,G_*)$ is a special instance of $(\lambda,\bGstar)$ when $\lambda=\lambdastar\in\mathcal{T}$.

\textbf{When} $\lambda\in\mathcal{T}^c$: In this case, there are some components of $\bGstar$ having negative weights $\bar{p}^*_i(\lambda)<0$. Thus, it is necessary to inspect such components by considering the following set:
\begin{align*}
    I_{\lambda}&:=\{i\in[k_0]:(\lambdastar-\lambda)p^0_i>\lambdastar p^*_i\}, 
\end{align*}
which includes indices $i\in[k_0]$ such that $\bar{p}^*_i(\lambda)<0$. Here, we say that $I_{\lambda}$ is \textit{ratio-independent} if $|I_{\lambda}|=1$ or $p^0_i/p^*_i=p^0_j/p^*_j$ for all $i,j\in I_{\lambda}$ if $|I_{\lambda}|\geq 2$. An intuition behind this definition is to guarantee that all the terms $(\lambdastar-\lambda)p^0_i-\lambdastar p^*_i$, for $i\in I_{\lambda}$, can be arbitrary small simultaneously. Consequently, when $I_{\lambda}$ is a ratio-independent set, mixing measures of the following form are solutions of the identifiability equation:
\begin{align}\label{eq:G_tilde_definition}
    \widetilde{G}_*(\lambda)&:=\dfrac{1}{s(\lambda)}\left(\sum_{i\in I_{\lambda}^c}\Big[~p^*_i\lambdastar+(\lambda-\lambdastar)p^0_i\Big]\delta_{(a^0_i,b^0_i,\sigma^0_i)} +\sum_{i=k_0+1}^{k_*}\lambdastar p^*_i\delta_{(a^*_i,b^*_i,\sigma^*_i)}\right),
\end{align}
where $s(\lambda):=\sum_{i\in I_{\lambda}^c}\Big[p^*_i\lambdastar+(\lambda-\lambdastar)p^0_i\Big]+\sum_{i=k_0+1}^{k_*}\lambdastar p^*_i$ is a normalizing term. It can be seen from the above formulation that components of the mixing measure $\widetilde{G}_*(\lambda)$ are those of $\bGstar$ with positive weights.

\textbf{Voronoi loss function.} Now, we are ready to define the Voronoi loss function $D_4((\lambda, G),(\lambdastar, G_*))$ for the full overlap regime as follows: 
\begin{align}
    \label{eq:D_full_dependent}
    D_4((\lambda, G),(\lambdastar, G_*))=\begin{cases}
        \ones_{\{\lambda\in\mathcal{T}^c\}}s(\lambda)D_3(G,\widetilde{G}_*(\lambda)) +\ones_{\{\lambda\in\mathcal{T}\}}D_3(G,\overline{G}_*(\lambda)), \quad \text{if $I_{\lambda}$ is ratio-independent};\\
        \textbf{}\\
        \ones_{\{\lambda\in\mathcal{T}^c\}}\sum_{i\in I_{\lambda}}[-\lambda p'_i(\lambda)] +\ones_{\{\lambda\in\mathcal{T}\}}D_3(G,\overline{G}_*(\lambda)), \hspace{1.8em} \text{otherwise},
    \end{cases}
\end{align}
where the loss function $D_3$ is given in Section~\ref{sec:non-distinguishable}.

Given the above loss of interest, we establish the convergence rate of the MLE under the full overlap regime in the following theorem.
\begin{theorem}
\label{theorem:full_dependent}
Assume that $\lambdastar\in(0,1]$ is unknown, and let $g_{0}$ take the form in equation~\eqref{eq:new_g0} with $\bar{k}=k_0$. Then, we achieve that $V(p_{\lambda,G},p_{\lambdastar,G_*})\gtrsim D_4((\lambda,G),(\lambdastar,G_*))$ for any $(\lambda,G)\in[0,1]\times\Ocal_k(\Theta)$. This bound together with Proposition~\ref{prop:mle_estimation} indicate that 
\begin{align*}
    \mathbb{P}(D_4((\hat{\lambda}_n, \widehat{G}_n),(\lambdastar, G_*))>C_4\sqrt{\log (n)/n})\lesssim n^{-c_4},
\end{align*}
where $C_4>0$ is a constant depending on $g_0,\lambdastar$, $G_*,\Theta$, while the constant $c_4$ depends only on $\Theta$.
\end{theorem}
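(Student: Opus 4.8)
The plan is to reproduce the local--global strategy of the proofs of Theorems~\ref{theorem:distinguishable_dependent} and~\ref{theorem:partial_dependent}. By Proposition~\ref{prop:mle_estimation} it suffices to establish the Total Variation lower bound $V(p_{\lambda,G},p_{\lambdastar,G_*})\gtrsim D_4((\lambda,G),(\lambdastar,G_*))$ uniformly over $(\lambda,G)\in[0,1]\times\Gcal_{k,\xi}(\Theta)$, which I split into a local inequality (over $D_4((\lambda,G),(\lambdastar,G_*))\le\varepsilon$, letting $\varepsilon\to0$) and a global inequality (over $D_4((\lambda,G),(\lambdastar,G_*))>\varepsilon'$).

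For the local part, suppose by contradiction there exist $\lambdan\in[0,1]$ and $G_n\in\Gcal_{k,\xi}(\Theta)$ with $D_{4n}:=D_4((\lambdan,G_n),(\lambdastar,G_*))\to0$ and $V(p_{\lambdan,G_n},p_{\lambdastar,G_*})/D_{4n}\to0$. Since $D_{4n}\to0$ pins down a limit of $\lambdan$, and the definition~\eqref{eq:D_full_dependent} of $D_4$ branches on whether $\lambdan\in\mathcal{T}$ or $\lambdan\in\mathcal{T}^c$ and, in the latter case, on whether $I_{\lambdan}$ is ratio-independent, I would pass to a subsequence on which the membership pattern is constant and treat each branch as in Appendix~\ref{appendix:partial_dependent}. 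On the event $\{\lambdan\in\mathcal{T}\}$ one has $D_{4n}=D_3(G_n,\overline{G}_*(\lambdan))$ and $p_{\lambdan,G_n}-p_{\lambdastar,G_*}=\lambdan[p_{G_n}-p_{\overline{G}_*(\lambdan)}]$, reducing to a Gaussian-mixture-of-experts analysis around the atoms $\theta'_j$ of $\overline{G}_*(\lambdan)$: a Taylor expansion to order $\bar{r}(|\mathcal{B}_j|)$, with the PDE~\eqref{eq:PDE_interaction} collapsing the $\partial/\partial\sigma$ derivatives, writes $(p_{\lambdan,G_n}-p_{\lambdastar,G_*})/D_{4n}$ as a linear combination of the linearly independent family $\{X^{\alpha_1}\,\tfrac{\partial^{|\alpha_1|+\ell}f}{\partial h_1^{|\alpha_1|+\ell}}(Y|(a'_j)^{\top}X+b'_j,\sigma'_j)\bar{f}(X)\}$. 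The polynomial-system argument (invoking the definition of $\bar{r}$) shows not all coefficients vanish, while Fatou's lemma applied to $V/D_{4n}\to0$ forces all of them to vanish --- a contradiction. On $\{\lambdan\in\mathcal{T}^c,\ I_{\lambdan}\text{ ratio-independent}\}$ the same scheme runs with $\overline{G}_*(\lambdan)$ replaced by $\widetilde G_*(\lambdan)$ from~\eqref{eq:G_tilde_definition} and masses rescaled by $s(\lambdan)$; the extra summand $\sum_{i\in I_{\lambdan}}[-\lambdan p'_i(\lambdan)]$ in $D_4$ is exactly the residual mass $(\lambdastar-\lambdan)p^0_i-\lambdastar p^*_i$ for $i\in I_{\lambdan}$, which appears as the coefficient of $f(Y|(a^0_i)^{\top}X+b^0_i,\sigma^0_i)\bar{f}(X)$ in the expansion and is controlled in the same fashion. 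On $\{\lambdan\in\mathcal{T}^c,\ I_{\lambdan}\text{ not ratio-independent}\}$, $D_{4n}=\sum_{i\in I_{\lambdan}}[-\lambdan p'_i(\lambdan)]$ alone, and one argues directly that the density discrepancy cannot be $o(D_{4n})$ since the failure of ratio-independence prevents these residual masses from vanishing at a common rate.

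For the global part, the crux is to solve $p_{\lambda,G}(X,Y)=p_{\lambdastar,G_*}(X,Y)$ for almost surely $(X,Y)$ over $[0,1]\times\Gcal_{k,\xi}(\Theta)$. Using $\lambda[p_G-p_{\overline G_*(\lambda)}]=0$: for $\lambda\in\mathcal{T}$ the solutions are exactly $(\lambda,\overline G_*(\lambda))$; for $\lambda\in\mathcal{T}^c$ with $I_\lambda$ ratio-independent they are $(\lambda,\widetilde G_*(\lambda))$; and for $\lambda\in\mathcal{T}^c$ with $I_\lambda$ not ratio-independent there is no solution. One also checks that $D_4$ vanishes precisely on this solution set. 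Then a compactness--Fatou argument on a putative violating sequence $(\lambda'_n,G'_n)$ with $D_4((\lambda'_n,G'_n),(\lambdastar,G_*))>\varepsilon'$ and $V(p_{\lambda'_n,G'_n},p_{\lambdastar,G_*})\to0$ yields limits $(\lambda',G')$ that solve the identifiability equation yet satisfy $D_4((\lambda',G'),(\lambdastar,G_*))\ge\varepsilon'$, a contradiction; the boundary case $\lambda'\in\partial\mathcal{T}$ needs a separate check since there $\overline G_*(\lambda')=\widetilde G_*(\lambda')$ and the two branches of $D_4$ must be reconciled.

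The main obstacle is the global step, and within it the complete classification of solutions of the identifiability equation in the full overlap regime. Unlike the partial overlap case, $\overline G_*(\lambda)$ here has exactly $k_*$ atoms (since $\bar k=k_0$), not more than $k_*$, so it cannot be discarded by a cardinality argument; one must instead track the sign constraints defining $\mathcal{T}$, the index set $I_\lambda$, and the precise role of ratio-independence in forcing the residual masses to vanish coherently, while also handling the boundary $\partial\mathcal{T}$ and the continuity of $\lambda\mapsto\overline G_*(\lambda),\widetilde G_*(\lambda)$ there. Verifying that $D_4$ in~\eqref{eq:D_full_dependent} is a bona fide loss (vanishing only on the solution set) is the most delicate bookkeeping in the argument.
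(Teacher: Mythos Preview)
Your outline follows the same local--global architecture as the paper, with the correct case split on $\mathcal{T}$ and on ratio-independence of $I_{\lambda_n}$. However, two substantive steps that the paper carries out explicitly are missing from your proposal, and without them the local argument does not close.

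First, you write that ``$D_{4n}\to 0$ pins down a limit of $\lambda_n$,'' but in the full overlap regime $D_4$ does not contain a $|\lambda-\lambda^*|$ term, and $\lambda_n$ is in fact not identified (any $(\lambda,\overline G_*(\lambda))$ with $\lambda\in\mathcal{T}$ solves the identifiability equation). What is actually needed, and what you never establish, is that $\liminf_n\lambda_n>0$. This is essential on the branch $\lambda_n\in\mathcal{T}$, where you factor $p_{\lambda_n,G_n}-p_{\lambda^*,G_*}=\lambda_n[p_{G_n}-p_{\overline G_*(\lambda_n)}]$ and must divide by $\lambda_n$ to reduce to the mixture-of-experts analysis. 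The paper obtains this via a separate Fatou argument: if $\lambda_n\to 0$, then since $V(p_{\lambda_n,G_n},p_{\lambda^*,G_*})\to 0$ one gets $\bar p_j^*(\lambda_n)\to 0$ for all $j\in[k_*]$, which forces $p_j^*=p_j^0$ for $j\le k_0$ and $p_j^*=0$ for $j>k_0$, i.e.\ $G_*\equiv G_0$, contradicting the standing hypothesis $G_*\neq G_0$.

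Second, on the branch $\{\lambda_n\in\mathcal{T}^c,\ I_{\lambda_n}\text{ ratio-independent}\}$ the loss is $D_{4n}=s(\lambda_n)D_3(G_n,\widetilde G_*(\lambda_n))$ --- there is no ``extra summand $\sum_{i\in I_{\lambda_n}}[-\lambda_n p'_i(\lambda_n)]$'' in $D_4$ here; those residual masses $-\bar p_j^*(\lambda_n)$ appear only in the density decomposition as the term $C_n$. For the Taylor-expansion and polynomial-system step to work after dividing by $D_{4n}$, you need $s(\lambda_n)\not\to 0$: otherwise the factor $s(\lambda_n)$ in the denominator absorbs the decay and the ratios entering the system in~\eqref{definition:polynomial_equation} need not vanish. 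The paper proves $s(\lambda_n)\not\to 0$ by another Fatou argument: if $s(\lambda_n)\to 0$ then $p_j^*=0$ for $j>k_0$ and $\bar p_j^*(\lambda_n)\to 0$ for $j\notin I_{\lambda_n}$, and passing to the limit in the density difference then forces $\lambda_n\to 0$, contradicting the first step. Both of these bounded-away-from-zero facts rely on $G_*\neq G_0$, which is precisely the extra hypothesis distinguishing the full-overlap regime; your proposal never invokes it.
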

Proof of Theorem~\ref{theorem:full_dependent} is deferred to Appendix~\ref{appendix:full_dependent}. When $\hat{\lambda}_n\in\mathcal{T}$, the formulation of $D_4((\lambda, G),(\lambdastar, G_*))$ is simplified to that of $D_3(G,\overline{G}_*(\lambda))$. Therefore, the convergence behavior of the MLE in this case resembles the results in Theorem~\ref{theorem:partial_dependent}, which will not be repeated here. The difference between this theorem and its previous counterparts occurs only when $\hat{\lambda}_n\in\mathcal{T}^c$. In particular, if $I_{\hat{\lambda}_n}$ is a ratio-independent set, then the MLE $\widehat{G}_n$ converges to $\widetilde{G}_*(\hat{\lambda}_n)$ at a substantially slower rate than $\Otilde(n^{-1/2})$ as it depends on the convergence rate of $s(\hat{\lambda}_n)$ to zero. By contrast, if the set $I_{\hat{\lambda}_n}$ is not ratio-independent, then the discrepancy $D_4((\lambda, G),(\lambdastar, G_*))$ will not vanish as $n$ tends to infinity. Thus, we cannot deduce any conclusions regarding the convergence rates of the MLE in this case. An underlying reason for this phenomenon is that the terms $(\lambdastar-\hat{\lambda}_n)p^0_i-\lambdastar p^*_i$ for $i\in {I}_{\hat{\lambda}_n}$ cannot approach zero simultaneously as $I_{\hat{\lambda}_n}$ is not ratio-independent.

\subsection{Proof of Theorem~\ref{theorem:full_dependent}}
    
\label{appendix:full_dependent}
Similar to previous proofs, we need to demonstrate the following inequality:
\begin{align}
    \label{eq:full_dependent_1}
    \inf_{\lambda\in[0,1],G\in\Ocal_{k,\xi}(\Theta)} V(p_{\lambda, G},p_{\lambdastar,\Gstar})/D_4((\lambda, G),(\lambdastar,\Gstar))>0.
\end{align}
\textbf{Local inequality.} Firstly, we will derive the local version of the above inequality:
\begin{align}
    \label{eq:full_dependent_local}
    \lim_{\varepsilon\to 0}\inf_{\substack{\lambda\in[0,1],G\in\Ocal_{k,\xi}(\Theta):\\ D_4((\lambda, G),(\lambdastar,\Gstar))\leq\varepsilon}}V(p_{\lambda, G},p_{\lambdastar,\Gstar})/D_4((\lambda, G),(\lambdastar,\Gstar))>0.
\end{align}
Assume that the above claim does not hold true, then there exist a sequence of mixing measures $G_n=\sum_{i=1}^{k_n}p^n_i\delta_{(\ain,\bin,\sigmain)}\in \Ocal_{k,\xi}(\Theta)$ and a sequence of mixing proportions $\lambdan\in[0,1]$ such that
\begin{align*}
\begin{cases}
D_{4n}:=D_4((\lambda_n,G_n),(\lambdastar,\Gstar))\to 0,\\
V(p_{\lambdaGn},p_{\lambdaGstar})/D_{4n}\to 0,
\end{cases}
\end{align*}
as $n\to\infty$. Under the setting of Theorem~\ref{theorem:full_dependent}, we have $\thetastar_j=\theta^0_j$ for all $j\in[k_0]$ and
\begin{align}
    &p_{\lambdan, G_n}(X,Y)-p_{\lambdastar, G_*}(X,Y)\nonumber\\
    \label{eq:full_dependent_88}
    &=\lambdan\sum_{i=1}^{k_n}p^n_if(Y|(\ain)^{\top}X+\bin,\sigmain)\Bar{f}(X)-\sum_{j=1}^{\kstar}\Bar{p}^*_j(\lambdan)f(Y|(\aj)^{\top}X+\bj,\sj)\Bar{f}(X),
\end{align}
where $\Bar{p}^*_j(\lambdan):=\begin{cases}
    \lambdastar p^*_j+(\lambdan-\lambdastar)p^0_j,\quad j\in[k_0]\\
    \lambdastar p^*_j, \quad k_0+1\leq j\leq\kstar
\end{cases}$.

Next, we will show that $\lim\inf \lambdan$ is bounded below by some positive constant. Assume that this claim is not true, then $\lambdan\to 0$ as $n\to\infty$. Note that
\begin{align*}
V(p_{\lambda,G_n},p_{\lambda, G_*})=\dfrac{V(p_{\lambda,G_n},p_{\lambda,G_*})}{D_{4n}}\times D_{4n}\to 0.
\end{align*}
Then, by the Fatou's lemma, we get that $p_{\lambdan, G_n}(X,Y)-p_{\lambdastar, G_*}(X,Y)\to 0$ as $n\to\infty$ for almost surely $(X,Y)$. Since $\lambdan\to 0$ and the density $f(Y|(\ain)^{\top}X+\bin,\sigmain)$ can be upper bounded by a function which is independent of $n$ for almost surely $(X,Y)$ (see the proof of part (ii) of Lemma~\ref{lemma:bracketing_control} for more detail), we deduce that $$\lambdan\sum_{i=1}^{k_n}p^n_if(Y|h_1(X,\thetan_{1i}),h_2(X,\thetan_{2i}))\to 0.$$ It follows that $\sum_{j=1}^{\kstar}\Bar{p}^*_j(\lambdan)f(Y|h_1(X,\thetastar_{1j}),h_2(X,\thetastar_{2j}))\to 0$ as $n\to\infty$, which leads to the fact that $\Bar{p}^*_j(\lambdan)\to 0$ for all $j\in[\kstar]$. This means that $p^*_i=p^0_i$ when $i\in[k_0]$ and $p^*_i=0$ otherwise. Thus, we obtain $G_*\equiv G_0$, which is a contradiction to the assumption that $G_*\neq G_0$. Therefore, $\lim\inf \lambdan$ is bounded below by some positive constant.

Subsequently, we consider two main scenarios of $\lambdan$ based on the set $\mathcal{T}$ mentioned in Section~\ref{appendix:full_overlap}, i.e.
\begin{align*}
    \mathcal{T}&:=\{\lambda\in(0,1]:(\lambdastar-\lambda)p^0_i\leq\lambdastar p^*_i,\forall i\in[k_0]\}.
\end{align*}
\textbf{Case 1:} $\lambdan\in\mathcal{T}$ for infinitely $n\in\mathbb{N}$. WLOG, we assume that $\lambdan\in\mathcal{T}$ for all $n\in\mathbb{N}$. 

In this case, we have $D_{4n}=D_3(G_n,\overline{G}_*(\lambdan))$, and the difference $p_{\lambdan, G_n}(X,Y)-p_{\lambdastar, G_*}(X,Y)$ can be written as 
\begin{align}
    &p_{\lambdan, G_n}(X,Y)-p_{\lambdastar, G_*}(X,Y)=~\lambdan\Bigg\{\sum_{j=1}^{k_*}\sum_{i\in \mathcal{A}_j} p^n_if(Y|(\ain)^{\top}X+\bin,\sigmain)\nonumber\\
    &\qquad -\Big[\Big(1-\frac{\lambdastar}{\lambdan}\Big)\sum_{j=1}^{k_0}p^0_jf(Y|(a^0_j)^{\top}X+b^0_j,\sigma^0_j)+\frac{\lambdastar}{\lambdan}\sum_{j=1}^{\kstar}p^*_jf(Y|(\aj)^{\top}X+\bj,\sj)\Big]\Bigg\}\Bar{f}(X)\nonumber\\
    \label{eq:full_dependent_99}
    =&~\lambdan\Big[p_{G_n}(X,Y)-p_{\overline{G}_*(\lambdan)}(X,Y)\Big].
\end{align}
Recall that under the full overlap regime, we have $\Bar{k}=k_0$, which leads to $k\geq k_*= k_*+k_0-\Bar{k}$. Moreover, since $\lambdan\in\mathcal{T}$, we get that $\overline{G}_{*}(\lambdan)$ is a valid mixing measure. Thus, by employing arguments utilized in Case 2.2 in Appendix~\ref{appendix:partial_dependent}, we obtain the local inequality in equation~\eqref{eq:full_dependent_local} for this case.

\textbf{Case 2:} $\lambdan\not\in\mathcal{T}$ for infinitely $n\in\mathbb{N}$. WLOG, we assume that $\lambdan\not\in\mathcal{T}$ for all $n\in\mathbb{N}$. 

For each $n\in\mathbb{N}$, since $\lambdan\not\in\mathcal{T}$, there exists an index $i\in[k_0]$ such that $\lambdastar p^*_i-(\lambdastar-\lambdan)p^0_i<0$. In other words, the set $I_{\lambdan}:=\{i\in[k_0]:(\lambdastar-\lambdan)p^0_i>\lambdastar p^*_i\}$ is not empty and has at least one element. In addition, we also have that $\Bar{p}^*_j(\lambdan)<0$ for any $j\in I_{\lambdan}$ in this case.

Next, we will consider two different settings of the set $I_{\lambdan}$ as follows:

\textbf{Case 2.1:} $I_{\lambdan}$ is not ratio-independent. 

From the formulation of metric $D_3$ in equation~\eqref{eq:D_full_dependent}, we have $D_{4n}=\sum_{j\in I_{\lambdan}}[-\Bar{p}^*_j(\lambdan)]\to 0$ in this case. 
Recall that we have $-\Bar{p}^*_j(\lambdan)>0$ for all $j\in I_{\lambdan}$, then it follows that $\Bar{p}^*_j(\lambdan)\to 0$ as $n\to\infty$ for all $j\in I_{\lambdan}$. This leads to the fact that $p^*_i/p^0_i=p^*_j/p^0_j$ for all $i,j\in I_{\lambdan}$, which is a contradiction to the assumption that $I_{\lambdan}$ is not ratio-independent. Therefore, we obtain the local inequality in equation~\eqref{eq:full_dependent_local} for this case.

\textbf{Case 2.2:} $I_{\lambdan}$ is ratio-independent.

In this case, we have $D_{4n}=s(\lambdan)D_3(G_n,\widetilde{G}_*(\lambdan))$. Next, we will demonstrate that $s(\lambdan)\not\to 0$ as $n\to\infty$. Assume by contrary that $s(\lambdan)\to 0$, then $p^*_j=0$ for all $j>k_0$ and $\Bar{p}^*_j(\lambdan)=\lambdastar p^*_j+(\lambdan-\lambdastar)p^0_j\to 0$ for all $j\not\in I_{\lambdan}$. Note that
\begin{align*}
    V(p_{\lambdan, G_n},p_{\lambdastar, G_*})=\dfrac{V(p_{\lambdan, G_n},p_{\lambdastar, G_*})}{D_{4n}}\times D_{4n}\to 0.
\end{align*}
Therefore, by means of Fatou's lemma, we get that $p_{\lambdan, G_n}(X,Y)-p_{\lambdastar, G_*}(X,Y)\to 0$ when $n\to\infty$. Recall that
\begin{align*}
    &p_{\lambdan, G_n}(X,Y)-p_{\lambdastar, G_*}(X,Y)\\
    =&\sum_{j\in I_{\lambdan}}(-\Bar{p}^*_j(\lambdan))f(Y|(\aj)^{\top}X+\bj,\sj)\Bar{f}(X)+\Big[\lambdan\sum_{i=1}^Kp^n_if(Y|(\ain)^{\top}X+\bin,\sigmain)\Bar{f}(X)\\
    -&\sum_{j\in I_{\lambdan}^c}\Bar{p}^*_j(\lambdan)f(Y|(\aj)^{\top}X+\bj,\sj)\Bar{f}(X)-\sum_{j=k_0+1}^{\kstar}\Bar{p}^*_j(\lambdan)f(Y|(\aj)^{\top}X+\bj,\sj)\Bar{f}(X)\Big],
\end{align*}
we get $\Bar{p}^*_j\to 0$ for all $j\in I_{\lambdan}$ and $\lambdan\to 0$, which is a contradiction to the result that $\lim\inf \lambdan$ is bounded below by a positive constant. Thus, $s(\lambdan)\not\to 0$ as $n\to\infty$.

From the definition of $\widetilde{G}_*(\lambdan)$, we can rewrite it as $\widetilde{G}_*(\lambdan):=\sum_{j\in \mathcal{J}_{\lambdan}}\frac{\Bar{p}^*_j(\lambdan)}{s(\lambdan)}\delta_{(\aj,\bj,\sj)}$, where $$\mathcal{J}_{\lambdan}:=I_{\lambdan}^c\cup\{k_0+1,\ldots,k_*\}.$$ Next, we will use the following Voronoi cells to study the discrepancy $D_3(G_n,\widetilde{G}_*(\lambdan))$:
\begin{align*}   
    \mathcal{C}_j^n=\mathcal{C}_j(G_n)=\{i\in[k_n]:\|\theta^n_i-\theta^*_j\|\leq \|\theta^n_i-\theta^*_\ell\|, \ \forall \ell\neq j\},
\end{align*}
for any $\forall j\in\mathcal{J}_{\lambdan}$, where $\theta^n_i:=(\ain,\bin,\sigmain)$ and $\theta^*_j=(\aj,\bj,\sj)$.

As $k_n\leq k$ for all $n$, there exists a subsequence of $G_n$ such that $k_n$ does not change with $n$. Thus, by replacing $G_n$ with this subsequence, we assume that $k_n=k$ for all $n$. Additionally, $\mathcal{C}_j=\mathcal{C}_j^n$ does not change with $n$ for all $j\in[k_*]$, either. Then, we rewrite the difference $p_{\lambdan, G_n}(X,Y)-p_{\lambdastar, G_*}(X,Y)$ as follows:
\begin{align*}
    p_{\lambdan, G_n}(X,Y)&-p_{\lambdastar, G_*}(X,Y)=\sum_{j\in I_{\lambdan}}[-\Bar{p}^*_j(\lambdan)]f(Y|(\aj)^{\top}X+\bj,\sj)\Bar{f}(X)\\
    &+\sum_{j:|\mathcal{C}_j|>1}\sum_{i\in\mathcal{C}_j}\lambdan p^n_i[f(Y|(\ain)^{\top}X+\bin,\sigmain)-f(Y|(\aj)^{\top}X+\bj,\sj)]\Bar{f}(X)\\
    &+\sum_{j:|\mathcal{C}_j|=1}\sum_{i\in\mathcal{C}_j}\lambdan p^n_i[f(Y|(\ain)^{\top}X+\bin,\sigmain)-f(Y|(\aj)^{\top}X+\bj,\sj)]\Bar{f}(X)\\
    &+\sum_{j\in\mathcal{J}_{\lambdan}}\left(\sum_{i\in\mathcal{C}_j}\lambdan p^n_i-\Bar{p}^*_j(\lambdan)\right)f(Y|(\aj)^{\top}X+\bj,\sj)\Bar{f}(X)\\
    &:=C_n+A_{n,1}+A_{n,2}+B_n.
\end{align*}
For each $j\in\mathcal{J}_{\lambdan}:|\mathcal{C}_j|>1$, by applying the Taylor expansion up to order $\brcj$ as in Appendix~\ref{appendix:distinguishable_dependent}, we can rewrite $A_{n,1}$ as
\begin{align*}
    A_{n,1}=\sum_{j:|\mathcal{C}_j|>1}\sum_{|\alpha_1|=0}^{\brcj}\sum_{\ell=0}^{2(\brcj-|\alpha_1|)}E^n_{\alpha_1,\ell}(j)X^{\alpha_1}\cdot\frac{\partial^{|\alpha_1|+\ell}f}{\partial h_1^{|\alpha_1|+\ell}}(Y|(\aj)^{\top}X+\bj,\sj)\Bar{f}(X) 
    + R_5(X,Y),
\end{align*}
where $R_5(X,Y)$ is a Taylor remainder such that $R_5(X,Y)/D_{4n}$, and
\begin{align}
    \label{eq:En_definition}
    E^n_{\alpha_1,\ell}(j):=\sum_{i\in\mathcal{C}_j}\sum_{\substack{\alpha_2+2\alpha_3=\ell \\ \alpha_2+\alpha_3\geq 1-|\alpha_1|}}\frac{\lambdan p^n_i}{2^{\alpha_3}\alpha!}\cdot(\daijn)^{\alpha_1}(\dbijn)^{\alpha_2}(\dsijn)^{\alpha_3},
\end{align}
for any $j\in\mathcal{J}_{\lambdan}:|\mathcal{C}_j|>1$, $0\leq |\alpha_1|\leq \brcj$ and $0\leq \ell\leq 2(\brcj-|\alpha_1|)$.

On the other hand, by means of Taylor expansion up to the first order, we can decompose $A_{n,2}$ as
\begin{align*}
    A_{n,2}=\sum_{j:|\mathcal{C}_j|=1}\sum_{|\alpha_1|=0}^{1}\sum_{\ell=0}^{2(1-|\alpha_1|)}E^n_{\alpha_1,\ell}(j)X^{\alpha_1}\cdot\frac{\partial^{|\alpha_1|+\ell}f}{\partial h_1^{|\alpha_1|+\ell}}(Y|(\aj)^{\top}X+\bj,\sj)\Bar{f}(X)+R_6(X,Y),
\end{align*}
where $R_6(X,Y)$ is a Taylor remainder term such that $R_6(X,Y)/D_{1n}\to 0$ as $n\to\infty$, and $E_{\alpha_1,\ell}(j)$ is defined similarly as in equation~\eqref{eq:En_definition} but for $j\in\mathcal{J}_{\lambdan}:|\mathcal{C}_j|=1$, $0\leq|\alpha_1|\leq 1$ and $0\leq\ell\leq 2(\brcj-|\alpha_1|)$. Additionally, we also utilize the notation $E^n_{\alpha_1,\ell}(j)$ to denote the coefficients in $C_n$ as $E^n_{\zerod,0}(j):=-\Bar{p}^*_j(\lambdan)$ for any $j\in I_{\lambdan}$, and those in $B_n$ as
\begin{align*}
    E^n_{\zerod,0}(j):=\sum_{i\in\mathcal{C}_j}\lambdan p^n_i-\Bar{p}^*_j(\lambdan),
\end{align*}
for any $j\in\mathcal{J}_{\lambdan}$.
Therefore, $A_{n,1}$, $A_{n,2}$, $B_n$ and $C_n$ can be viewed as linear combinations of elements of the following set:
\begin{align}
\label{eq:set_H3}
\mathcal{H}_3 : = \biggr\{X^{\alpha_1}\cdot\frac{\partial^{|\alpha_1|+\ell}f}{\partial h_1^{|\alpha_1|+\ell}}(Y|(\aj)^{\top}X+\bj,\sj)\Bar{f}(X): j\in\mathcal{J}_{\lambdan}, \ 0 \leq |\alpha_1| \leq \Bar{r}(|\mathcal{C}_j|),
0\leq \ell\leq 2(\brcj-|\alpha_1|),  \biggr\}. 
\end{align}
Assume that all the coefficients in the formulations of $A_{n,1}/D_{4n}$, $A_{n,2}/D_{4n}$, $B_n/D_{4n}$ and $C_n/D_{4n}$ go to zero as $n\to\infty$. Now, we consider the following quantity:
\begin{align}
    1&=\frac{D_{4n}}{D_{4n}}=\frac{s(\lambdan)D_3(G_n,\widetilde{G}_*(\lambdan))}{D_{4n}}\nonumber\\
    &=\frac{s(\lambdan)\sum_{j\in\mathcal{J}_{\lambdan}}|\sum_{i\in\mathcal{C}_j}p^n_i-\Bar{p}^*_j(\lambdan)/s(\lambdan)|}{D_{4n}}\nonumber\\
    &+\frac{s(\lambdan)\sum_{j:|\mathcal{C}_j|=1}\sum_{i\in\mathcal{C}_j}p^n_{i}(\|\daijn\|+|\dbijn|+|\dsijn|)}{D_{4n}}\nonumber\\    &+\frac{s(\lambdan)\sum_{j:|\mathcal{C}_j|>1}\sum_{i\in\mathcal{C}_j}p^n_{i}(\|\daijn\|^{2}+|\dbijn|^{\brcj}+|\dsijn|^{\brcj/2})}{D_{4n}}
    \label{eq:full_dependent_9}
\end{align}
For $j\in\mathcal{J}_{\lambdan}:|\mathcal{C}_j|>1$, we take summation of the limits of $E_{\alpha_1,0}(j)$, where $\alpha_1\in\{2e_1,2e_2,\ldots,2e_{d}\}$ with $e_u:=(0,\ldots,0,\underbrace{1}_{\textit{u-th}},0,\ldots,0)$, and obtain that
\begin{align}
    \label{eq:full_dependent_5}
    \frac{1}{D_{4n}}\cdot\sum_{j\in\mathcal{J}_{\lambdan}:|\mathcal{C}_j|>1}\sum_{i\in\mathcal{C}_j}\lambdan p^n_i\|\daijn\|^2\to 0,
\end{align}
For $j\in\mathcal{J}_{\lambdan}$ such that $|\mathcal{C}_j|=1$, we combine the limits of $E_{\zerod,1}(j)/D_{4n}$, $E_{\zerod,2}(j)/D_{4n}$ and $E_{\alpha_1,0}(j)/D_{4n}$ for any $\alpha_1\in\{e_1,e_2,\ldots,e_d\}$, then 
\begin{align*}
    \frac{1}{D_{4n}}\cdot\sum_{j\in\mathcal{J}_{\lambdan}:|\mathcal{C}_j|=1}\sum_{i\in\mathcal{C}_j}\lambdan p^n_i\Big(\|\daijn\|_1+|\dbijn|+|\dsijn|\Big)\to 0.
\end{align*}
Due to the topological equivalence between $1$-norm and $2$-norm, we receive
\begin{align}
    \label{eq:coeff_A_n2}
    \frac{1}{D_{4n}}\cdot\sum_{j\in\mathcal{J}_{\lambdan}:|\mathcal{C}_j|=1}\sum_{i\in\mathcal{C}_j}\lambdan p^n_i\Big(\|\daijn\|+|\dbijn|+|\dsijn|\Big)\to 0.
\end{align}
Since $s(\lambdan)\not\to 0$, it follows from equations~\eqref{eq:full_dependent_5} and \eqref{eq:coeff_A_n2} that
\begin{align}
    \label{eq:limit_1}
    &\frac{s(\lambdan)}{D_{4n}}\cdot\sum_{j\in\mathcal{J}_{\lambdan}:|\mathcal{C}_j|>1}\sum_{i\in\mathcal{C}_j}\lambdan p^n_i\|\daijn\|^2+\frac{s(\lambdan)}{D_{4n}}\cdot\sum_{j\in\mathcal{J}_{\lambdan}:|\mathcal{C}_j|=1}\sum_{i\in\mathcal{C}_j}\lambdan p^n_i\Big(\|\daijn\|_1+|\dbijn|+|\dsijn|\Big)\to0.
\end{align}
By taking the summation of the limits of $|E_{\zerod,0}(j)|/D_{4n}$ for $j\in\mathcal{J}_{\lambdan}$, we get that 
\begin{align*}
\dfrac{1}{D_{4n}}\cdot{\sum_{j\in\mathcal{J}_{\lambdan}}\left|\sum_{i\in\mathcal{C}_j}\lambdan p^n_{i}-\bar{p}^*_j(\lambdan)\right|}\to 0.
\end{align*}
From the above hypothesis, we take the summation of all the coefficients in the representation of $C_n/D_{4n}$ and get that
\begin{align*}
    \frac{1}{D_{4n}}\cdot\sum_{j\in I_{\lambdan}}-\Bar{p}^*_j(\lambdan)\to 0.
\end{align*}
Then, we have
\begin{align}
    0&\leq \frac{s(\lambdan)|\sum_{i\in\mathcal{C}_j}p^n_i-\Bar{p}^*_j(\lambdan)/s(\lambdan)|}{D_{4n}}=\frac{\sum_{i\in\mathcal{J}_{\lambdan}}|s(\lambdan)\sum_{i\in\mathcal{C}_j}p^n_{i}-\Bar{p}^*_j(\lambdan)|}{D_{4n}}\nonumber\\
    &\leq\frac{\sum_{j\in\mathcal{J}_{\lambdan}}|\lambdan\sum_{i\in\mathcal{C}_j}p^n_{i}-\Bar{p}^*_j(\lambdan)|}{D_{4n}}+\left(\sum_{j\in\mathcal{J}_{\lambdan}}\sum_{i\in\mathcal{C}_j}p^n_i\right)\frac{\sum_{j\in I_{\lambdan}}-\Bar{p}^*_j(\lambdan)}{D_{4n}}\to 0,\nonumber
\end{align}
which leads to 
\begin{align}
    \label{eq:limit_2}
    \frac{1}{D_{4n}}\cdot{s(\lambdan)\left|\sum_{i\in\mathcal{C}_j}p^n_i-\Bar{p}^*_j(\lambdan)/s(\lambdan)\right|}\to0.
\end{align}
By plugging in the limits in equations~\eqref{eq:limit_1} and \eqref{eq:limit_2} into the equation\eqref{eq:full_dependent_9}, we deduce that
\begin{align*}
\frac{1}{D_{4n}}\cdot s(\lambdan)\sum_{j:|\mathcal{C}_j|>1}\sum_{i\in\mathcal{C}_j}p^n_{i}\left(|\dbijn|^{\Bar{r}(|\mathcal{C}_j|)}+|\dsijn|^{\Bar{r}(|\mathcal{C}_j|)/2}\right)\to 1.
\end{align*}
Therefore, we can find an index $j^*\in\mathcal{J}_{\lambdan}$ such that $|\mathcal{C}_j|>1$ satisfies
\begin{align*}
\frac{1}{D_{4n}}\cdot s(\lambdan)\sum_{i\in\mathcal{C}_{j^*}}p^n_{i}\left(|(\Delta b^n_{ij^*})^{(1)}|^{\Bar{r}(|\mathcal{C}_{j^*}|)}+|\Delta \sigma^n_{ij^*}|^{\Bar{r}(|\mathcal{C}_{j^*}|)/2}\right)\not\to 0.
\end{align*}
WLOG, we assume that $j^*=1$. From the hypothesis, as $E_{\zerod,\ell}(1)/D_{4n}\to 0$ as $n\to\infty$ for any $1\leq \ell\leq \brcone$, we have
\begin{align*}
   \dfrac{\sum_{i\in\mathcal{C}_{1}}p^n_{i}\sum_{\substack{\alpha_2+2\alpha_3=\ell}}\dfrac{(\dbione)^{\alpha_2}(\dsione)^{\alpha_3}}{2^{\alpha_{3}}{\alpha_2}!{\alpha_{3}}!}}{s(\lambdan)\sum_{i\in\mathcal{C}_{1}}p^n_{i}\left(|\dbione|^{\Bar{r}(|\mathcal{C}_{1}|)}+|\dsijn|^{\Bar{r}(|\mathcal{C}_{1}|)/2}\right)}\to 0,
\end{align*}
for any $1\leq \ell\leq \brcone$. Recall that $s(\lambdan)\not\to0$, then
\begin{align}\label{eq:limit_ratio}
   \dfrac{\sum_{i\in\mathcal{C}_{1}}p^n_{i}\sum_{\substack{\alpha_2+2\alpha_3=\ell}}\dfrac{(\dbione)^{\alpha_2}(\dsione)^{\alpha_3}}{2^{\alpha_{3}}{\alpha_2}!{\alpha_{3}}!}}{\sum_{i\in\mathcal{C}_{1}}p^n_{i}\left(|\dbione|^{\Bar{r}(|\mathcal{C}_{1}|)}+|\dsijn|^{\Bar{r}(|\mathcal{C}_{1}|)/2}\right)}\to 0.
\end{align}
Subsequently, we denote
\begin{align*}
    \overline{M}_n=\max\{|\dbione|,|\dsijn|^{1/2}:i\in\mathcal{C}_1\}, \quad \Bar{p}_n=\max_{i\in\mathcal{C}_{1}}p^n_{i}.
\end{align*}
Since the sequence $p^n_i/\overline{p}_n$ is bounded, we can substitute it by its subsequence which admits a non-negative limit $s_i^2=\lim_{n\to\infty}p^n_{i}/\overline{p}_n$. Furthermore, as $p^n_{i}\geq \xi>0$ for all $i\in\mathcal{C}_{1}$, at least one among the limit $s_i^2$ is equal to $1$. Similarly, let $(\dbione)/\overline{M}_n\to t_{1i}$ and $(\dsione)/(2\overline{M}_n^2)\to t_{2i}$ as $n\to\infty$ for any $i\in\mathcal{C}_{1}$. Then, at least one among $t_{1i}$ and $t_{2i}$ for $i\in\mathcal{C}_{1}$ is equal to either $1$ or $-1$.

Then, we divide both the numerator and the denominator of the ratio in equation~\eqref{eq:limit_ratio} by $\overline{p}_n\overline{M}_n^{\ell}$, and obtain the following system of polynomial equations:
\begin{align*}
    \sum_{i\in\mathcal{C}_{1}}\sum_{\substack{\alpha_2+2\alpha_{3}=\ell}}\dfrac{s_i^2~t_{1i}^{\alpha_2}~t_{2i}^{\alpha_{3}}}{{\alpha_2}!~{\alpha_{3}}!}=0, \quad \forall \ell=1,2,\ldots,\brcone.
\end{align*}
It follows from the definition of $\Bar{r}(|\mathcal{C}_j|)$ that this system of polynomial equations will not admit any non-trivial solutions $(s_i,t_{1i},t_{2i})_{i\in\mathcal{C}_j}$, which is a contradiction to the fact that $s_i>0$ for all $i\in\mathcal{C}_{1}$.

Consequently, at least one among the coefficients in the representations of $A_{n,1}/D_{4n}$, $A_{n,2}/D_{4n}$, $B_n/D_{4n}$ and $C_n/D_{4n}$ does not go to zero as $n\to\infty$. Let us denote by $m_n$ the maximum of the absolute values of those aforementioned coefficients, i.e.
\begin{align*}
m_n=\max_{\substack{j\in[k_*], 0\leq|\alpha_1|\leq\Bar{r}(|\mathcal{C}_j|),\\ 0\leq\ell\leq 2(\brcj-|\alpha_1|)}}\left\{\dfrac{|E^n_{\alpha_1,\ell}(j)|}{D_{4n}}\right\}.
\end{align*}
Additionally, we define
\begin{align*}
    E^n_{\alpha_1,\ell}(j)/m_n\to\tau_{\alpha_1,\ell}(j)
\end{align*}
as $n\to\infty$ for all $j\in[k_*]$, $0\leq |\alpha_1|\leq\Bar{r}(|\mathcal{C}_j|)$, $0\leq\ell\leq 2(\brcj-|\alpha_1|)$. Here, at least one among $\tau_{\alpha_1,\ell}(j)$ is non-zero. By applying the Fatou's lemma, we get
\begin{align*}
    0=\lim_{n\to\infty}\frac{1}{m_n}\frac{2V(p_{\lambdan, G_n},p_{\lambdastar, G_*})}{D_{4n}}\geq\int\liminf_{n\to\infty}\frac{1}{m_n}\dfrac{|p_{\lambdan, G_n}(X,Y)-p_{\lambdastar, G_*}(X,Y)|}{D_{4n}}\dint(X,Y)\geq 0.
\end{align*}
Note that
\begin{align*}
    \frac{1}{m_n}\dfrac{p_{\lambdan, G_n}(X,Y)-p_{\lambdastar, G_*}(X,Y)}{D_{4n}}\to\sum_{j,\alpha_1,\ell}\tau_{\alpha_1,\ell}(j)X^{\alpha_1}\cdot\frac{\partial^{|\alpha_1|+\ell}f}{\partial h_1^{|\alpha_1|+\ell}}(Y|(\aj)^{\top}X+\bj,\sj)\Bar{f}(X).
\end{align*}
As a result, we get
\begin{align}
\label{eq:full_dependent_11}
\sum_{j,\alpha_1,\ell}\tau_{\alpha_1,\ell}(j)X^{\alpha_1}\cdot\frac{\partial^{|\alpha_1|+\ell}f}{\partial h_1^{|\alpha_1|+\ell}}(Y|(\aj)^{\top}X+\bj,\sj)\Bar{f}(X)=0,
\end{align}
By employing similar arguments for showing the set $\mathcal{H}_2$ is linearly independent as in Appendix~\ref{appendix:partial_dependent}, we can demonstrate that $\mathcal{H}_3$ defined in equation~\eqref{eq:set_H3} is also a linearly independent set. Thus, equation~\eqref{eq:full_dependent_11} indicates that
\begin{align*}
    \sum_{j,\alpha_1,\ell}\tau_{\alpha_1,\ell}(j)X^{\alpha_1}=0,
\end{align*}
for all $j\in[k_*]$ and $0\leq|\alpha_1|\leq\Bar{r}(|\mathcal{C}_j|)$ and $0\leq\ell\leq 2(\brcj-|\alpha_1|)$. As the left hand side of the above equation is a polynomial of $X\in\mathcal{X}$, which is a bounded set of $\mathbb{R}^d$. Then, $\tau_{\alpha_1,\ell}(j)=0$ for all $j\in[k_*]$, $0\leq |\alpha_1|\leq\Bar{r}(|\mathcal{C}_j|)$ and $0\leq\ell\leq 2(\brcj-|\alpha_1|)$. This is a contradiction to the fact that at least one among $\tau_{\alpha_1,\ell}(j)$ is different from 0. Therefore, we reach the local inequality in equation~\eqref{eq:full_dependent_local}, which means that there exists a positive constant $\varepsilon_0$ such that 
\begin{align*}
\inf_{\substack{\lambda\in[0,1],G\in\Ocal_{k,\xi}(\Theta):\\ D_4((\lambda, G),(\lambdastar,\Gstar))\leq\varepsilon_0}}V(p_{\lambda, G},p_{\lambdastar,\Gstar})/D_4((\lambda, G),(\lambdastar,\Gstar))>0.    
\end{align*}
\textbf{Global inequality.} Thus, it is sufficient to demonstrate that
\begin{align}
    \inf_{\substack{\lambda\in[0,1],G\in\Ocal_{k,\xi}(\Theta):\\ D_4((\lambda, G),(\lambdastar,\Gstar))>\varepsilon_0}}V(p_{\lambda, G},p_{\lambdastar,\Gstar})/D_4((\lambda, G),(\lambdastar,\Gstar))>0.
\end{align}
Suppose that the above inequality does not hold, then there exist sequences ${\lambda}^{\prime}_{n}\in[0,1]$ and $G^{\prime}_{n}\in\Ocal_{k,\xi}(\Theta)$ such that
\begin{align*}
    \begin{cases}
        D_4((\lambda'_n,G'_n),(\lambdastar,G_*))>\varepsilon_0\\
        V(p_{{\lambda}^{\prime}_{n}, G^{\prime}_{n}},p_{\lambdastar, G_*})/D_4((\lambda'_n,G'_n),(\lambdastar,G_*))\to 0,
    \end{cases}
\end{align*}
which implies that $V(p_{{\lambda}^{\prime}_{n}, G^{\prime}_{n}},p_{\lambdastar, G_*})\to 0$ as $n\to\infty$. Note that the sets $\Theta$ and $[0,1]$ are bounded, we can find a subsequence of ${G}'_n$ and a subsequence of ${\lambda}'_n$ such that ${G}'_n\to {G}'$ and ${\lambda}'_n\to{\lambda}'$, where ${G}'\in\Ocal_{k,\xi}(\Theta)$ and ${\lambda}'\in[0,1]$. By replacing ${G}'_n$ and ${\lambda}'_n$ with their subsequences, we get that $D_4(({\lambda}',{G}'),(\lambdastar, G_*))>\varepsilon_0$. By the Fatou's lemma, we obtain that
\begin{align*}
    0=\lim_{n\to\infty}2V(p_{{\lambda}^{\prime}_{n}, G^{\prime}_{n}},p_{\lambdastar, G_*})&\geq \int\liminf_{n\to\infty}\left|p_{{\lambda}'_n,{G}'_n}(X,Y)-p_{\lambdastar, G_*}(X,Y)\right|d(X,Y)\\
    &=\int\left|p_{{\lambda}',{G}'}(X,Y)-p_{\lambdastar, G_*}(X,Y)\right|d(X,Y)\geq 0,
\end{align*}
which indicates that $p_{{\lambda}',{G}'}(X,Y)=p_{\lambdastar, G_*}(X,Y)$ for almost surely $(X,Y)$.

\textbf{Case 1:} $\lambda'_n\in\mathcal{T}$ for infinitely $n\in\mathbb{N}$. WLOG, we assume that $\lambda'_n\in\mathcal{T}$ for all $n\in\mathbb{N}$. 

In this case, $D_4((\lambda', G'),(\lambdastar, G_*))=D_3(G',\overline{G}_*(\lambda'))>\varepsilon_0$. It follows from equation~\eqref{eq:full_dependent_99} that
\begin{align*}
    0=p_{{\lambda}',{G}'}(X,Y)-p_{\lambdastar, G_*}(X,Y)=\lambda'[p_{G'}(X,Y)-p_{\overline{G}_*(\lambda')}(X,Y)],
\end{align*}
Since $\liminf \lambda'_n$ is lower bounded by a positive constant, then $\lambda'>0$. Combining this with the above result, we get that $p_{G}(X,Y)=p_{\overline{G}_*(\lambda')}(X,Y)$ for almost surely $(X,Y)$. Due to the identifiability of the Gaussian mixture of experts \cite{ho2022gaussian}, we obtain that $G'=\overline{G}_*(\lambda')$. This means that $D_3(G',\overline{G}_*(\lambda'))=0$, which is a contradiction to the fact that $D_3(G',\overline{G}_*(\lambda'))>\varepsilon_0>0$.

\textbf{Case 2:} $\lambda'_n\in\mathcal{T}^c$ for infinitely $n\in\mathbb{N}$. WLOG, we assume that $\lambda'_n\in\mathcal{T}^c$ for all $n\in\mathbb{N}$. 

\textbf{Case 2.1:} $I_{\lambda'_n}$ is not ratio-independent

In this case, $D_4((\lambda', G'),(\lambdastar, G_*))=\sum_{j\in I_{\lambda'}}-\Bar{p}^*_j(\lambda')>\varepsilon_0$. It follows from equation~\eqref{eq:full_dependent_88} that
\begin{align*}
    0&=p_{\lambda', G'}(X,Y)-p_{\lambdastar, G_*}(X,Y)=\sum_{j\in I_{\lambda'}}-\Bar{p}^*_j(\lambda')f(Y|(\aj)^{\top}X+\bj,\sj)\Bar{f}(X)\\
    &\qquad+\Big[\sum_{i=1}^{k'}\lambda' p'_if(Y|(a'_i)^{\top}X+b'_i,\sigma'_i)\Bar{f}(X)-\sum_{j\in\mathcal{J}_{\lambda'}}\Bar{p}^*_j(\lambda')f(Y|(\aj)^{\top}X+\bj,\sj)\Bar{f}(X)\Big]\\
    &=\sum_{j\in I_{\lambda'}}-\Bar{p}^*_j(\lambda')f(Y|(\aj)^{\top}X+\bj,\sj)\Bar{f}(X)+[p_{G'}(X,Y)-p_{\widetilde{G}_*(\lambda')}(X,Y)]\\
    &=\sum_{j\in I_{\lambda'}}-\Bar{p}^*_j(\lambda')f(Y|(\aj)^{\top}X+\bj,\sj)\Bar{f}(X).
\end{align*}
Recall that $-\Bar{p}^*_j(\lambda')>0$ for all $j\in I_{\lambda'}$. Thus, $\Bar{p}^*_j(\lambda')= 0$ as $n\to\infty$ for all $j\in I_{\lambda'}$. This leads to the fact that $p^*_i/p^0_i=p^*_j/p^0_j=(\lambdastar-\lambda')/\lambdastar$ for all $i,j\in I_{\lambda'}$, which is a contradiction to the fact that $I_{\lambda'}$ is not ratio-independent, which follows from the ratio-independece of $I_{\lambda'_n}$.

\textbf{Case 2.2:} $I_{\lambda'_n}$ is ratio-independent.

In this case, $D_4((\lambda', G'),(\lambdastar, G_*))=D_3(G',\widetilde{G}_*(\lambda'))$. The result $p_{{\lambda}',{G}'}(X,Y)=p_{\lambdastar, G_*}(X,Y)$ for almost surely $(X,Y)$ indicates that $G'=\widetilde{G}_*(\lambda')$. Then, we have $D_3(G',\widetilde{G}_*(\lambda'_n))=0$, which contradicts to the fact that $D_3(G',\widetilde{G}_*(\lambda'_n))>\varepsilon_0>0$.

Hence, the proof is completed.

\section{PARAMETER ESTIMATION WITH VANISHING MIXING PROPORTION}
\label{appendix:additional_results}
In this appendix, we resume the discussion about parameter estimation rates under the deviated Gaussian mixture of experts when the mixing proportion vanishes, that is, $\lambdastar=0$. For that purpose, we consider the distinguishable and non-distinguishable settings in Appendix~\ref{appendix:vanish_distinguishable} and Appendix~\ref{appendix:vanish_non_distinguishable}, respectively.
\subsection{Distinguishable Settings}
\label{appendix:vanish_distinguishable}
First of all, we explore the convergence behavior of parameter estimation under the distinguishable settings.
\begin{theorem}
    \label{theorem:distinguishable_vanishing}
  Assume that the distinguishability condition in Definition~\ref{definition:distinguishability} holds and $\lambdastar=0$. Then, the Total Variation lower bound $V(p_{\lambda,G},p_{\lambdastar,G_*})\gtrsim \lambda$ holds for any $(\lambda,G)\in[0,1]\times\mathcal{O}_k(\Theta)$. This bound together with Proposition~\ref{prop:mle_estimation} suggest that we can find a positive constant $C_{5}$ that depends only on $g_0,\lambdastar,\Theta$ such that
\begin{align*}
    \mathbb{P}(\hat{\lambda}_n>C_{5}\sqrt{\log (n)/n})\lesssim n^{-c_5},
\end{align*}
where $c_{5}$ is a positive constant depending only on $\Theta$.
\end{theorem}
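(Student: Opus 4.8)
The plan is to mirror the structure of the proof of Theorem~\ref{theorem:distinguishable_dependent}, but with the drastically simplified loss $\mathcal{D}((\lambda,G),(0,G_*)) := \lambda$. As before, by Proposition~\ref{prop:mle_estimation} it suffices to establish the Total Variation lower bound $V(p_{\lambda,G},p_{0,G_*}) \gtrsim \lambda$ for all $(\lambda,G) \in [0,1]\times\Gcal_{k,\xi}(\Theta)$, since combining this with $V(p_{\hat\lambda_n,\widehat G_n},p_{0,G_*}) = \Otilde(n^{-1/2})$ immediately yields $\hat\lambda_n = \Otilde(n^{-1/2})$ with the stated probability. Note first that $p_{0,G_*}(Y|X) = g_0(Y|X)$, so the target inequality is $V(p_{\lambda,G}, g_0\bar f) \gtrsim \lambda$. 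I would split this, as is now standard in the paper, into a local part $\lim_{\varepsilon\to 0}\inf_{\lambda\le\varepsilon, G\in\Gcal_{k,\xi}(\Theta)} V(p_{\lambda,G},p_{0,G_*})/\lambda > 0$ and a global part $\inf_{\lambda>\varepsilon, G\in\Gcal_{k,\xi}(\Theta)} V(p_{\lambda,G},p_{0,G_*})/\lambda > 0$.

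\textbf{Local inequality.} Suppose by contradiction there are sequences $\lambda_n \to 0$ and $G_n = \sum_{i} p^n_i \delta_{(a^n_i,b^n_i,\sigma^n_i)} \in \Gcal_{k,\xi}(\Theta)$ with $V(p_{\lambda_n,G_n},p_{0,G_*})/\lambda_n \to 0$. Writing the density difference as
\begin{align*}
p_{\lambda_n,G_n}(X,Y)-p_{0,G_*}(X,Y) = \lambda_n\Big[\sum_{i} p^n_i f(Y|(a^n_i)^\top X + b^n_i,\sigma^n_i) - g_0(Y|X)\Big]\bar f(X),
\end{align*}
dividing by $\lambda_n$ gives $\big[p_{G_n}(Y|X) - g_0(Y|X)\big]\bar f(X) \to 0$ in $L^1$ (after passing to the limit via Fatou's lemma, as in Step 3 of Appendix~\ref{appendix:distinguishable_dependent}). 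By compactness of $\Theta$ we may extract a subsequence along which $G_n \to G'$ for some $G'\in\Gcal_{k,\xi}(\Theta)$, so that $p_{G'}(Y|X) = g_0(Y|X)$ for almost surely $(X,Y)$. The key point is that this contradicts the distinguishability condition: taking $r_j = 1$ for all $j$ in Definition~\ref{definition:distinguishability}, the identity $\sum_i p^*_i{}' f(Y|(a^*_i{}')^\top X + b^*_i{}',\sigma^*_i{}') - g_0(Y|X) = 0$ forces all the weights $p^*_i{}'$ to vanish, which is impossible since $G'$ is a probability measure (alternatively, one notes $p_{G'}$ is distinguishable from $g_0$ only when no such algebraic dependence exists, so $g_0$ simply cannot equal a mixture of $\le k$ Gaussian experts under the distinguishability hypothesis). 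Hence the local inequality holds.

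\textbf{Global inequality.} Suppose by contradiction there are $\lambda'_n > \varepsilon$ and $G'_n \in \Gcal_{k,\xi}(\Theta)$ with $V(p_{\lambda'_n,G'_n},p_{0,G_*}) \to 0$. By compactness, pass to subsequences with $\lambda'_n \to \lambda' \ge \varepsilon > 0$ and $G'_n \to G'$; Fatou's lemma then gives $p_{\lambda',G'}(X,Y) = p_{0,G_*}(X,Y) = g_0(Y|X)\bar f(X)$ almost surely. But by Proposition~\ref{prop:identifiable}, under distinguishability the model is identifiable, so $(\lambda',G') \equiv (0,G_*)$, contradicting $\lambda' \ge \varepsilon > 0$. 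This completes the argument. The main obstacle — and the only place requiring genuine care rather than a mechanical adaptation — is pinning down precisely why the limiting identity $p_{G'}(Y|X)=g_0(Y|X)$ is incompatible with the distinguishability condition; one must check that the definition, with the right choice of vector $r$, really does preclude $g_0$ from coinciding with any mixture of at most $k$ Gaussian experts, and that the weight-positivity constraint $p^n_i \ge \xi$ is what rules out degenerate limits.
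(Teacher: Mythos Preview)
Your proposal is correct and reaches the same contradiction as the paper: extract a convergent subsequence $G_n \to \widetilde G$ by compactness of $\Theta$, apply Fatou's lemma to conclude $p_{\widetilde G}(Y|X) = g_0(Y|X)$ almost surely, and observe that this violates distinguishability.

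The one difference is structural. Since $\lambdastar=0$ gives $p_{\lambda,G}-p_{0,G_*} = \lambda\,(p_G - g_0)\bar f$, the ratio $V(p_{\lambda,G},p_{0,G_*})/\lambda$ equals $V(p_G\bar f,\, g_0\bar f)$ and does not depend on $\lambda$ at all. The paper exploits this to prove $\inf_{\lambda\in(0,1],\,G}V(p_{\lambda,G},p_{\lambdastar,G_*})/\lambda > 0$ in a single step, with no local/global split. Your local argument is therefore already the entire argument; the global part via Proposition~\ref{prop:identifiable} is correct but redundant once you notice the ratio is $\lambda$-free. Both you and the paper tacitly extend the distinguishability hypothesis from $G_*$ to the limit $\widetilde G$, which is the one point that could be stated more carefully.
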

When $\lambdastar=0$, the mixture part $p_{G_*}$ is no longer involved in the formulation of the true conditional density function $p_{\lambdastar,G_*}(Y|X)$. Moreover, since $p_{G_*}$ is distinguishable from the known function $g_0$, then we are not able to access the convergence behavior of the MLE $\widehat{G}_n$. Nevertheless, Theorem~\ref{theorem:distinguishable_vanishing} indicates that the mixing proportion estimation $\hat{\lambda}_n$ converges to $\lambdastar=0$ at a parametric rate of order $\mathcal{O}(n^{-1/2})$.
\begin{proof}[Proof of Theorem~\ref{theorem:distinguishable_vanishing}]
    From the result of Proposition~\ref{prop:mle_estimation}, it is sufficient to show that $V(p_{\hat{\lambda}_n, \widehat{G}_n},p_{\lambda^*, G_*})\gtrsim \hat{\lambda}_n$. When $\hat{\lambda}_n=0$, this problem becomes trivial. Therefore, we will consider only the case when $\hat{\lambda}_n>0$, in which the problem turns into proving that
\begin{align*}
    \inf_{\lambda\in(0,1],G\in\Ocal_{k,\xi}(\Theta)}\dfrac{V(p_{\lambda, G},p_{\lambda^* ,G_*})}{\lambda}>0.
\end{align*}
Assume that the above inequality does not hold, which implies that there exist sequences $\lambdan\in(0,1]$ and ${G}_n=\sum_{i=1}^{k_n}{p}^n_i\delta_{(\ain,\bin,\sigmain)}\in\Ocal_{k,\xi}(\Theta)$ such that $V(p_{{\lambda}_n,{G}_n},p_{\lambda^*,G_*})/{\lambda}_n\to 0$ as $n\to\infty$. Since $\Theta$ is a compact set, we can find a subsequence of $G_n$ such that $G_n\to \widetilde{G}$, where  $\widetilde{G}:=\sum_{i=1}^{\widetilde{k}}\widetilde{p}_i\delta_{(\widetilde{a}_{i},\widetilde{b}_{i},\widetilde{\sigma}_{i})}\in\Ocal_{k,\xi}(\Theta)$. By replacing $G_n$ with this subsequence and applying the Fatou's lemma with a note that $\lambdastar=0$, we get 
\begin{align*}
    \lim_{n\to\infty}\dfrac{2V(p_{{\lambda}_n,{G}_n},p_{\lambda^*,G_*})}{{\lambda}_n}&\geq\int\liminf_{n\to\infty}\left|\sum_{i=1}^{k_n}{p}^n_if(Y|(\ain)^{\top}X+\bin,\sigmain)-g_0(Y|X)\right|\Bar{f}(X)d(X,Y).
\end{align*}
It follows from the hypothesis $V(p_{{\lambda}_n,{G}_n},p_{\lambda^*,G_*})/{\lambda}_n\to 0$ that $ \sum_{i=1}^{\widetilde{k}}\widetilde{p}_if(Y|(\widetilde{a}_{i})^{\top}X+\widetilde{b}_{i},\widetilde{\sigma}_{i})-g_0(Y|X)=0$,
for almost surely $(X,Y)$. This contradicts the assumption that  $p_{\widetilde{G}}$ is distinguishable from $g_0$. Hence, we reach the conclusion of this part.
\end{proof}

\subsection{Non-distinguishable Settings}
\label{appendix:vanish_non_distinguishable}
We now draw our attention to parameter estimation rates under the non-distinguishable settings when the mixing proportion vanishes, namely when the function $g_0$ takes the following form:
\begin{align}
    \label{eq:new_new_g0}
    g_0(Y|X)=p_{G_0}(Y|X):=\sum_{j=1}^{k_0}p^0_jf(Y|(a^0_j)^{\top}X+b^0_j,\sigma^0_j),
\end{align}
where $k_0\in[k_*]$.

Since $\lambdastar=0$, the mixture part $p_{G_*}$ is not involved in the formulation of the true conditional density $p_{\lambdastar,G_*}(Y|X)$. As a result, we do not have any interaction between two functions $g_0(Y|X)$ and $p_{G_*}(Y|X)$. Therefore, it is unnecessary to divide the non-distinguishable setting into partial overlap regime and full overlap regime. Instead, we establish the parameter estimations under the general non-distinguishable settings in the following theorem:
\begin{theorem}
\label{theorem:dependent_lambda_0}
Suppose that the function $g_0$ takes the form in equation~\eqref{eq:new_new_g0} and $\lambdastar=0$.  
Then, the Total Variation lower bound $V(p_{\lambda,G},p_{\lambdastar,G_*})\gtrsim \lambda D_3(G,G_0)$ holds for any $(\lambda,G)\in[0,1]\times\mathcal{O}_k(\Theta)$. This bound together with Proposition~\ref{prop:mle_estimation} indicates that there exists a positive constants $C_{6}$ depending on $g_0,\lambdastar,\Theta$ such that
\begin{align*}
    \mathbb{P}(\hat{\lambda}_nD_3(\widehat{G}_n,G_0)>C_{6}\sqrt{\log (n)/n})\lesssim n^{-c_6},
\end{align*}
where $c_{6}$ is a constant that depends only on $\Theta$.
\end{theorem}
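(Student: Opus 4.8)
The plan is to observe that when $\lambdastar=0$ the deviated model degenerates: the mixture part $p_{G_*}$ drops out of the true density, which collapses to $g_0=p_{G_0}$, so the problem reduces to an over-fitted ordinary Gaussian mixture of experts, and then I would invoke the local/global machinery already developed for Theorem~\ref{theorem:distinguishable_dependent}.

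\textbf{Step 1: reduction to an ordinary mixture.} Since $\lambdastar=0$ and $g_0=p_{G_0}$, the true joint density is $p_{\lambdastar,G_*}(X,Y)=g_0(Y|X)\bar f(X)=p_{G_0}(X,Y)$, and for every $(\lambda,G)$ one has the exact identity
\[
p_{\lambda,G}(X,Y)-p_{\lambdastar,G_*}(X,Y)=\lambda\big[p_G(X,Y)-p_{G_0}(X,Y)\big],
\]
where $p_G(X,Y):=p_G(Y|X)\bar f(X)$. Integrating the absolute value gives $V(p_{\lambda,G},p_{\lambdastar,G_*})=\lambda\, V(p_G,p_{G_0})$. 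The target bound is vacuous for $\lambda=0$, so it remains to prove that for every $\xi>0$, $\inf_{G\in\Gcal_{k,\xi}(\Theta)} V(p_G,p_{G_0})/D_3(G,G_0)>0$, where the Voronoi cells $\mathcal{C}_j\equiv\mathcal{C}_j(G)$ entering $D_3(G,G_0)$ are generated by the $k_0$ atoms of $G_0\in\mathcal{E}_{k_0}(\Theta)$ (note $k\geq k_*\geq k_0$, so this is the over-fitted regime).

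\textbf{Step 2: local inequality.} Arguing by contradiction exactly as in Step 1--Step 3 of Appendix~\ref{appendix:distinguishable_dependent}, take $G_n\in\Gcal_{k,\xi}(\Theta)$ with $D_{3,n}:=D_3(G_n,G_0)\to0$ and $V(p_{G_n},p_{G_0})/D_{3,n}\to0$. Passing to a subsequence, I would fix $k_n\equiv k$ and the cells $\mathcal{C}_j$, then decompose $p_{G_n}-p_{G_0}$ by a first-order Taylor expansion of $f$ around $(a^0_j,b^0_j,\sigma^0_j)$ on cells with $|\mathcal{C}_j|=1$ and an order-$\bar r(|\mathcal{C}_j|)$ expansion on cells with $|\mathcal{C}_j|>1$, converting $\sigma$-derivatives into second-order $b$-derivatives via the PDE~\eqref{eq:PDE_interaction}. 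This writes $(p_{G_n}-p_{G_0})/D_{3,n}$ as a linear combination of elements of the family $\big\{X^{\alpha_1}\cdot\frac{\partial^{|\alpha_1|+\ell}f}{\partial h_1^{|\alpha_1|+\ell}}(Y|(a^0_j)^{\top}X+b^0_j,\sigma^0_j)\bar f(X)\big\}$, which is linearly independent in $(X,Y)$ (the pairs $((a^0_j)^{\top}X+b^0_j,\sigma^0_j)$ are distinct for a.e.\ $X$, and a polynomial-in-$X$ argument then kills the coefficients, as in the treatment of $\mathcal{H}_2$ in Appendix~\ref{appendix:partial_dependent}). If all coefficients converged to zero, the standard rescaling by $\overline{p}_n\overline{M}_n^{\ell}$ would produce a non-trivial solution of the system~\eqref{definition:polynomial_equation} of size $|\mathcal{C}_{j^*}|$ and degree up to $\bar r(|\mathcal{C}_{j^*}|)$ for some $j^*$, contradicting the minimality defining $\bar r$; hence some coefficient stays bounded away from $0$. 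On the other hand, Fatou's lemma applied to $V(p_{G_n},p_{G_0})/(m_nD_{3,n})\to0$ forces the normalized difference to vanish a.e., and linear independence then forces every coefficient to $0$ --- a contradiction. So the local bound holds on some ball $\{D_3(G,G_0)\leq\varepsilon'\}$.

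\textbf{Step 3: global inequality and conclusion, and the main obstacle.} For $D_3(G,G_0)>\varepsilon'$ the usual compactness argument applies: a contradicting sequence $G_n'$ has a subsequential limit $G'\in\Gcal_{k,\xi}(\Theta)$, Fatou gives $p_{G'}=p_{G_0}$ a.e., identifiability of the Gaussian mixture of experts (\cite{ho2022gaussian}) yields $G'\equiv G_0$, whence $D_3(G',G_0)=0$, contradicting $D_3(G',G_0)>\varepsilon'$. Combining the two bounds gives $V(p_G,p_{G_0})\gtrsim D_3(G,G_0)$ on $\Gcal_{k,\xi}(\Theta)$, hence $V(p_{\lambda,G},p_{\lambdastar,G_*})\gtrsim\lambda D_3(G,G_0)$; plugging this into Proposition~\ref{prop:mle_estimation} gives $\mathbb{P}\big(\hat\lambda_n D_3(\widehat G_n,G_0)>C_6\sqrt{\log n/n}\big)\lesssim n^{-c_6}$. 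The genuine content is light once the collapse $V(p_{\lambda,G},p_{\lambdastar,G_*})=\lambda V(p_G,p_{G_0})$ is recognized: the remainder is the $g_0$-free specialization of the distinguishable-case proof. The only points requiring care are (i) reading $D_3$ with its Voronoi cells generated by $G_0$ (rather than $\overline{G}_*(\lambda)$ as elsewhere), using $G_0\in\mathcal{E}_{k_0}(\Theta)$ so that $k_0$ is its exact order, and (ii) the linear-independence verification for the derivative family at the $k_0$ atoms of $G_0$, which must incorporate the covariate monomials $X^{\alpha_1}$ --- both are routine given Appendices~\ref{appendix:distinguishable_dependent} and~\ref{appendix:partial_dependent}, so I expect no new difficulty.
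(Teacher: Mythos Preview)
Your proposal is correct and follows essentially the same route as the paper: the key observation $V(p_{\lambda,G},p_{\lambdastar,G_*})=\lambda\,V(p_G,p_{G_0})$ reduces everything to the ordinary over-fitted Gaussian mixture of experts bound $V(p_G,p_{G_0})\gtrsim D_3(G,G_0)$, which is then dispatched by the local/global machinery. The only cosmetic difference is that the paper points to Case~2.2 of Appendix~\ref{appendix:partial_dependent} (where the Voronoi cells are already set up around a target measure with $k_0$-many atoms) rather than to Appendix~\ref{appendix:distinguishable_dependent}, but the underlying Taylor/PDE/Fatou argument is identical.
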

Different from the results of all previous theorems, the MLE $\widehat{G}_n$ converges to the mixing measure $G_0$ rather than $G_*$ under the loss function $D_3$ due to the disappearance of the mixture part $p_{G_*}(Y|X)$ in the conditional density $p_{\lambdastar,G_*}(Y|X)$. Moreover, the rate of that convergence depends on the vanishing rate of $\hat{\lambda}_n$, therefore, it is no better than the parametric rate of order $\mathcal{O}(n^{-1/2})$.
\begin{proof}[Proof of Theorem~\ref{theorem:dependent_lambda_0}]
    Note that the problem is trivial when $\hat{\lambda}_n=0$, therefore, we consider only the case when $\hat{\lambda}_n > 0$. From Proposition~\ref{prop:mle_estimation}, it is sufficient to show that 
    \begin{align}
        \label{eq:original_to_prove}
        \inf_{\lambda\in(0,1], G\in\Ocal_{k,\xi}(\Theta)}\dfrac{V(p_{\lambda, G},p_{\lambdastar, G_*})}{\lambda D_3(G,G_0)}>0.
    \end{align}
    Since $\lambdastar=0$, we get that 
    \begin{align*}
        p_{\lambda, G}(X,Y)-p_{\lambdastar, G_*}(X,Y)&=(1-\lambda)g_0(Y|X)\Bar{f}(X)+\lambda p_{G}(X,Y)-g_0(Y|X)\Bar{f}(X)\\
        &=\lambda~[p_{G}(X,Y)-g_0(Y|X)\Bar{f}(X)]\\
        &=\lambda~[p_{G}(X,Y)-p_{G_0}(X,Y)].
    \end{align*}
    As a result, equation~\eqref{eq:original_to_prove} becomes $\inf_{G\in\Ocal_{k,\xi}(\Theta)}{V(p_{ G},p_{ G_0})}/{ D_3(G,G_0)}>0$.

    \textbf{Local inequality}: We first prove that
    \begin{align*}
        \lim_{\varepsilon\to 0}\inf_{\substack{G\in\Ocal_{k,\xi}(\Theta):D_3(G,G_0)\leq\varepsilon}}\dfrac{V(p_{ G},p_{ G_0})}{ D_3(G,G_0)}>0.    
    \end{align*}
    Assume by contrary that the above claim is not true. Then, there exists a sequence $G_n=\sum_{i=1}^{k_n}p^n_i\delta_{(\ain,\bin,\sigmain)}\in\Ocal_{k,\xi}(\Theta)$ such that as $n\to\infty$, we have
    \begin{align*}
        \begin{cases}
            D_3(G_n,G_0)\to 0,\\
            V(p_{G_n},p_{G_0})/D_3(G_n,G_0)\to 0.
        \end{cases}
    \end{align*}
    By employing arguments (with adapted notations) used in Case 2.2 in Appendix~\ref{appendix:partial_dependent} for showing contradiction to the fact that $V(p_{G_n},p_{\overline{G}_*(\lambdan)})\to 0$ as $n\to\infty$, we also get a contradiction here. Consequently, there exists a positive constant $\varepsilon_0$ such that
    \begin{align*}
        \inf_{\substack{G\in\Ocal_{k,\xi}(\Theta):D_3(G,G_0)\leq\varepsilon_0}}\dfrac{V(p_{ G},p_{ G_0})}{ D_3(G,G_0)}>0.
    \end{align*}
    \textbf{Global inequality}: From the above result, we only need to show that 
    \begin{align*}
        \inf_{\substack{G\in\Ocal_{k,\xi}(\Theta):D_3(G,G_0)>\varepsilon_0}}\dfrac{V(p_{ G},p_{ G_0})}{ D_3(G,G_0)}>0.
    \end{align*}
    Assume that the above inequality does not hold. Then, there exists a sequence $G'_n\in\Ocal_{k,\xi}(\Theta)$ satisfying $V(p_{G'_n},p_{G_0})/D_3(G'_n,G_0)\to 0$ as $n\to\infty$, whereas $D_3(G'_n,G_0)>\varepsilon_0$ for all $n\in\mathbb{N}$. Therefore, $V(p_{G'_n},p_{G_0})\to 0$ as $n\to\infty$.
    Note that $\Theta$ is a compact set, then we can find a subsequence of $G'_n$ such that $G'_n\to G'$ for some $G'\in\Ocal_{k,\xi}(\Theta)$. By replacing the sequence $G'_n$ by that subsequence, we obtain that $D_3(G',G_0)>\varepsilon_0$ as a result of $D_3(G'_n,G_0)>\varepsilon_0$ for all $n\in\mathbb{N}$. By Fatou's lemma, we get
    \begin{align*}
        0=\lim_{n\to\infty}V(p_{G'_n},p_{G_0})&\geq \dfrac{1}{2}\int\liminf_{n\to\infty}|p_{G'_n}(X,Y)-p_{G_0}(X,Y)|d(X,Y)\\
        &=\dfrac{1}{2}\int|p_{G'}(X,Y)-p_{G_0}(X,Y)|d(X,Y)\geq 0,
    \end{align*}
    which implies that $p_{G'}(X,Y)=p_{G_0}(X,Y)$ for almost surely $(X,Y)\in\mathcal{X}\times\mathcal{Y}$. Since the Gaussian mixture of experts is identifiable (cf. Proposition 3 in \cite{ho2022gaussian}), the previous equation indicates that $G'\equiv G_0$. This contradicts to the fact that $D_3(G',G_0)\geq\varepsilon>0$.

    Hence, the proof is completed.
\end{proof}

\end{document}